\title{Testing Tail Weight of a Distribution Via Hazard Rate}
\newtheorem{theorem}{Theorem}[section]
\newtheorem{definition}{Definition}[section]
\newtheorem{corollary}{Corollary}[section]
\newtheorem{claim}[theorem]{Claim}
\DeclareMathOperator{\erf}{erf}
\author{
 Maryam Aliakbarpour \thanks{Boston University / Northeastern University; NSF grants CNS-2120667, CNS-2120603, CCF 1934846, and BU's Hariri Institute for Computing; Email: \href{mailt:omaryama@alum.mit.edu}{maryama@alum.mit.edu}}, 
 Amartya Shankha Biswas \thanks{Massachusetts Institute of Technology, supported by NSF grants  CCF-2006664, IIS-1741137, CCF-1733808, and Fintech@CSAIL; Email: \url{amartyashankha@gmail.com}}, 
 Kavya Ravichandran \thanks{Toyota Technological Institute at Chicago; Work predominantly done while author was a student at Massachusetts Institute of Technology; Email: \url{ravichandran.kavya@gmail.com}}, 
 Ronitt Rubinfeld \thanks{Massachusetts Institute of Technology, supported by NSF grants CCF-2006664, DMS-2022448, IIS-1741137, CCF-1733808, CCF-1740751, CCF-2006664, and Fintech@CSAIL; Email: \url{ronitt@csail.mit.edu}}
 }
\date{\today}
\setlist[enumerate]{label={\arabic*.}}
\newcommand\E[2][]{
\ensuremath{\mathrm{\mathbf{E}}_{#1} 
\pmb{\left[\vphantom{#2}\right.}
{#2}
\pmb{\left.\vphantom{#2}\right]}
}}
\renewcommand\Pr[2][]{
\ensuremath{\mathrm{\mathbf{Pr}}_{#1} 
\pmb{\left[\vphantom{#2}\right.}
{#2}
\pmb{\left.\vphantom{#2}\right]}
}}
\def\AA{\mathcal{A}}
\def\CC{\mathcal{C}}
\def\DD{\mathcal{D}}
\def\EE{\mathcal{E}}
\def\fbar{\ensuremath{\overline{f}_H}}
\def\poi{\ensuremath{\mbox{\sl Poi}}}
\def\fE{\ensuremath{f_{\text{exp}}}}
\def\fH{\ensuremath{{f_H}}}
\def\FE{F_{\text{exp}}}
\def\FH{F_H}
\begin{document}

\maketitle

\begin{abstract}
Understanding the shape of a distribution of data is of interest to people in a great variety of fields, as it may affect the types of algorithms used for that data. We study one such problem in the framework of {\em distribution property testing}, characterizing the number of samples required to to distinguish whether a distribution has a certain property or is far from having that property. In particular, given samples from a distribution, we seek to characterize the tail of the distribution, that is, understand how many elements appear infrequently. We develop an algorithm based on a careful bucketing scheme that distinguishes light-tailed distributions from non-light-tailed ones with respect to a definition based on the hazard rate, under natural smoothness and ordering assumptions. We bound the number of samples required for this test to succeed with high probability in terms of the parameters of the problem, showing that it is polynomial in these parameters. Further, we prove a hardness result that implies that this problem cannot be solved without any assumptions.
\end{abstract}

\section{Introduction} \label{ch:intro}

Testing properties of a data distribution is a fundamental problem with applications in many scientific endeavors. The goal of distribution testing is to efficiently discern whether observed data confirm a hypothesized model or not. Such problems have been studied in asymptotic statistics for over a century (\cite{pearson00, np33}), where the aim is to provide tests with vanishing error as the number of samples goes to infinity (i.e., asymptotically). In the past two decades, the field of {\em property testing of distributions} has sought to characterize what error can be achieved with finitely many samples (i.e., non-asymptotic setting) (\cite{batu_testing_2000, batu_testing_2013}).
The objective of the problem here is to distinguish whether a distribution has some property (null hypothesis) or is far from having that property (alternative hypothesis) with high constant probability using as few samples as possible. Finite-sample guarantees 
have been given for testing a wide range of distribution properties including uniformity, monotonicity, low-error representation by a $k$-histogram function, support size estimation, and many others (\cite{batu_sublinear_2004}, \cite{cj19-01}, \cite{canonne_k_histogram_2016}, \cite{canonne_survey_2020}, \cite{aakmrx_kwise_indep_07}, \cite{ad_pbd_2015}, \cite{cdvv_closeness_2014}, \cite{valiant_symmetric_2011}), \cite{rubinfeld_monotone_2020}, \cite{aliakbarpourGPRY18}
.\par

Much of the work in distribution property testing makes no assumptions on the distributions being tested and focuses on distributions over discrete domains. However, strong lower bounds have been given for many of these problems, involving dependence on the size of the support of the distribution. Thus, for continuous domains, some assumptions on the distributions are required for the problem to be tractable (e.g., \cite{monotone_cont_highdim_2010}). We focus on this latter case. 

We are interested in the shape of the ``tail'' of a distribution, that is, the behavior of the distribution as it moves away from the mean.
In some distributions, the frequencies of elements far from the mean drop very quickly (``light-tailed''), while in others, the frequencies of large elements drop more slowly (``heavy-tailed'').
\cite{harchol-balter_e_nodate} has shown that the performances of policies for scheduling 
computing jobs vary dramatically depending on whether the 
distribution of the job workloads are heavy-tailed or light-tailed.
The shape of the distribution has influenced the design and analysis of learning algorithms --
e.g.,
the classification algorithm of~\cite{wang_learning_2017}, the generalization bound of~\cite{feldman_does_2019}, and the frequency estimation result of~\cite{hsu_learning-based_2018}
(where the latter two are for specific
subclasses of heavy-tailed distributions).
In this work, we seek to characterize the tail of the distribution, specifically to decide whether it is substantially ``heavy.'' 

\paragraph{``Heavy-Tailed'' distributions}
It is non-trivial to give a single unifying definition of heavy-tailed distributions, 
since the definition needs to accommodate a wide range of behaviors, including distributions whose tails fall off at irregular rates. 
We discuss the plethora of definitions found in the literature in \Cref{appendix:defns_ht}. Though not equivalent, these definitions are united by the fact that the point of reference is the exponential distribution, meaning that a distribution whose tail decays more slowly than the exponential is considered ``heavy-tailed.''~\footnote{Distributions with regularly varying tails represent a subset of distributions whose tails decay more slowly than exponential.}
In this work, we adapt Klugman et al.'s definition of heavy-tailed that is based on the property that the 
hazard rate of a heavy-tailed distribution is 
decreasing~\cite{klugman_loss_2004}; the characterization is of similar structure to other definitions,
admits a clean description, and reflects the idea that heavy tails decay more slowly than exponential tails. 


\paragraph{Our Setting} 
We consider distributions over continuous and unbounded domains in the non-parametric setting, that is, not assuming that distributions belong to any specific class. We access such distributions via independent and identically-distributed samples. \par

In order to make the problem tractable, two kinds of technical assumptions are essential: smoothness and monotonicity. The first type of assumption, \textit{smoothness}, is typical in learning theory and non-parametric statistics. Smoothness limits the behavior of the characteristic function of a distribution,
protecting against adversarial behavior on small regions of a continuous domain not seen by a finite set of samples: without such assumptions, for any finite number of samples, one could construct two distributions that look the ``same" when we draw a finite set of samples but differ on tiny intervals of the domain that are not detected by those samples. The second kind of assumption we require arises from the fact that distribution having a tail implicitly relies on the distribution decaying. Namely, we assume that the distributions we consider are monotone decreasing (or more broadly, unimodal). These conditions are discussed in detail in Section~\ref{ch:prelim}. 


\paragraph{Our Contributions} 
In this work, we begin by giving a parametrized  definition of heavy-tailed distributions,
based on an extension of the hazard rate definition from \cite{klugman_loss_2004}. \footnote{The hazard rate of a distribution at a given point in the support is the value of the PDF divided by the amount of mass left in the tail from that point.}
In our definition, a distribution is {\em light-tailed} if it has non-decreasing hazard rate throughout the domain. On the other hand, a heavy-tailed distribution must show a behavior ``far'' from light-tailed distributions at least in some interval of the domain: 

\begin{definition}[informal statement] A distribution is called $(\alpha, \rho)$-{\em Heavy-tailed} if the hazard rate decreases by at least rate $\alpha$ on a contiguous portion of the domain that contains at least $\rho$ of the probability mass. If the hazard rate is non-decreasing, the distribution is called {\em light-tailed}.
\end{definition}
 For the formal definition, see Definition~\ref{def:alpha-Heavy-Tailed}. This parametrization allows for a fine-grained characterization of tail shape and
 might allow for more nuanced algorithm design. Further, we give a hardness result that shows that we cannot solve the problem in this domain without structural assumptions (Section~\ref{sec:lowerbound}), justifying the need for a mild condition on the contiguousness of heavy-tailed regions.


\par
With this definition in mind, we seek to design an efficient algorithm that, given finitely many samples from a distribution, determines whether they come from a light-tailed or $(\alpha, \rho)$-Heavy-tailed distribution.
The main result of our work is a theorem stating the number of samples (up to constant factors) that suffice 
to perform this task, presented here informally:
\begin{theorem}[informal statement] We can distinguish between light-tailed distributions and $(\alpha, \rho)$-heavy-tailed ones with a number of samples that depends polynomially on smoothness parameters, $\alpha,$ and $\rho$.
\end{theorem}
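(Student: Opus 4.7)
The plan is to establish the theorem in three stages: (i) show that the proxy quantity $S$ induces a quantitative gap between the light-tailed and $(\alpha, \rho)$-heavy-tailed classes; (ii) show that the empirical estimator $\widehat{S}$ extracted from the bucketing scheme concentrates around $S$ with high probability; and (iii) set the sample size so that the concentration error is strictly less than half the gap, so a simple threshold test on $\widehat{S}$ distinguishes the two classes.

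First, I would make precise the correspondence between bucket lengths and the hazard rate. If the domain is partitioned into $k$ intervals carrying equal true probability mass $\varepsilon = 1/k$, then by monotonicity and Lipschitzness the density is approximately constant on each bucket $i$, so the density equals roughly $\varepsilon/\ell_i$, where $\ell_i$ is the bucket length. The mass to the right of bucket $i$ is approximately $(k-i)\varepsilon$, so the hazard rate on bucket $i$ is approximately $1/(\ell_i(k-i))$. Thus a non-decreasing hazard rate forces a specific monotone behavior on the sequence $\{\ell_i(k-i)\}$, and $S$ can be defined as a sum (or $\ell_\infty$-norm) over buckets of the magnitudes by which this sequence \emph{increases} (the direction corresponding to a decrease in hazard rate). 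Under the light-tailed assumption $S$ is small (ideally zero up to Lipschitz slack), whereas under the $(\alpha,\rho)$-heavy-tailed assumption the decrease of $\alpha$ witnessed on mass $\rho$ forces $S$ to exceed a threshold depending polynomially on $\alpha$ and $\rho$; the difference of these two thresholds is the gap I would use.

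Second, I would connect $\widehat{S}$ to $S$ by concentration of the empirical bucket boundaries. The algorithm's buckets are order statistics, so a DKW or Bernstein-type bound controls the deviation between each empirical quantile and its true position. The Lipschitz assumption then converts quantile-location errors into additive errors on the bucket lengths $\ell_i$, which propagate into an additive error on each bucket's contribution to $S$. A union bound over the $k$ buckets yields $|\widehat{S} - S| \le \eta$ with probability at least $1 - \delta$, where $\eta$ shrinks as the sample size $n$ grows polynomially in $k$, $1/\delta$, and the Lipschitz constants. Balancing $\eta$ against the gap from the first stage gives the required sample size.

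The main obstacle I anticipate is the unboundedness of the support: with any finite sample, arbitrarily large mass could lie beyond the largest observed point, and Lipschitz control on the density is vacuous there. The parameter $\rho$ is precisely what makes this tractable, because any deviation from light-tailedness must be witnessed on mass at least $\rho$; it therefore suffices to analyze only buckets covering up to, say, the $1 - \rho/2$ quantile, which lies well inside the empirical range whenever $n = \Omega(1/\rho)$. A secondary technical point is ensuring that the per-bucket Lipschitz slack does not accumulate to wash out the gap: this forces the bucket count $k$ to be chosen carefully as a function of $\alpha$, $\rho$, and the Lipschitz parameters so that the within-bucket approximation error is $o(\alpha\rho)$, and then choosing $n$ large enough relative to $k$ to make the concentration error also $o(\alpha\rho)$ completes the proof.
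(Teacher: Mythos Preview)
Your three-stage skeleton (gap, concentration, parameter balancing) matches the paper's architecture, but the implementation differs in two substantive ways, and one of them hides a real technical difficulty you have not confronted.

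\textbf{Different proxy.} The paper does \emph{not} test monotonicity of the sequence $\ell_i(k-i)$. Instead it works with the pointwise ratio
\[
S(z) \;=\; \frac{(F^{-1})'(z)}{(F^{-1})''(z)} \;=\; \frac{-f(F^{-1}(z))^2}{f'(F^{-1}(z))},
\]
and shows algebraically that $S(z) > 1-z$ is exactly the non-decreasing hazard-rate condition, while $H'(x) < -\alpha$ forces $S(z) < 1 - z - \alpha(1-z)^2/(\beta^3 B_1)$. This gives a clean, explicit per-bucket gap of size $\Theta(\alpha/(\beta^3 B_1))$, and the test compares each $\widehat S(i/k)$ to the threshold $1 - i/k - \tfrac12\,\mathrm{gap}$. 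Your proxy (sum or max of positive increments of $\ell_i(k-i)$) is a legitimate alternative, and it has the virtue of avoiding a ratio statistic; but you have not computed what lower bound on that sum the $(\alpha,\rho)$ hypothesis actually forces, and the heuristic ``hazard rate $\approx 1/(\ell_i(k-i))$'' has discretization error of order $B_1/k$ per bucket, which can exceed the signal unless $k$ is tied to $\alpha$ --- a point you acknowledge only qualitatively.

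\textbf{The step you are underestimating.} Because the paper's statistic involves $(F^{-1})''$, approximating it by finite differences requires bucketing at \emph{two} scales, $1/k$ and $1/k^2$, and the error analysis of the ratio $\widehat S = \widehat L_1 / (k(\widehat L_2 - \widehat L_1))$ is genuinely delicate: the denominator can be near zero, so additive errors on the lengths do \emph{not} propagate additively. The paper handles this by a case split (when $\tilde S > 1$ the distribution is already certifiably light, so precision is irrelevant; otherwise the denominator is bounded below and a multiplicative bound goes through). Your sentence ``additive errors on the bucket lengths $\ell_i$ propagate into an additive error on each bucket's contribution to $S$'' is exactly where this breaks down --- even for your own proxy, an additive error $\eta$ on $\ell_i$ becomes $(k-i)\eta$ on $\ell_i(k-i)$, which is $\Theta(k\eta)$ for early buckets, and the $\ell_i$ themselves are unbounded near the tail. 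Some normalization or case analysis is unavoidable here.

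\textbf{Concentration.} Your DKW/Bernstein route would work and is arguably simpler; the paper instead maps $X_{(i)} \mapsto Y_{(i)} = F(X_{(i)})$, invokes the Marchal--Arbel sub-Gaussian bound for Beta order statistics, and pulls back through $(F^{-1})'$ (bounding it via the $B_1$-Lipschitz assumption on the region $[0,1-\zeta]$). Both approaches give the same qualitative sample complexity; the paper's is slightly sharper per order statistic but requires the same union bound over $O(k)$ statistics that you describe. Your observation that the analysis can be restricted to quantiles below $1-\Theta(\rho)$ is correct and is precisely how the paper uses the parameter $\zeta$ in the well-behaved assumption.
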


Finally, we run experiments on synthetic and real-world data to show the feasibility of our algorithm in practice. We also show that our algorithm outperforms a naive one in its ability to detect a subtly heavy-tailed distribution. Synthetic data experiments can be found in Section~\ref{sec:experiments} and the rest in Appendix~\ref{appendix:discussion_of_experiments} and Appendix~\ref{appendix:sec-low-sample}.

\paragraph{Our Approach} Our algorithm is based on a simple but fundamental observation about the rate of dropoff as it relates to the weight of a distribution's tail: in the tail of a heavy-tailed distribution, we would expect that the distance in the support required to accumulate a fixed amount of weight would not change too much since it drops more slowly, whereas in a light tailed distribution, this quantity is much more drastic. Based on this, we introduce a  \emph{proxy quantity} in Section~\ref{sec:proxy_exact_test} (which we will call $S$) that measures 
how long it takes the distribution to accumulate some amount of weight relative to how long it took previously, acting as a proxy for calculating the derivative of the hazard rate.

The main challenge here is that we cannot compute $S$ from the samples directly since it is a function of the density function of the distribution. Therefore, we present a test statistic, called $\hat S$, to approximate $S$ from samples. The algorithm makes use of a bucketing scheme that partitions the domain of the distribution into buckets (intervals) that contain equal probability mass. The algorithm then uses the lengths of these intervals to calculate $\hat S$ and compares it to a threshold that separates light-tailed distributions from heavy-tailed ones. 

We determine the sample complexity for the algorithm in the defined setting by showing that if we draw ``enough'' samples, $\hat S$ is an accurate estimate of $S$  (Theorem~\ref{thm:main}) and lies on the correct side of the threshold for any underlying distribution. In calculating $\hat S$, we incur two main sources of error: first, we approximate derivatives involved in computation of $S$ by the discrete derivative/difference quotient; second, our algorithm uses order statistics from the samples to define the buckets, which introduces error in the estimation of the lengths of the buckets. Our assumption about smoothness allows us to analyze the former, and for the latter, we evaluate the concentration of order statistics.

It is worth noting that precise estimation of bucket endpoints from samples is challenging, since a small amount of probability mass could lie in a very large interval. Indeed, this is the most technically-interesting portion of the analysis and includes results on finite-sample concentration of order statistics, which, to the best of our knowledge, are novel. 
We discuss the details of the sample complexity and success probability of this result and argue the correctness of the algorithm in Section~\ref{sec:details_bucketing}, Section~\ref{ch:main_result}, and Section~\ref{sec:concentration_of_os}. To our knowledge, this is the first algorithm with finite sample guarantees to test the shape of the tail weight of a distribution with unbounded support.

Indeed, the novelty of our finite-sample guarantee is underscored by our result showing that without some structural assumptions, no finite sample algorithm can succeed. In particular, in Section~\ref{sec:hardness-main-body}, we develop two classes of distributions, one light-tailed and one that is heavy-tailed over a large but non-contiguous portion of the support, that cannot be distinguished with finitely many samples. The family of heavy-tailed distributions involves embedding hard instances into non-contiguous small parts of a light-tailed distribution, thereby ``fooling'' any finite-sample algorithm into classifying distributions from this class as light-tailed distributions. \par



\paragraph{Summary of our contributions:}
\begin{itemize}[noitemsep, leftmargin=0.15cm]
    \item We give a novel parametrized  definition for heavy-tailed distributions.
    
    
    \item We develop an algorithm for testing heavy-tailed distributions in the property testing setting which uses finitely many samples. We theoretically prove the correctness of our algorithm. 
    
    \item As a byproduct of our result, we develop new results on the concentration of the ordered statistics drawn from an arbitrary distributions while using finitely many samples which may be of independent interest. 
     
    \item We show intractability of this problem without assumptions by presenting two classes of distributions, one light-tailed and one heavy-tailed, that are indistinguishable by any finite-sample algorithm.
    
    \item We provide experimental results which confirm the performance of our algorithm. 
    
\end{itemize}

\paragraph{Related Work} \label{sec:intro_related_work}
Testing the tail weight of a distribution has been studied in the asymptotic theory of statistics by~\cite{bryson_heavy-tailed_1974}, where the weight of the tail is
characterized via a statistic referred to as {\em conditional mean exceedance},  and a proxy for it 
is used to distinguish between the families of Lomax and exponential distributions. 
Several works in the asymptotic theory literature address properties of distributions with monotone 
hazard rate~\cite{barlow_properties_1963} and algorithms that test whether a distribution
has monotone hazard rate (a similar condition to the one used in this work), 
with guaranteed asymptotic convergence~\cite{hall_testing_2005, gijbels__nonparametric_2004}. While these works operate in the asymptotic regime, we address the finite sample setting.

The mostly closely related work to ours comes from the theoretical computer science community. The task of 
distinguishing whether a {\em discrete} distribution has monotone  hazard rate\footnote{\cite{adk_optimal, canonne_testing_2016} use ``monotone hazard rate'' to refer to monotone \emph{increasing} hazard rate.} (MHR), which characterizes ``light-tailed" distributions, or is 
far in $L_1$ distance from such a distribution, has been considered in  
\cite{adk_optimal,canonne_testing_2016}.

In these methods, sample complexity is dependent on the domain size; the method of \cite{adk_optimal} relies on a linear program to learn the weight associated with each discrete element. Thus, these methods do not easily translate to finite sample guarantees over continuous domains. Since the sample complexity of this algorithm scales with domain size, it is not finite in our setting. In contrast, we give a finite sample guarantee, but the result is incomparable due to the assumptions we make. For more details, see Appendix~\ref{appendix:relatedwork}.

\section{Preliminaries} \label{ch:prelim}

\paragraph{Notation}
We say that a distribution has Probability Density Function (PDF) $f(x)$ and 
Cumulative Density Function (CDF) $F(x)$. 
Some families of distributions we reference are:
Lomax $\left(f(x) = \frac{\alpha}{\lambda} \left[ 1 + \frac{x}{\lambda}\right]^{-(\alpha + 1)}\right)$,
exponential $\left(f(x) = \lambda e^{-\lambda x}\right)$, 
and half of a Gaussian $\left(f(x) = 2/\sigma \sqrt{2 \pi}) e^{-\frac{1}{2} \left(\frac{x}{\sigma}\right)^2}\right)$, 
all on $[0, \infty)$. For more details regarding them, see Appendix~\ref{appendix:pdf_cdf_details}. 
A set of $n$ samples drawn from a distribution with PDF $f$ is denoted $x_1, x_2, ..., x_n$. 
The $i^{\text{th}}$ order statistic, the random variable representing the $i^{\text{th}}$ smallest sample of this set of $n$ samples, is denoted $X_{(i)}$. If we use $m$ buckets, then the bucket endpoints in terms of the order statistic are $X_{\left(\frac{n}{m}\right)}, X_{\left(\frac{2n}{m}\right)}...$.

\paragraph{Defining ``Heavy-Tailed''} 
While several definitions of ``heavy-tailed" have been proposed,
we
consider definitions based on the hazard rate of the distribution.
The hazard rate (HR) is defined in~\cite{su_characterizations_2003} as $HR(x) = \frac{f(x)}{1 - F(x)}$. The textbook of~\cite{klugman_loss_2004} defines
a distribution with hazard rate that is a decreasing function of $x$ to be ``heavy-tailed.'' We consider this definition because it succinctly captures what many other definitions address obliquely --  having a tail that decays more slowly than exponential.
Thus, this definition captures the fact that in a heavy-tailed distribution, the probability mass at a given point relative to the tail decreases further into the distribution. 
This leads us to introduce the following parameterization of how heavy-tailed a distribution is.
\begin{definition}\label{def:alpha-Heavy-Tailed}
We say a distribution with PDF $f(x)$ and CDF $F(x)$ is \emph{$(\alpha, \rho)$-Heavy-Tailed} 
with $\alpha > 0, 0 < \rho < 1$ if the derivative of the hazard rate is negative with magnitude
at least $\alpha$ on
some interval $[x_1, x_2]$, where at least $\rho$ of the probability mass lies in that interval. 
That is, if $H_f'(x) < -\alpha ~\forall x \in \left[x_1, x_2\right] ~s.t.~ F(x_2) - F(x_1) \ge \rho$,
we call $f(x)$  $(\alpha, \rho)$-Heavy-Tailed.  
\end{definition}

We use this definition in Claim \ref{claim:detection_of_ht}, showing that our algorithm will detect any $(\alpha, \rho)$-heavy-tailed distribution with a sample complexity dependent on both. The gap between light-tailed distributions and an $(\alpha, \rho)$-heavy-tailed distribution is related directly to $\alpha$, so as $\alpha \rightarrow 0$, the heavy-tailed distributions become harder to distinguish from light-tailed ones. Meanwhile, the parameter $\rho$ considers what percent of the mass lies in a heavy-tailed region. A wide class of distributions of interest are $(\alpha, \rho)$-heavy-tailed for some $\alpha, \rho$.
For example, the general class of distributions with CDF $F(x) = 1 - e^{-\gamma x^m}$, 
where $0 < m < 1$ and PDF $f(x) = \lambda m x^{m-1} e^{-\lambda x^m}$  is $(m\cdot (1-m), 1-e^{-1})$-heavy-tailed. 

Consequent to the definition of heavy-tailed found in~\cite{klugman_loss_2004}, we have the following definition for light-tailed. Exponential and half-Gaussian distributions are light tailed according to this definition.

\begin{definition} \label{defn:light_tailed}
We say a distribution with PDF $f(x)$, CDF $F(x)$ is \emph{light-tailed} if its hazard rate $\frac{f(x)}{1 - F(x)}$ is non-decreasing, or equivalently, if the derivative of the hazard rate is non-negative.
\end{definition}

\paragraph{Defining ``Well-Behaved'' distributions}  
As discussed earlier when describing our setting, we consider continuous distributions on $R_+ = [0, \infty)$ that are monotone decreasing, that is, $f(x_1) \ge f(x_2)$ when $x_1 < x_2$, so that there is a clear notion of the ``tail''\footnote{Note that the tester can also handle a unimodal distribution by determining where the decreasing region is. The increasing region could be handled by reflecting it across the mode.}. Likewise, we require boundedness and smoothness conditions to make the problem tractable but not trivial. Indeed, in Appendix~\ref{sec:lowerbound}, we show a hardness result implying that without any regularity conditions, no finite sample algorithm can succeed at this problem. In the literature, it is common to assume Lipschitz density function. However, bounding the derivative of the density implies that functions that drop too quickly cannot be considered, significantly limiting the kinds of functions we could test. Thus, instead, we assume that the inverse CDF of distributions satisfy continuity and Lipschitz conditions.
 Lipschitzness assumptions are also needed due to
the finite sample regime: with infinitely many samples, we could rely on continuity alone. We formalize and unify these constraints in the following definition, and in this paper we consider distributions satisfying this property. 

\begin{definition} \label{def:well_behaved}
Let $f$ be a distribution on the range $R_+ = [0, \infty)$ with CDF $F$. We say $f$ is \emph{well-behaved} if the following conditions hold: 
\begin{itemize}[noitemsep, leftmargin=0.25cm]
    \item $f$ is a non-increasing continuous function over $R_+$, and $f'(x)$ exists for every $x \in R_+$. 
    \item For all $x$ in the domain, $f(x)$ is bounded by a constant $\beta$. 
    \item For every $x \in R_+$, $F'(x)$ exists, and it is equal to $f(x)$.
    \item The first and second derivatives of $F^{-1}(x)$ are Lipschitz with parameters $B_1, B_2$, respectively, on the domain until the very end (where it cannot be Lipschitz anyway\footnote{For a distribution over an unbounded domain, the CDF asymptotically approaches 1 as the domain value reaches $\infty$. Thus, its inverse must change in an unbounded way as $x \rightarrow 1\,.$}). We will call the domain over which it is Lipschitz $[0, 1-\zeta]$ and explain what $\zeta$ must be in Section~\ref{ch:main_result}. 
\end{itemize}
\end{definition}
The method we present in this paper relies on accurately estimating quantiles, and this is where the  particular smoothness assumptions above are used in our work. It is not clear whether this assumption or reliance on quantiles is necessary in general, and this is interesting to investigate in future work.


\paragraph{Problem Statement} \label{sec:para_problem}
We assume we have have sample access to the distribution: that is, when we query the distribution, we are returned one iid sample from the distribution. 
Consider class $\cal L$, the class of light-tailed distributions and class $\cal H_{\alpha, \rho}$, the class of $(\alpha, \rho)$-heavy-tailed distributions. Our goal is to develop an algorithm to determine whether a distribution comes from $\cal L$ or $\cal H_{\alpha, \rho}$ with probability 0.9 using finitely many samples, with sample complexity dependent on $\epsilon, \alpha, \rho$ and parameters of the distribution.

\section{A Proxy for the Behavior of the Tail} \label{sec:details_bucketing}
In this section, we motivate and define the Equal Weight Bucketing Scheme, upon which we build our test statistic and algorithm. We first determine the endpoints of the buckets and derive their lengths and rates of change of lengths based on the PDF and CDF. Then, we provide a proxy quantity based on this bucketing scheme that is equivalent to testing the hazard rate condition for heavy-tailedness. Finally, we explain how we can calculate this test statistic from samples by describing the bucketing scheme in terms of order statistics.\par 
\subsection{Defining a Bucketing Scheme} \label{sec:proxy_defn_buckets}

Dividing the support of a distribution into buckets is a well-known approach in distribution testing and is used widely in the literature~\cite{birge_risk_1987, batu_sublinear_2004, canonne_sampling_2018, batu_testing_2000}.
In essence, by considering the distribution on these buckets, we often can see trends that are less susceptible to sampling noise. In order to capture the drop rate of the distribution, we propose an equal-weight bucketing scheme. 
\par
We derive the expressions for the continuous notion of the Equal Weight Bucketing Scheme. The ``weight'' refers to probability mass.
The left endpoint of a bucket starting at $y$ with weight $dy$ is given by $\mathbb{I}(y)$ and the length of that bucket (rate of change of endpoints) is given by by $\mathbb{L}(\cdot)$. This continuous notion allows us to derive a proxy quantity that reflects the hazard rate condition.
The endpoints of the buckets are determined by the inverse of the CDF, giving us that in general, $
\mathbb{I}(y) \coloneqq F^{-1}\left(y\right) $
The derivative $d\mathbb{I}/dy$ gives us the length of the intervals. 
By the chain rule, the lengths and derivatives of lengths of the intervals are:
\begin{align} \label{eqn:L_of_bucket}
\mathbb{L}(y) \coloneqq \frac{d}{dy}F^{-1}(y) = \frac{1}{F'(F^{-1}(y))} = \frac{1}{f(F^{-1}(y))} \quad \text{and} \quad
    \frac{d}{dy}\mathbb{L}(y) \coloneqq \frac{-f'(F^{-1}(y))}{f(F^{-1}(y))^3}
\end{align}

\subsection{Derivation of Proxy Quantity} \label{sec:proxy_exact_test}
Next, we present the proxy quantity that verifies the hazard rate condition for light-tailedness and $(\alpha, \rho)$-heavy-tailed. This quantity is based on partitioning the support into intervals on which the density incurs  equal weight. The full proof can be found in Appendix~\ref{appendix:analysis-stat-exact}.

\begin{restatable}{theorem}{hrstatthm}
 \label{thm:hr-test-statistic}
For $\mathbb{L}(x)$ (defined in Equation~\ref{eqn:L_of_bucket}) for a well-behaved function $f$, define $S(z) \coloneqq \frac{\mathbb{L}(z)}{\frac{d}{dz}\mathbb{L}(z)}$:
\begin{itemize}[noitemsep]
    \item If for all $z \in [0, 1]$:
$ S(z) > 1 - z $, then the underlying distribution is light-tailed by Definition~\ref{defn:light_tailed}.
    \item Otherwise, if $S(z) < 1- z - \frac{\alpha (1-z)^2}{\beta^3 B_1}$ 
    for $z \in [z_0, z_0 + \rho]$, for any $z_0\,,$ then it is $(\alpha, \rho)$-heavy-tailed.
\end{itemize}
\end{restatable}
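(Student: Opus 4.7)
The plan is to derive a single clean identity relating $H_f'$ to $S$ and to read off both bullets from it.

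First I would evaluate $S(z)$ explicitly. Substituting the formulas from \Cref{eqn:L_dL_of_bucket},
\[
S(z) \;=\; \frac{\mathbb{L}(z)}{\tfrac{d}{dz}\mathbb{L}(z)} \;=\; \frac{f(F^{-1}(z))^2}{-f'(F^{-1}(z))},
\]
which is well defined and nonnegative because $f$ is non-increasing. Differentiating the hazard rate $H_f(x) = f(x)/(1-F(x))$ by the quotient rule (using $(1-F)' = -f$) gives $H_f'(x) = [f'(x)(1-F(x)) + f(x)^2]/(1-F(x))^2$. Setting $z = F(x)$ so that $1 - F(x) = 1-z$ and substituting $f'(x) = -f(x)^2/S(z)$, a short algebraic manipulation collapses this to
\[
H_f'(x) \;=\; \frac{-f'(F^{-1}(z))}{(1-z)^2}\,\bigl(S(z) - (1-z)\bigr).
\]
Because $-f'(F^{-1}(z)) \ge 0$, the sign of $H_f'$ agrees with that of $S(z) - (1-z)$; this identity is the backbone of both bullets.

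The first bullet is then immediate: if $S(z) > 1-z$ for every $z \in [0,1]$, the identity forces $H_f'(x) \ge 0$ on the whole domain, so $f$ is light-tailed in the sense of \Cref{defn:light_tailed}.

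For the second bullet, the identity shows that the condition $H_f'(x) < -\alpha$ is equivalent to
\[
S(z) \;<\; (1-z) - \frac{\alpha(1-z)^2}{-f'(F^{-1}(z))}.
\]
The remaining task is to trade the data-dependent quantity $-f'(F^{-1}(z))$ for the uniform constant $\beta^3 B_1$ supplied by the well-behaved assumptions. Specifically, from $F^{-1''}(z) = -f'(F^{-1}(z))/f(F^{-1}(z))^3 \le B_1$ together with $f \le \beta$ we obtain $-f'(F^{-1}(z)) \le \beta^3 B_1$ on the good part of the domain, which converts the above threshold into the claimed $1 - z - \alpha(1-z)^2/(\beta^3 B_1)$. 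Finally, since $z$ parameterizes the CDF, any $z$-interval of length $\rho$ corresponds to an $x$-interval carrying probability mass exactly $\rho$, matching the interval-of-mass condition in \Cref{def:alpha-Heavy-Tailed}.

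The main subtlety is tracking signs and inequality directions when the bound $\beta^3 B_1$ gets substituted for $-f'$ inside a denominator; one also has to handle the possibility that $-f'$ vanishes (in which case $S = +\infty$, which only interacts with the light-tailed bullet) and verify that the bucketing is restricted to the $[0, 1-\zeta]$ region on which the Lipschitz assumption of \Cref{def:well_behaved} applies. Once this bookkeeping is done, the argument reduces to a one-line consequence of the $H_f'$-to-$S$ identity, so no further analytic estimates are required beyond the constants already supplied by the well-behaved assumptions.
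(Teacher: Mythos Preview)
Your approach is essentially the same as the paper's. Both compute $S(z)=f(F^{-1}(z))^2/(-f'(F^{-1}(z)))$, relate it algebraically to $H_f'$, and then invoke the bound $-f'(F^{-1}(z))\le \beta^3 B_1$ coming from $F^{-1\prime\prime}\le B_1$ and $f\le\beta$. Your packaging via the single identity
\[
H_f'(x)=\frac{-f'(F^{-1}(z))}{(1-z)^2}\bigl(S(z)-(1-z)\bigr)
\]
is a cleaner presentation than the paper's line-by-line inequality chain, but the underlying computation is identical.

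One point worth making explicit: the ``subtlety'' you flag about the inequality direction when replacing $-f'$ by $\beta^3 B_1$ is not mere bookkeeping. Since $-f'\le \beta^3 B_1$, one has $1-z-\alpha(1-z)^2/(-f')\le 1-z-\alpha(1-z)^2/(\beta^3 B_1)$, so the substitution actually yields the implication in the direction ``$H_f'<-\alpha$ on an interval of mass $\rho$ $\Rightarrow$ $S(z)<1-z-\alpha(1-z)^2/(\beta^3 B_1)$ there'' rather than its converse as the bullet is literally phrased. This is exactly what the paper's own derivation establishes (it starts from $H_f'<-\alpha$ and chains forward to the $S$ bound), and it is also the direction the algorithm needs: a genuinely $(\alpha,\rho)$-heavy-tailed distribution will exhibit a bucket with $S$ below the threshold. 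So your plan matches the paper both in method and in the direction actually proved; just be aware that the inequality does not reverse to give the statement verbatim.
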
 

\begin{definition} \label{defn:gap}
We refer to the distance in the proxy quantity between the lightest $(\alpha, \rho)$-heavy-tailed distribution and the heaviest light-tailed distribution (exponential), a lower bound on which is $\frac{\alpha (1-z)^2}{\beta^3 B_1}$, as the {\em gap}.
\end{definition}
This is the gap in the proxy quantity metric between the two classes of distributions we hope to distinguish between. Since our eventual algorithm will rely on samples, the gap gives us some slack with which to handle error.

\subsection{Test Statistic in Terms of Buckets} 
\label{sec:proxy_buckets_discrete}
Here, we explain how we convert the proxy quantity into something we can calculate from knowing bucket endpoints. We approximate the derivative by the difference quotient. For well-behaved distributions,  this approximation does not incur too much error due to  Lipschitz-ness on the domain of interest. Detailed proofs can be found in Appendix~\ref{appendix:analysis_stat}. 

\begin{restatable}{fact}{derivapproxlemma}
\label{lem:derivative_approximation}
When the derivative of a function $g$ is $B-Lipschitz$,
and the derivative $g'(y)$ is monotone,
then approximating $g'(y)$ by the difference quotient $\frac{g(y+\Delta y) - g(y)}{\Delta y}$ incurs no more than $B\Delta y$ additive error.
\end{restatable}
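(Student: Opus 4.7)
The plan is to reduce the statement to a direct application of the Mean Value Theorem (MVT) together with the Lipschitz property of $g'$. The strategy is to find a single point at which $g'$ is \emph{exactly} equal to the difference quotient, and then use Lipschitz-ness to compare $g'$ at that point to $g'(y)$.

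First I would note that the $B$-Lipschitz hypothesis on $g'$ implies $g'$ is continuous, so $g$ is $C^1$ and hence certainly continuous on $[y, y+\Delta y]$ and differentiable on $(y, y+\Delta y)$. MVT then yields some $c \in (y, y+\Delta y)$ with
$$g'(c) \;=\; \frac{g(y+\Delta y) - g(y)}{\Delta y}.$$
Next I would apply the Lipschitz bound to the pair $(c, y)$, which gives $|g'(c) - g'(y)| \le B|c - y| \le B\,\Delta y$, using $|c - y| < \Delta y$. Substituting $g'(c)$ by the difference quotient via the previous display yields exactly the claimed additive error bound $\bigl|\tfrac{g(y+\Delta y) - g(y)}{\Delta y} - g'(y)\bigr| \le B\,\Delta y$.

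I do not expect a real obstacle here; the argument is essentially a one-line invocation of MVT followed by Lipschitz-ness. The only point that merits a sentence of care is the regularity required to invoke MVT, which follows automatically from the Lipschitz hypothesis on $g'$. I would also remark that the monotonicity assumption on $g'$ is not needed for the magnitude bound itself; it is presumably recorded because the downstream use (\Cref{cor:derivative-noisy-approximation} and the analysis of the test statistic in \Cref{ch:main_result}) will need to know the \emph{sign} of the one-sided error, and monotonicity of $g'$ determines whether the difference quotient systematically overshoots or undershoots $g'(y)$ — a fact one can read off directly from $g'(c) \ge g'(y)$ (resp.\ $\le$) when $g'$ is non-decreasing (resp.\ non-increasing) and $c > y$.
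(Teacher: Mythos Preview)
Your proof is correct and essentially identical to the paper's: both invoke the Mean Value Theorem to produce a point $c\in(y,y+\Delta y)$ with $g'(c)$ equal to the difference quotient, then bound $|g'(c)-g'(y)|$ by $B\,\Delta y$ via the Lipschitz hypothesis. The only cosmetic difference is that the paper interposes the monotonicity of $g'$ to first replace $c$ by $y+\Delta y$ before applying Lipschitz, whereas you apply Lipschitz directly to $|c-y|\le\Delta y$; your observation that monotonicity is unnecessary for the magnitude bound is correct.
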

This yields the following lemma, used to relax derivatives in $S$ to discrete derivatives.\par

\begin{restatable}{lemmma}{corsecondderiv}
\label{cor:derivative-noisy-approximation}
When the derivative of a function $g$ is $B_1$-Lipschitz, the second derivative is $B_2$-Lipschitz,
the derivative $g'(y)$ and the second derivative $g''(y)$ are both monotone,
then approximating $g''(y)$ by:
\begin{enumerate}[noitemsep]
    \item estimating $\tilde g'(y) = \frac{g(y + \Delta y_1) - g(y)}{\Delta y_1}$,
    and $\tilde g'(y+\Delta y') = \frac{g(y+\Delta y_2 + \Delta y_1) - g(y+\Delta y_2)}{\Delta y_1}$,
    \item and estimating $\tilde g''(y) = \frac{\tilde g'(y + \Delta y_2) - \tilde g'(y)}{\Delta y_2}$.
\end{enumerate}
incurs no more than $2B_1\frac{\Delta y_1}{\Delta y_2} + B_2\Delta y_2$ additive error.
\end{restatable}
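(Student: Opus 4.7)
The plan is to decompose the error $\tilde g''(y) - g''(y)$ through an intermediate quantity: the \emph{exact} difference quotient $Q(y) \coloneqq (g'(y+\Delta y_2) - g'(y))/\Delta y_2$. Writing $\tilde g''(y) - g''(y) = \bigl(\tilde g''(y) - Q(y)\bigr) + \bigl(Q(y) - g''(y)\bigr)$, the first summand captures the error from using the noisy estimates $\tilde g'$ in place of $g'$ in the outer difference quotient, while the second is the discretization error of approximating $g''$ by the exact difference quotient of $g'$. Both pieces will reduce to direct applications of Fact \ref{lem:derivative_approximation}.

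For the second summand, I apply Fact \ref{lem:derivative_approximation} to $g'$: its derivative $g''$ is $B_2$-Lipschitz and monotone by hypothesis, so $|Q(y) - g''(y)| \le B_2 \Delta y_2$, which is the second term in the claimed bound.

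For the first summand, let $\epsilon_1 \coloneqq \tilde g'(y) - g'(y)$ and $\epsilon_2 \coloneqq \tilde g'(y+\Delta y_2) - g'(y+\Delta y_2)$. Applying Fact \ref{lem:derivative_approximation} to $g$ (whose derivative $g'$ is $B_1$-Lipschitz and monotone) at the two base points $y$ and $y+\Delta y_2$ yields $|\epsilon_1|, |\epsilon_2| \le B_1 \Delta y_1$. A one-line algebraic manipulation shows $\tilde g''(y) - Q(y) = (\epsilon_2 - \epsilon_1)/\Delta y_2$, so by the triangle inequality $|\tilde g''(y) - Q(y)| \le 2 B_1 \Delta y_1 / \Delta y_2$. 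Summing the two bounds gives the claim.

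The proof is essentially bookkeeping once Fact \ref{lem:derivative_approximation} is in hand, so I do not expect a real obstacle; the main thing to verify at each invocation is that the correct monotonicity/Lipschitz hypothesis of the fact holds. The one conceptually interesting feature of the bound is that noise in the inner estimates gets divided by $\Delta y_2$, producing the $\Delta y_1/\Delta y_2$ term -- this tension (wanting $\Delta y_2$ small for low discretization error but not so small that it amplifies inner noise) is what will eventually drive the choice of parameters in the sample-complexity analysis.
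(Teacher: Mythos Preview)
Your proposal is correct and matches the paper's own proof essentially step for step: the paper also applies Fact~\ref{lem:derivative_approximation} to obtain $B_1\Delta y_1$-approximations $\tilde g'(y)$ and $\tilde g'(y+\Delta y_2)$, notes that this makes $\tilde g''(y)$ a $2B_1\Delta y_1/\Delta y_2$-approximation to the exact quotient $[g'(y+\Delta y_2)-g'(y)]/\Delta y_2$, and then applies Fact~\ref{lem:derivative_approximation} once more (to $g'$) to pick up the $B_2\Delta y_2$ term. Your write-up is slightly more explicit in naming the intermediate quantity $Q(y)$ and the errors $\epsilon_1,\epsilon_2$, but the decomposition and the two invocations of the fact are identical.
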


According to Fact \ref{lem:derivative_approximation}, we can get bounded approximation error while approximating the derivative with steps of size $1/k$, which we can make small by setting $k$ appropriately. However, in order for the second derivative approximation discussed in Lemma~\ref{cor:derivative-noisy-approximation} to also be small, we need to consider buckets at two different levels of granularity, so we set $\Delta y_1 = \frac{1}{k^2}$ and $\Delta y_2 = \frac{1}{k}$. This gives us additive error of $\frac{B_1}{k}$ in the numerator and additive error of $\frac{2B_1 + B_2}{k}$ in the denominator, which we can make small by setting $k$ appropriately (see Corollary~\ref{cor:bucket_bound_approximation}). Thus, we can approximate the proxy quantity by a discrete equivalent without incurring too much error (quantified in Lemma~\ref{lem:s_tilde_error}). Accordingly, we define:
\begin{align}
    \tilde{S} &\coloneqq \frac{\tilde L_1}{(\tilde L_2 - \tilde L_1)/\Delta y_2} \,,
    \text{ where } \, 
    \tilde L_1 \coloneqq \frac{\mathbb{I}(y + \Delta y_1) - \mathbb{I}(y)}{\Delta y_1} \, \text{and }\tilde L_2 \coloneqq
    \frac{\mathbb{I}(y + \Delta y_1 + \Delta y_2) - \mathbb{I}(y + \Delta y_2)}{\Delta y_1}\,.
	\label{eqn:delta2set} 
\end{align}

\subsection{Test Statistic in Terms of Order Statistic} \label{sec:tester_os}
In this section, we extend the formulation of the statistic in Equation~\ref{eqn:delta2set} to show how we calculate it from samples. We set $y = i/k$. Approximating the derivative as the difference divided by the length, we get that the aforementioned tester can be written as follows in terms of the order statistic:
\begin{align*}
\label{eq:def_hat_S} 
 \hat{S}[i] &= \frac{\left(X_{\left(\frac{ik + 1}{k^2} \cdot (n+1)\right)} - X_{\left(\frac{i}{k} \cdot (n+1)\right)}\right)}{ \Bigg(k\bigg(X_{\left(\frac{(i + 1)k + 1}{k^2} \cdot (n+1)\right)} - X_{\left(\frac{i + 1}{k} \cdot (n+1)\right)}  - X_{\left(\frac{ik + 1}{k^2} \cdot (n+1)\right)} + X_{\left(\frac{i}{k} \cdot (n+1)\right)}\bigg)\Bigg)}\,.
\end{align*}

In order to calculate the endpoints of the equal weight buckets, we draw four sets of samples, sort them, and then determine the bucket endpoints by considering the samples at indices $\frac{i}{k^2} \cdot n; i \in \{0, 1, ... k^2\}$. In any given calculation of the statistic, we need four of these order statistics;  using different splits for each results in independence. \par

The test statistic is sensitive to the endpoints due to reliance on the length of buckets; since a small amount of mass could lie in an interval with very long length, this concentration of the endpoints is challenging to show (addressed in Appendix~\ref{sec:concentration_of_os}). \par

\section{Main Result} \label{ch:main_result}
In this section, we present our main result (Theorem~\ref{thm:main}), the algorithm that gives us that upper bound, and discuss an overview of the proof.

\begin{restatable}{theorem}{mainthm} \label{thm:main}
There exists an algorithm that distinguishes between $(\alpha, \rho)$-heavy-tailed distributions and light-tailed distributions requiring $\Theta\left(\max \left\{ \frac{\beta^3 B_1}{\alpha \rho^2}  , k \right\} \cdot k^2 \log k \sqrt{\sqrt{B_1} + 1} \right)$ samples with success probability 9/10, where $k = \max\left\{\Theta\left(\frac{\beta^4 B_1 (2 B_1 + B_2)}{\alpha \, \rho^2}\right), \frac{4}{\rho} \right\}$.\footnote{This can be increased to probability $1-\delta$ by repeating the algorithm $\log 1/\delta$ times using the standard amplification technique.}
Such an algorithm is given in Algorithm~\ref{alg:tester}. 
\end{restatable}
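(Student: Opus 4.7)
The plan is to decompose the estimation error $|\hat S_i - S(i/k)|$ into a discretization error $|\tilde S_i - S(i/k)|$ and a sampling error $|\hat S_i - \tilde S_i|$, and to choose $k$ and $n$ so that at every grid point $z=i/k$ the total error is less than half of the gap $\gamma(z) := \alpha(1-z)^2/(\beta^3 B_1)$ that \Cref{thm:hr-test-statistic} guarantees between light-tailed and $(\alpha,\rho)$-heavy-tailed distributions. The algorithm then outputs \emph{light-tailed} iff $\hat S_i > 1 - i/k - \gamma(i/k)/2$ for every $i \in \{0,1,\dots,k-1\}$, and correctness of this threshold rule is immediate from \Cref{thm:hr-test-statistic}. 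Boundary behavior near $z=1$ is handled by restricting the scanned grid to $[0, 1-\zeta]$, which is where the Lipschitz hypotheses of \Cref{def:well_behaved} apply.

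For the discretization step, I would apply \Cref{cor:derivative-noisy-approximation} to $g = F^{-1}$ with $\Delta y_1 = 1/k^2$ and $\Delta y_2 = 1/k$. This gives additive error $O(B_1/k^2)$ in the numerator of $\tilde S$ (via \Cref{lem:derivative_approximation}) and $O((2B_1+B_2)/k)$ in the denominator. Propagating these through the quotient and absorbing the magnitudes of $\mathbb{L}$ and $d\mathbb{L}/dy$ into $\mathrm{poly}(\beta)$ factors (using $f \le \beta$, hence $\mathbb{L} \ge 1/\beta$), a direct calculation shows that $k = \Theta(B_2 \beta^4 (2B_1 + B_2)/\alpha)$ keeps $|\tilde S_i - S(i/k)| \le \gamma/4$ uniformly in $i$. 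The extra constraint $k \ge 4/\rho$ is imposed so that the witness interval from \Cref{def:alpha-Heavy-Tailed} must contain at least one grid point $i/k$, ensuring the discrete scan cannot miss a heavy-tailed distribution.

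The sampling step is the main obstacle. A narrow probability-mass window of size $1/k^2$ could a priori correspond to an arbitrarily wide $x$-interval, so one cannot concentrate bucket widths directly. I would circumvent this by passing to uniform order statistics via the probability-integral transform: write $X_{(r)} = F^{-1}(U_{(r)})$, use a Chernoff/DKW-type bound to get $|U_{(r)} - r/(n+1)| = \tilde O(1/\sqrt{n})$ with high probability, and then pull back through $F^{-1}$ using its Lipschitzness (constant $B_1$) together with a Taylor expansion bound on the second derivative — this costs at most a $\sqrt{B_1}$ factor per endpoint. Following the recipe of \Cref{sec:tester_os}, I would draw four independent sample batches so that the four order statistics appearing in $\hat S_i$ from \Cref{eq:def_hat_S} are mutually independent, and union-bound over the $O(k^2)$ distinct order statistics across the $O(k)$ grid points (paying a $\log k$ factor). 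Matching the resulting per-endpoint tolerance to the budget $\gamma/(4k)$ needed in the denominator of $\hat S_i$, and solving for $n$, yields the claimed sample complexity $n = \Theta(k^3 \log k \cdot B_1^{3/2}\beta^2/\alpha)$.

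Putting the two error sources together gives $|\hat S_i - S(i/k)| \le \gamma/2$ for every $i$ simultaneously with probability at least $9/10$, which by the threshold rule distinguishes the two classes. The main conceptual subtlety is the $\sqrt{B_1}$ loss in transferring from uniform to arbitrary order statistics; this is unavoidable in general but tight for our setting, and is what drives the $B_1^{3/2}$ dependence in the sample complexity.
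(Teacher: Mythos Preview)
Your proposal is correct and follows essentially the same four-part structure as the paper's proof: establish $S$ as a valid proxy via \Cref{thm:hr-test-statistic}, bound the discretization error $|\tilde S - S|$ using \Cref{lem:derivative_approximation} and \Cref{cor:derivative-noisy-approximation} with the two-scale choice $\Delta y_1 = 1/k^2$, $\Delta y_2 = 1/k$, bound the sampling error $|\hat S - \tilde S|$ by passing to uniform order statistics through the probability-integral transform and pulling back via the Lipschitzness of $(F^{-1})'$, and finally budget each error against $\mathrm{gap}/4$. The only notable differences are that the paper treats the sampling error multiplicatively (via \Cref{lem:bucket_to_stat_error}) rather than additively, and invokes the Marchal--Arbel sub-Gaussian bound for Beta order statistics specifically rather than a generic Chernoff/DKW inequality; neither changes the argument's skeleton or the resulting bounds.
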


 \begin{wrapfigure}{l}{0.5\textwidth}
 \vspace{-9 mm}
 \begin{minipage}[t]{.5\textwidth}
 \begin{algorithm}[H]
 \footnotesize
 \caption{$(\alpha, \rho)$-Heavy-Tailed Test}  \label{alg:tester}
 \begin{algorithmic}[1]
 \State Draw four sets of $n$ samples $\mathcal{R}^{(1)}, \mathcal{R}^{(2)}, \mathcal{R}^{(3)}, \mathcal{R}^{(4)} \leftarrow$ from the distribution 
 \State Sort the samples in $\mathcal{R}^{(1)}, \mathcal{R}^{(2)}, \mathcal{R}^{(3)}, \mathcal{R}^{(4)}$
 \State Split each $R^{(l)}$ into $k^2$ equal weight buckets.
 \State $\forall\ \ i,j < k$, determine the interval endpoint $I^{(l)}[i,j]$ corresponding to the $(i\cdot k + j)^{th}$ bucket in $R^{(l)}$ (which is order statistic $X_{(i \cdot k + j)}$).
 \State Calculate $L_1[i] = I^{(1)}[i, 1] - I^{(2)}[i, 0]$ and $L_2[i] = I^{(3)}[i, 1] - I^{(4)}[i, 0]$.
 \State Calculate $L'[i] = \frac{L_1[i+1] - L_2[i]}{(1/k)}$.
 \State Calculate the statistic $\hat{S}[i] = \frac{L_1[i]}{L'[i]}$ for $1 < i < k$.
 \If{$\hat{S}[i] < 1 - \frac{i}{k} - \frac{1}{2}\text{gap}(\alpha)$ for any $i \in \{2, 3, \ldots, k-1\} $ }
  \State PASS.
 \Else
  \State FAIL.
 \EndIf
 \end{algorithmic}
 \end{algorithm}
 \end{minipage}
 \vspace{-5mm}
 \end{wrapfigure}

\paragraph{Algorithm} At a high level, the algorithm draws samples, uses them to estimate the statistic, $\hat{S}$, for each bucket, and compares the statistics with the threshold defined in Theorem~\ref{thm:hr-test-statistic} to determine tail weight. More specifically, the algorithm first computes the order statistics as described in Section~\ref{sec:tester_os}.
Subsequently, the algorithm calculates the lengths $L$ and the change in lengths $L'$ of the buckets,
and computes a test statistic $S[i]$ for $i \in \{2, \cdots, k-1 \}$. The statistic is calculated for every $k^{th}$ bucket within the $k^2$ buckets.
We do not consider the first and last buckets, since the function is not Lipschitz there and so the approximations to the derivative will not be close to the true derivative. The parameter $\zeta$ as described in Definition~\ref{def:well_behaved}, thus, must be $\le 1/2k$, which allows us to safely use all but the last $k-1$ buckets of length $k^2$.
If each test statistic lies above the threshold, the underlying distribution is declared light-tailed.
On the other hand, if any of them lies below the threshold, the distribution is declared heavy-tailed.

%
%
%
%
%
%
%
%

\begin{claim}  \label{claim:detection_of_ht}
If $f(x)$ is $(\alpha, \rho)$-Heavy-Tailed, then Alg.~\ref{alg:tester} will pass it with high probability. Moreover, if the underlying distribution is light-tailed then Alg.~\ref{alg:tester} will fail it with high probability.
\end{claim}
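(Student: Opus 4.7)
The plan is to show that the sample-based statistic $\hat{S}[i]$ lies on the same side of the threshold $1 - i/k$ as the ideal proxy $S(i/k)$ from Theorem~\ref{thm:hr-test-statistic}, for every relevant $i$, with probability at least $9/10$. Theorem~\ref{thm:hr-test-statistic} guarantees $S(z) > 1-z$ for every $z$ in the light-tailed case, while $S(z) < 1 - z - \frac{\alpha(1-z)^2}{\beta^3 B_1}$ on some interval of $z$-length $\rho$ in the $(\alpha,\rho)$-heavy-tailed case, producing a separating gap of $\text{gap}(\alpha) = \frac{\alpha(1-z)^2}{\beta^3 B_1}$ (Definition~\ref{defn:gap}). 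Since the algorithm builds in a slack of $\tfrac{1}{2}\text{gap}(\alpha)$, it suffices to prove that $|\hat{S}[i] - S(i/k)| < \tfrac{1}{2}\text{gap}(\alpha)$ with high probability for each of the $k-2$ buckets tested, and then take a union bound.

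I would split the total error into a deterministic discretization error $|\tilde{S}(i/k) - S(i/k)|$ and a stochastic sampling error $|\hat{S}[i] - \tilde{S}(i/k)|$. The discretization error is already controlled by Fact~\ref{lem:derivative_approximation} and Lemma~\ref{cor:derivative-noisy-approximation}: with $\Delta y_1 = 1/k^2$ and $\Delta y_2 = 1/k$, the numerator of $\tilde{S}$ incurs $O(B_1/k)$ additive error and the denominator incurs $O((2B_1+B_2)/k)$ additive error. Using the upper bound $\mathbb{L}(z) \le 1/f(F^{-1}(z))$ and the Lipschitz bound on the first derivative of $F^{-1}$ on the safe sub-domain (which is why the first and last buckets of size $1/k$ are dropped via $\zeta \le 1/(2k)$), this translates into an $\tilde{S}$-error that the choice $k = \max\{\Theta(B_2\beta^4(2B_1+B_2)/\alpha),\, 4/\rho\}$ drives below $\tfrac{1}{4}\text{gap}(\alpha)$.

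The sampling error is the harder piece and is the step that depends on \Cref{sec:concentration_of_os}. Because \Cref{alg:tester} draws four \emph{independent} sample sets, the four order statistics entering any single $\hat{S}[i]$ are independent, which greatly simplifies error propagation. For each endpoint $I^{(l)}[i,j]$, I would apply the standard reduction $X_{(j)} \stackrel{d}{=} F^{-1}(U_{(j)})$, concentrate the uniform order statistic $U_{(j)}$ about $j/(n+1)$ via Chernoff-type bounds on its Beta law, and transfer the deviation through $F^{-1}$ using the Lipschitz constant $B_1$ on $[0,1-\zeta]$. With $n = \Theta\bigl(k^3 \log k \, B_1^{3/2}\beta^2/\alpha\bigr)$, each endpoint deviates from its ideal value by a small enough absolute amount that, after propagation through the ratio defining $\hat{S}[i]$, $|\hat{S}[i] - \tilde{S}(i/k)| < \tfrac{1}{4}\text{gap}(\alpha)$ except with probability $O(1/k)$ per bucket. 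Combining with the discretization bound and union-bounding over the $O(k)$ buckets (the $\log k$ factor in the sample complexity absorbs this) gives overall success probability $\ge 9/10$, which establishes both directions of the claim at once: in the light-tailed case every $\hat{S}[i]$ stays above the threshold and the algorithm outputs FAIL, while in the $(\alpha,\rho)$-heavy-tailed case at least one $\hat{S}[i]$ drops below it (there are at least $\rho k - 2 \ge 2$ such indices, since $k \ge 4/\rho$) and the algorithm outputs PASS.

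The main obstacle is the concentration step. The denominator of $\hat{S}$ is a \emph{difference} of two bucket-length estimates, so small absolute errors in the endpoints can blow up into large relative errors; simultaneously, a small probability mass may occupy a very long interval, making tail order statistics intrinsically noisy; and the Lipschitz control on $F^{-1}$ degrades near $z=1$. These constraints together force both the cubic dependence on $k$ in the sample complexity and the careful exclusion of the extreme buckets, and they are what turns the otherwise routine concentration argument into the core technical content of the theorem.
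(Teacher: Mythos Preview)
Your decomposition $S \to \tilde S \to \hat S$, the reduction to uniform order statistics via $X_{(j)} \stackrel{d}{=} F^{-1}(U_{(j)})$, and the union bound over the $O(k)$ buckets all mirror the paper's argument exactly. The overall architecture is right.

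There is, however, a genuine gap in the error-propagation step. You assert a uniform additive bound $|\hat S[i] - S(i/k)| < \tfrac{1}{2}\,\text{gap}(\alpha)$, obtained by pushing small additive errors in the numerator $N$ and denominator $D$ through the ratio $N/D$. This cannot hold when $D$ is small, equivalently when $S$ is large---and $S$ \emph{is} large for light-tailed distributions (for the half-Gaussian, $S(z) = -f^2/f' \to \infty$ as $z \to 0$). In that regime an $O(1/k)$ additive error in $D$ produces an unbounded error in $N/D$, and no fixed choice of $n$ or $k$ rescues the uniform additive claim. You correctly flag the small-denominator issue as ``the main obstacle,'' but the resolution you suggest (the cubic sample dependence and dropping extreme buckets) addresses the variance of the endpoints, not the blow-up of the ratio.

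The paper's fix, absent from your plan, is a two-case analysis at \emph{each} stage rather than a single additive bound. Lemma~\ref{lem:s_tilde_error} proves that either $|\tilde S - S| < \epsilon$ \emph{or} both $S,\tilde S \ge 1$; Lemma~\ref{lem:bucket_to_stat_error} proves that either $\hat S$ is a $(1\pm\epsilon)$-multiplicative approximation to $\tilde S$ \emph{or} both $\tilde S > 1$ and $\hat S > 1 - 2/k$. The point is that when the denominator is tiny the ratio lands well above the threshold for \emph{both} the true and the estimated statistic, so the test answers correctly even though $|\hat S - S|$ is uncontrolled. Note also that the paper's sampling-error step is multiplicative in $\tilde L_1,\tilde L_2$ (not additive in $\hat S$), precisely so that the case split in Lemma~\ref{lem:bucket_to_stat_error} can be carried out; your purely additive bookkeeping would need to be reworked along these lines.
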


\vspace{-4mm}

\paragraph{Proof Overview}

The proof breaks down into stages that
correspond to the different approximations we make
to get from the proxy quantity to the final test statistic.
We will use $S$ to denote the proxy quantity, $\tilde{S}$ to denote the statistic approximated by discrete derivatives,
and $\hat{S}$ is used to denote the empirical statistic that calculated using order statistics of the samples (Figure~\ref{fig:s_tildes_hats_errors_2}). 

Our analysis proceeds through four stages: (1) correctness of proxy $S$; (2) $\tilde S$ close to $S$; (3) $\hat S$ close to $\tilde S$; (4) setting $k, n$ to satisfy theorem.

Here, we provide the lemmas that quantify the error incurred from $S$ to $\tilde S$ and $\tilde S$ to $\hat S$. Further details and analysis of how to satisfy the conditions of these lemmas, are discussed in Section~\ref{sec:concentration_of_os}. 

\vspace{-3mm}
\subsubsection*{\textsc{Part 1/4: $\pmb{S}$ is an accurate proxy.}}
The proxy quantity derived in Theorem~\ref{thm:hr-test-statistic} considered with respect to the threshold (halfway across the gap) gives a test which accurately determines whether a set of samples came from a light-tailed distribution or a distribution that is $(\alpha, \rho)$-heavy-tailed. An $(\alpha, \rho)$-heavy-tailed distribution has hazard rate decreasing at least by $\alpha$ over a region of the PDF with probability mass $\rho$, which gives us an expression for how far the statistic must be from the original threshold $1 - i/k$ in order for the hazard rate to be decreasing by at least $\alpha$. Further, if the region of mass $\rho$ lies in at least two buckets, then the proxy quantity will detect it. Thus, if a distribution is light-tailed, then \textit{all} $k-3$ of the calculated proxy quantities calculated will lie above $1-i/k$; if even one of the proxies lies below $1 - i/k - gap$, then the distribution is $(\alpha, \rho)$-heavy-tailed. See Section~\ref{sec:details_bucketing} and Appendix~\ref{appendix:analysis-stat-exact} for detailed discussion. \par
\vspace{-3mm}

\subsubsection*{\textsc{Part 2/4: $\pmb{\tilde S}$ is close to $\pmb{S}$.}}

In the next step, we show that approximating the derivatives in the proxy quantity by the respective difference quotients causes the test statistic to incur bounded error. Recall that the proxy we are using is
$
S = N/D = \left( \frac{d}{dy} F^{-1}(y)\right) / \left( \frac{d^2}{dy^2} F^{-1}(y) \right), 
$
which is approximated by $\tilde S =  \tilde N / \tilde D = \tilde L_1/((\tilde L_2 - \tilde L_1)/\Delta y_2)$ as in Eq.~\ref{eqn:delta2set}.

If the error incurred in $\tilde N, \tilde D$ by approximating the derivatives as above has value $\epsilon'$ (we show this condition is met in Section~\ref{sec:concentration_of_os} due to the Lipschitzness conditions), then either we can estimate the value of the proxy quantity within 
a bounded additive error (and the proxy quantity is small) or the value of the proxy quantity is greater than 1 (and we know we are in the light-tailed case).

\begin{restatable}[Additive Bound for $\tilde S$]{lemmma}{lemmaadditivestilde}
\label{lem:s_tilde_error}
Given a parameter $\epsilon < 1$, if $|\tilde N-N|$ and $|\tilde D-D|$ are at most $\epsilon' \coloneqq \epsilon/(6\beta)$, then either $|\tilde S-S|<\epsilon$ or both $S, \tilde S$ are at least one.
\end{restatable}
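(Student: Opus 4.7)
My strategy is to exploit the algebraic identity
\[
\tilde S - S \;=\; \frac{\tilde N D - N \tilde D}{\tilde D\, D} \;=\; \frac{\delta_N - S\,\delta_D}{\tilde D}, \qquad \delta_N \coloneqq \tilde N - N,\ \delta_D \coloneqq \tilde D - D,
\]
which immediately yields $|\tilde S - S| \le \epsilon'(1+S)/\tilde D$. I will then case-split so that in each branch either this bound is small or both $S$ and $\tilde S$ demonstrably exceed $1$. The global facts I use throughout are $N = 1/f(F^{-1}(y)) \ge 1/\beta$ (from $f \le \beta$) and $D, \tilde D \ge 0$ (from monotonicity of $f$, equivalently, convexity of $F^{-1}$); edge cases with $\tilde D = 0$ give $\tilde S = +\infty$ and can be absorbed into the ``both $\ge 1$'' outcome.

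The primary split is on whether $D$ is comparable to $\epsilon'$. If $D < 2\epsilon'$, then because $\epsilon' = \epsilon/(6\beta) < 1/(6\beta)$ is strictly smaller than $N \ge 1/\beta$, a direct computation gives $S = N/D > 3/\epsilon > 1$ and $\tilde S \ge (N - \epsilon')/(D + \epsilon') > 5/(3\epsilon) > 1$, so both quantities exceed $1$. Otherwise $D \ge 2\epsilon'$, and I further split on the position of $S$ relative to $1$. If $S \le 1$ (equivalently $N \le D$), then $D \ge N \ge 1/\beta$, hence $\tilde D \ge 5/(6\beta)$, and the main bound yields $|\tilde S - S| \le 2\epsilon'/\tilde D \le 2\epsilon/5 < \epsilon$. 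If $S > 1$ and moreover $N - D \ge 2\epsilon'$, then $\tilde N - \tilde D = (N-D) + (\delta_N - \delta_D) \ge 0$, so $\tilde S \ge 1$; together with $S > 1$ this puts both quantities $\ge 1$.

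The only delicate branch is $D \ge 2\epsilon'$, $S > 1$, and $N - D < 2\epsilon'$, where $S$ hovers just above $1$ so neither ``both $\ge 1$'' nor the $S \le 1$ branch applies directly. The rescue is that $N \ge 1/\beta$ combined with $N - D < 2\epsilon' \le 1/(3\beta)$ forces $D \ge 2/(3\beta)$, and hence $\tilde D \ge D - \epsilon' \ge 1/(2\beta)$. Combined with the automatic bound $S < 1 + 2\epsilon'/D \le 2$, the main inequality yields $|\tilde S - S| < 3\epsilon'/(1/(2\beta)) = 6\beta\epsilon' = \epsilon$. I expect this boundary case to be the main technical obstacle: the constant $\epsilon/(6\beta)$ is essentially tight here, while the three other cases have slack to spare, and it is the interaction between the $(1+S)$ factor and the lower bound on $\tilde D$ (both controlled via the universal bound $N \ge 1/\beta$) that makes the argument close.
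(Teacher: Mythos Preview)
Your proof is correct and arrives at the same conclusion with the same constants, but the decomposition differs from the paper's. The paper first splits on the sign of $\tilde S - S$: when $S \ge \tilde S$ it bounds $S - \tilde S \le \epsilon' S(S+1)/N$ via the extremal choice $\tilde N = N-\epsilon'$, $\tilde D = D+\epsilon'$ and then sub-splits on $S \le 2$ versus $S > 2$; when $\tilde S > S$ it bounds $\tilde S - S \le \epsilon'(S+1)/(D-\epsilon')$ and sub-splits on $S > 1$ versus $S \le 1$. You instead start from the single identity $\tilde S - S = (\delta_N - S\,\delta_D)/\tilde D$, obtain the symmetric bound $|\tilde S - S| \le \epsilon'(1+S)/\tilde D$, and case-split on the sizes of $D$ and $N-D$. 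Your route is a bit more uniform---one inequality carries every branch---at the price of the extra ``delicate'' sub-case where $S$ sits just above $1$; the paper sidesteps that sub-case because in the direction $\tilde S > S$ the single condition $S > 1$ already forces $\tilde S > 1$ with no further work. Both arguments use exactly the same two ingredients (the lower bound $N \ge 1/\beta$ and nonnegativity of $D,\tilde D$), and in each proof the constant $6\beta$ is essentially saturated in one branch (the paper's $S\le 2$ case, your Case~4).
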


\vspace{-2mm}

\subsubsection*{\textsc{PART 3/4: $\pmb{\hat S}$ (calculated from order statistics) is close to $\pmb{\tilde S}$.}}

We must next show that we can approximate $\tilde S$ as defined in the previous section by $\hat S\,,$ computed from the order statistics of a set of samples.  First, if we estimate $\tilde L_1$ and $\tilde L_2$ accurately up to a multiplicative $(1 \pm \epsilon')$ factor, and obtain $\hat{L}_1$ and $\hat{L}_2$, then $\tilde{S}$ can be approximated by
$\hat{S} \coloneqq \frac{\hat{L}_1}{k \cdot \left(\hat{L}_2 - \hat{L}_1\right)}\,.$

\begin{restatable}[Multiplicative Bound for $\hat S$]{lemmma}{lemmabuckettostaterror} \label{lem:bucket_to_stat_error}
Suppose we have $\hat{L}_1$ and $\hat{L}_2$, the estimates of $\tilde L_1$ and $\tilde L_2$ with a multiplicative factor of $\epsilon' = \min \left( \Theta\left(\epsilon/((1+\epsilon)\cdot k)\right), \Theta\left(1/k^2\right) \right)$.
Then, one of the following cases holds:
\vspace{-3mm}
\begin{enumerate}[noitemsep]
    \item $\hat{S} > 1 - 2/k$ and $\Tilde{S} > 1$, implying they come from a light-tailed region of the distribution.
    \item $(1-\epsilon) \cdot \Tilde{S} \leq \hat{S} \leq (1+\epsilon) \cdot \Tilde{S}$.
\end{enumerate}
\end{restatable}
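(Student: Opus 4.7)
The plan is to parameterize the multiplicative errors as $\hat L_i = (1+\delta_i)\tilde L_i$ with $|\delta_i|\le\epsilon'$ for $i=1,2$, set $\Delta := \tilde L_2 - \tilde L_1 \ge 0$ (nonnegative because monotonicity of $f$ makes $F^{-1}$ convex) and $\hat\Delta := \hat L_2 - \hat L_1$, and then track the ratio
\[
\frac{\hat S}{\tilde S} \;=\; \frac{\hat L_1}{\tilde L_1}\cdot\frac{\Delta}{\hat\Delta} \;=\; \frac{1+\delta_1}{1+\eta}, \qquad \eta \;:=\; \frac{\delta_2\tilde L_2 - \delta_1\tilde L_1}{\Delta}.
\]
Substituting $\Delta = \tilde L_1/(k\tilde S)$ and $\tilde L_2 = \tilde L_1+\Delta$ yields the uniform bound $|\eta| \le \epsilon'(2k\tilde S + 1)$. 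This exposes the central difficulty: the relative error in the denominator is amplified by a factor of order $k\tilde S$, so a single choice of $\epsilon'$ cannot enforce a multiplicative bound on $\hat S$ when $\tilde S$ is large. The two alternatives in the statement, and the $\min$ in the prescribed $\epsilon'$, exist precisely to deal with this.

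I would then split on the size of $\tilde S$. When $\tilde S\le 1$, one has $|\eta|\le 3\epsilon' k$, and taking $\epsilon' = c_1\epsilon/((1+\epsilon)k)$ for a small enough $c_1$ makes both $|\eta|$ and $|\delta_1|$ at most $\epsilon/(2(1+\epsilon))$; routine algebra on $(1+\delta_1)/(1+\eta)$ then places $\hat S/\tilde S$ in $[1-\epsilon,1+\epsilon]$, which is conclusion~2. When $\tilde S > 1$, I would aim for conclusion~1 instead: $\Delta < \tilde L_1/k$ and $\tilde L_2 < \tilde L_1(1+1/k)$, so $|\delta_2\tilde L_2-\delta_1\tilde L_1| \le 2\epsilon'(1+1/k)\tilde L_1$, and provided $\hat\Delta>0$ this gives
\[
\hat S \;=\; \frac{\hat L_1}{k\hat\Delta} \;\ge\; \frac{1-\epsilon'}{1+2\epsilon'(k+1)} \;\ge\; 1-\frac{2}{k}
\]
once $\epsilon' \le c_2/k^2$ for a small constant $c_2$, which is exactly what the second branch of the $\min$ provides. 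The degenerate subcase $\hat\Delta\le 0$ can only arise when $\tilde S = \Omega(k)$ (the bucket looks essentially uniform), and we may adopt the convention $\hat S = +\infty$ in that case, still satisfying $\hat S > 1-2/k$.

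The main obstacle is the tension between the two branches of the $\min$: the first branch, of order $\epsilon/((1+\epsilon)k)$, is what turns absolute error into multiplicative error in the small-$\tilde S$ regime, while the second branch, of order $1/k^2$, is what keeps $\hat\Delta$ positive and bounded above by roughly $\tilde L_1/k$ in the large-$\tilde S$ regime. The delicate bookkeeping is picking the constants $c_1, c_2$ so that both arguments go through and glue cleanly at the boundary $\tilde S = 1$, and verifying that the $\hat\Delta \le 0$ edge case really does force $\tilde S$ to be so large that the bound $\hat S > 1 - 2/k$ is trivially met under the monotonicity convention; this is where I would focus the most care.
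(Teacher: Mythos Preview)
Your proposal is correct and follows essentially the same route as the paper: the paper's case split on whether $\tilde L_2-\tilde L_1\le \frac{1}{k+1}\tilde L_2$ is exactly your split on $\tilde S>1$ versus $\tilde S\le 1$, and in each case the paper derives the same inequalities you do (your bound $\hat S\ge (1-\epsilon')/(1+2\epsilon'(k+1))$ in the large-$\tilde S$ case and the $(1\pm\epsilon)$ multiplicative bound in the small-$\tilde S$ case, using $\tilde L_2<(k+1)(\tilde L_2-\tilde L_1)$, which is your $|\eta|\le 3\epsilon' k$). Your explicit treatment of the degenerate subcase $\hat\Delta\le 0$ is a nice addition that the paper leaves implicit.
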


To show that we can get the multiplicative estimates $\hat L_1, \hat L_2$, we need to show that the order statistics concentrate well so that the estimates for the end points are not too far off from their theoretical values. For this, we construct a map between samples from the uniform distribution on $[0, 1]$ and an arbitrary distribution with CDF $F$, showing that concentration of the order statistics of a set of samples from the former implies concentration of order statistics of a set of samples from the latter. This concentration can be expressed in terms of additive error (Lemma~\ref{lem:additive_error_x}), and this can be translated to multiplicative error (Lemma~\ref{lemma:samples}). We discuss this in more detail in Section~\ref{sec:concentration_of_os}.

\subsubsection*{\textsc{Part 4/4: Errors can be set to satisfy theorem.}}
Finally, we must limit the errors we incur in Parts 2 and 3 to determine how many buckets and samples we require. For this, we note that after incurring both the derivative approximation error and the sampling error, the test statistic $\hat S$ must still be on the same side of the threshold (partway across the gap), as $S$. 
    We ensure that we split the distribution into sufficiently many buckets that no more than a quarter of the gap is crossed due to derivative approximation error. Further, we draw enough samples that no more a tenth of the gap is crossed due to sampling error. Thus, even when both errors are incurred, the test statistic remains on the correct side of the threshold.

\section{Hardness Result}
\label{sec:hardness-main-body}

In this section, we show that in the absence of assumptions, this problem cannot be solved with finitely many samples. In particular, for any number of samples $m$, we present two classes of distributions, one light-tailed and the other heavy-tailed, that are indistinguishable using $m$ samples. To start, we consider a slightly different definition for heavy-tailed-ness. We then show that it is hard to distinguish these distributions from light-tailed ones.

\begin{definition}
We say a distribution $p$ is  $(\alpha, \rho)$-scattered-heavy-tailed if the hazard rate is decreasing by rate $\ge \alpha$ over measurable intervals of the domain with probability mass $\ge \rho$. 
\end{definition}

\paragraph{High level idea:} We construct two classes of distributions that are hard to distinguish with few samples:  $\CC_L$ (light-tailed), and  $\CC_H$ (heavy-tailed).  The class of light-tailed distributions contains only one member that is an exponential distribution with $\fE(x) = e^{-x}$. We construct the class of heavy-tailed distributions via a randomized process as follows: We start off by the same distribution $\fE(x) = e^{-x}$. We split the domain of $\fE$ into $s$ {\em chunks} such that the probability mass in every chunk is equal to $1/s$. Then, we select roughly $\rho' = \Theta(\rho)$ fraction of these chunks randomly and embed a heavy-tailed distribution, namely $\fH$, in (some of) those selected chunks. The construction has two key properties: First, the probability mass of a chunk remains the same even when the alteration happens. Second, if we draw one sample from a chunk, we cannot tell whether it is altered or not. 

When we alter a chunk, we randomly replace $\fE$ by a heavy-tailed piece $\fH$ or another partial PDF $\fbar$. We simply define $\fbar$ such that the mixture of $\fH$ and $\fbar$ each with probability a half gives us exactly $\fE$. Thus, if we receive one sample from a chunk that comes from a random $\CC_H$, it is impossible to tell whether the chunk is altered or not. It is worth noting that this process generates a class of distributions, $\CC_H$, that depends on a parameter $s$. We may also $\CC_{H}(s)$ to denote it. To complete our proof, we show that for any algorithm that uses $m$ samples, there is a sufficiently large $s$ such that it is very unlikely to have more than one sample per chunk. Thus, $\CC_L$ and $\CC_{H}(s)$ are indistinguishable when we use $m$ samples. This fact implies that no algorithm that uses finitely many samples can distinguish a light tailed distribution from a distribution that is heavy on measurable subset of the domain with mass $\rho$ unless we make further assumptions including that the heavy-tailed part might need to be contiguous. We state the result formally in the theorem below, and the proof is in Appendix~\ref{sec:lowerbound}.

\begin{theorem} 
For any integer $m$, there is no algorithm that receives $m$ samples from $p\,,$ a monotone and continuous distribution, and can distinguish whether $p$ is light-tailed or $(\alpha, \rho)-$scattered-heavy-tailed for $\alpha < 0.0043$ and $\rho < 0.5$ with probability more than $0.5 + o(1)$.
\end{theorem}

\section{Experiments}
\label{sec:experiments}

We present experiments that validate our theoretical results. Through experiments on synthetic data, we demonstrate that our statistic distinguishes between Gaussian (light-tailed) and Lomax (heavy-tailed) distributions. The algorithm we run has weaker theoretical guarantees, as we do not consider both granularities of bucketing during derivative approximation.

\paragraph{Data and Methods}
Points are sampled from (half) Gaussian  and Lomax  distributions using built-in functions in \texttt{numpy.random}. We sample $n = \Theta(k^4) \approx 51$ million and $n = \Theta(k^5) \approx 300$ million points for $k = 32$. 

\paragraph{Results} We find that the test statistic distinguishes between Gaussian and Lomax over the required range of bucket indices. In Figure~\ref{fig:expt_results}, the dashed red lines refer to the calculated value of the test statistic for a Gaussian distribution and blue lines for the Lomax. One standard deviation away from the mean is shaded in the appropriate color. From Figure~\ref{fig:expt_results}, we note: (1) the value of the test statistic calculated from samples is very close to the proxy quantity ($S$); (2) in part (a) (approx. 51 million samples), we can distinguish Gaussian (red) and Lomax (blue) well over a large range of buckets, but we see greater variability between runs (spread of the dashed lines) than in part (b) (300 million samples, same number of buckets), where 
the calculated statistic concentrates better. In Appendix~\ref{appendix:sec-low-sample}, we explore what happens when we use substantially fewer samples, observing that we require some minimum number of samples per bucket but start to see separation between the same two distributions with a relatively small number of buckets. Thus, in situations where it is not essential that we get provable guarantees, the same algorithm could work with fewer samples.

\begin{figure*}[h!]
\centering
    \centering
    \begin{tabular}{{p{0.45\textwidth}p{0.45\textwidth}}}
     \includegraphics[width=0.4\textwidth]{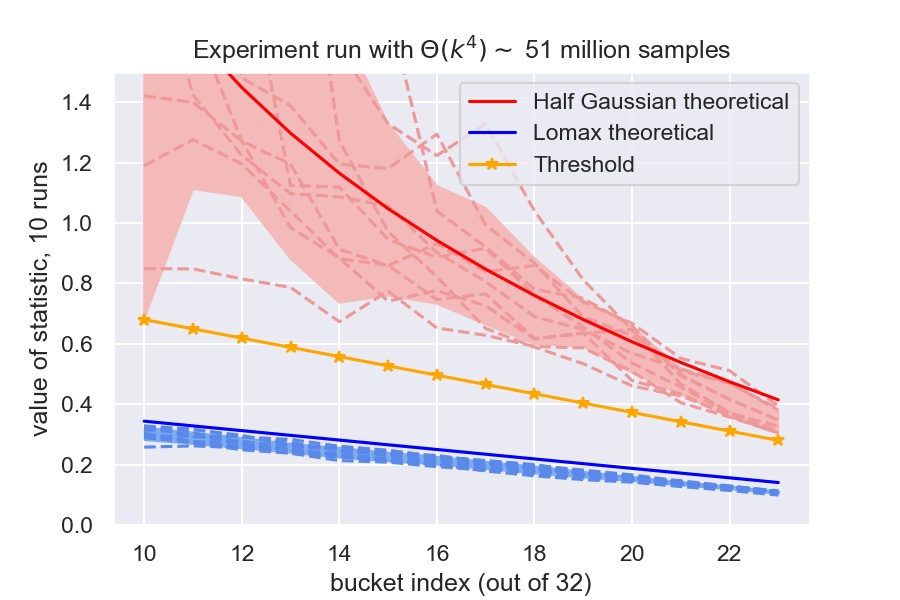} & \includegraphics[width=0.4\textwidth]{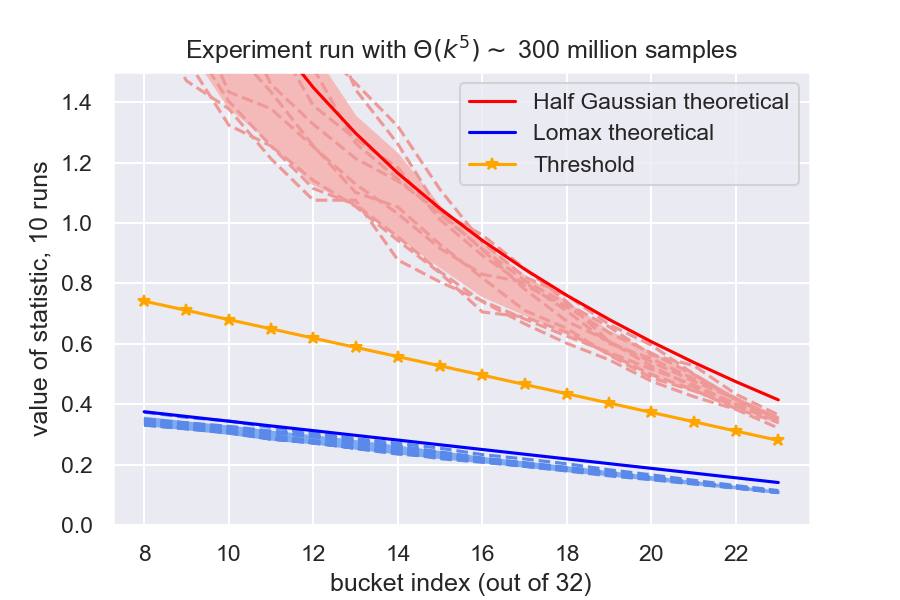} \\
         (a) & (b) \\
    \end{tabular}
    
        \caption{\scriptsize These plots depict the results of 10 runs of Algorithm~\ref{alg:weak_tester} on samples drawn from a half Gaussian distribution and 10 runs of it on samples drawn from a Lomax. Experiments plotted in (a) used $n = \Theta(k^4)$ samples, and those in (b) used $n = \Theta(k^5)$ samples. With more samples, we are able to distinguish the two distributions over a broader range of buckets. Depicted are the proxy quantities (solid), lines representing the 10 runs for each distribution (dashed), and shading one standard deviation above and below the mean. The orange line represents the threshold, calculated based on $\alpha=1/4$ for Lomax and appropriate settings for $\beta, B_1$ based on the distributions we considered.}
    \label{fig:expt_results}

\end{figure*}

\medskip

\small

\bibliographystyle{alpha}
\bibliography{bibliography}

\newcommand{\etalchar}[1]{$^{#1}$}
\newcommand{\noopsort}[1]{} \newcommand{\printfirst}[2]{#1}
  \newcommand{\singleletter}[1]{#1} \newcommand{\switchargs}[2]{#2#1}
\begin{thebibliography}{HVK{\etalchar{+}}05}

\bibitem[AAK{\etalchar{+}}07]{aakmrx_kwise_indep_07}
Noga Alon, Alexandr Andoni, Tali Kaufman, Kevin Matulef, Ronitt Rubinfeld, and
  Ning Xie.
\newblock Testing k-wise and almost k-wise independence.
\newblock In {\em Proceedings of the Thirty-Ninth Annual ACM Symposium on
  Theory of Computing}, STOC '07, page 496–505, New York, NY, USA, 2007.
  Association for Computing Machinery.

\bibitem[ACS10]{monotone_cont_highdim_2010}
Michał Adamaszek, Artur Czumaj, and Christian Sohler.
\newblock Testing monotone continuous distributions on high-dimensional real
  cubes.
\newblock In {\em Proceedings of the 2010 Annual ACM-SIAM Symposium on Discrete
  Algorithms (SODA)}, pages 56--65, 2010.

\bibitem[AD15]{ad_pbd_2015}
Jayadev Acharya and Constantinos Daskalakis.
\newblock Testing poisson binomial distributions.
\newblock In {\em Proceedings of the 2015 Annual ACM-SIAM Symposium on Discrete
  Algorithms (SODA)}, pages 1829--1840, 2015.

\bibitem[ADK15]{adk_optimal}
Jayadev Acharya, Constantinos Daskalakis, and Gautam Kamath.
\newblock Optimal testing for properties of distributions.
\newblock In {\em Proceedings of the 28th International Conference on Neural
  Information Processing Systems - Volume 2}, NIPS'15, pages 3591--3599,
  Cambridge, MA, USA, 2015. MIT Press.

\bibitem[AGP{\etalchar{+}}19]{aliakbarpourGPRY18}
Maryam Aliakbarpour, Themis Gouleakis, John Peebles, Ronitt Rubinfeld, and Anak
  Yodpinyanee.
\newblock Towards testing monotonicity of distributions over general posets.
\newblock In {\em Proceedings of the Thirty-Second Conference on Learning
  Theory, COLT}, pages 34--82, 2019.

\bibitem[BFR{\etalchar{+}}00]{batu_testing_2000}
Tugkan Batu, Lance Fortnow, Ronitt Rubinfeld, Warren~D. Smith, and Patrick
  White.
\newblock Testing that distributions are close.
\newblock In {\em 41st Annual Symposium on Foundations of Computer Science,
  {FOCS} 2000, 12-14 November 2000, Redondo Beach, California, {USA}}, pages
  259--269. {IEEE} Computer Society, 2000.

\bibitem[BFR{\etalchar{+}}13]{batu_testing_2013}
Tuğkan Batu, Lance Fortnow, Ronitt Rubinfeld, Warren~D. Smith, and Patrick
  White.
\newblock Testing closeness of discrete distributions.
\newblock {\em Journal of the {ACM}}, 60(1):1--25, 2013.

\bibitem[Bir87]{birge_risk_1987}
Lucien Birge.
\newblock On the risk of histograms for estimating decreasing densities.
\newblock {\em Ann. Statist.}, 15(3):1013--1022, 09 1987.

\bibitem[BKR04]{batu_sublinear_2004}
Tugkan Batu, Ravi Kumar, and Ronitt Rubinfeld.
\newblock Sublinear algorithms for testing monotone and unimodal distributions.
\newblock In {\em Proceedings of the thirty-sixth annual {ACM} symposium on
  Theory of computing - {STOC} '04}, page 381. {ACM} Press, 2004.

\bibitem[BMP63]{barlow_properties_1963}
Richard~E. Barlow, Albert~W. Marshall, and Frank Proschan.
\newblock Properties of probability distributions with monotone hazard rate.
\newblock {\em Annals of Mathematical Statistics}, 34(2):375--389, 1963.
\newblock Publisher: Institute of Mathematical Statistics.

\bibitem[Bry74]{bryson_heavy-tailed_1974}
Maurice~C Bryson.
\newblock Heavy-tailed distributions: properties and tests.
\newblock {\em Technometrics}, 16(1):61--68, 1974.

\bibitem[Can16]{canonne_k_histogram_2016}
Cl\'{e}ment~L. Canonne.
\newblock Are few bins enough: Testing histogram distributions.
\newblock In {\em Proceedings of the 35th ACM SIGMOD-SIGACT-SIGAI Symposium on
  Principles of Database Systems}, PODS '16, page 455–463, New York, NY, USA,
  2016. Association for Computing Machinery.

\bibitem[Can20]{canonne_survey_2020}
Clément~L. Canonne.
\newblock A survey on distribution testing: Your data is big. but is it blue?
\newblock {\em Theory of Computing}, 2020.

\bibitem[CDGR18]{canonne_testing_2016}
Cl{\'e}ment~L Canonne, Ilias Diakonikolas, Themis Gouleakis, and Ronitt
  Rubinfeld.
\newblock Testing shape restrictions of discrete distributions.
\newblock {\em Theory of Computing Systems}, 62(1):4--62, 2018.

\bibitem[CDVV14]{cdvv_closeness_2014}
Siu-On Chan, Ilias Diakonikolas, Paul Valiant, and Gregory Valiant.
\newblock Optimal algorithms for testing closeness of discrete distributions.
\newblock In {\em Proceedings of the 2014 Annual ACM-SIAM Symposium on Discrete
  Algorithms (SODA)}, pages 1193--1203, 2014.

\bibitem[CGR18]{canonne_sampling_2018}
Cl{\'e}ment~L Canonne, Themis Gouleakis, and Ronitt Rubinfeld.
\newblock Sampling correctors.
\newblock {\em SIAM Journal on Computing}, 47(4):1373--1423, 2018.

\bibitem[Chi64]{chistyakov_subexp_1964}
V.~P. Chistyakov.
\newblock A theorem on sums of independent positive random variables and its
  applications to branching random processes.
\newblock {\em Theory of Probability \& Its Applications}, 9(4):640--648, 1964.

\bibitem[DGPP19]{cj19-01}
Ilias Diakonikolas, Themis Gouleakis, John Peebles, and Eric Price.
\newblock Collision-based testers are optimal for uniformity and closeness.
\newblock {\em Chicago Journal of Theoretical Computer Science}, 2019(1), May
  2019.

\bibitem[Fel20]{feldman_does_2019}
Vitaly Feldman.
\newblock Does learning require memorization? a short tale about a long tail.
\newblock In Konstantin Makarychev, Yury Makarychev, Madhur Tulsiani, Gautam
  Kamath, and Julia Chuzhoy, editors, {\em Proccedings of the 52nd Annual {ACM}
  {SIGACT} Symposium on Theory of Computing, {STOC} 2020, Chicago, IL, USA,
  June 22-26, 2020}, pages 954--959. {ACM}, 2020.

\bibitem[GH04]{gijbels__nonparametric_2004}
Ir\`{e}ne Gijbels and Nancy Heckman.
\newblock Nonparametric testing for a monotone hazard function via normalized
  spacings.
\newblock {\em Journal of Nonparametric Statistics}, 16(3):463--477, 2004.

\bibitem[HB99]{harchol-balter_e_nodate}
Mor Harchol-Balter.
\newblock The effect of heavy-tailed job size distributions on computer system
  design.
\newblock In {\em Proc. of ASA-IMS Conf. on Applications of Heavy Tailed
  Distributions in Economics, Engineering and Statistics}, page~17, 1999.

\bibitem[HIKV18]{hsu_learning-based_2018}
Chen-Yu Hsu, Piotr Indyk, Dina Katabi, and Ali Vakilian.
\newblock Learning-based frequency estimation algorithms.
\newblock In {\em International Conference on Learning Representations}, 2018.

\bibitem[HVK{\etalchar{+}}05]{hall_testing_2005}
Peter Hall, Ingrid Van~Keilegom, et~al.
\newblock Testing for monotone increasing hazard rate.
\newblock {\em The Annals of Statistics}, 33(3):1109--1137, 2005.

\bibitem[KPW04]{klugman_loss_2004}
Stuart Klugman, Harry Panjer, and Gordon Willmot.
\newblock {\em Loss Models: From Data to Decisions}.
\newblock John Wiley \& Sons, Inc., second edition edition, 2004.

\bibitem[MA{\etalchar{+}}17]{marchal_sub-gaussianity_2017}
Olivier Marchal, Julyan Arbel, et~al.
\newblock On the sub-gaussianity of the beta and dirichlet distributions.
\newblock {\em Electronic Communications in Probability}, 22, 2017.

\bibitem[Mik99]{mikosch_regular_1999}
Thomas Mikosch.
\newblock {\em Regular variation, subexponentiality and their applications in
  probability theory}, volume~99.
\newblock Eindhoven University of Technology Eindhoven, The Netherlands, 1999.

\bibitem[MSV{\etalchar{+}}19]{mao_learning_2019}
Hongzi Mao, Malte Schwarzkopf, Shaileshh~Bojja Venkatakrishnan, Zili Meng, and
  Mohammad Alizadeh.
\newblock Learning scheduling algorithms for data processing clusters.
\newblock In {\em Proceedings of the ACM Special Interest Group on Data
  Communication}, pages 270--288. ACM, 2019.

\bibitem[NPP33]{np33}
Jerzy Neyman, Egon~Sharpe Pearson, and Karl Pearson.
\newblock Ix. on the problem of the most efficient tests of statistical
  hypotheses.
\newblock {\em Philosophical Transactions of the Royal Society of London.
  Series A, Containing Papers of a Mathematical or Physical Character},
  231(694-706):289--337, 1933.

\bibitem[Pea00]{pearson00}
Karl Pearson.
\newblock On the criterion that a given system of deviations from the probable
  in the case of a correlated system of variables is such that it can be
  reasonably supposed to have arisen from random sampling.
\newblock {\em The London, Edinburgh, and Dublin Philosophical Magazine and
  Journal of Science}, 50(302):157--175, 1900.

\bibitem[RV]{rubinfeld_monotone_2020}
Ronitt Rubinfeld and Arsen Vasilyan.
\newblock Monotone probability distributions over the boolean cube can be
  learned with sublinear samples.
\newblock page 34 pages.
\newblock Artwork Size: 34 pages Medium: application/pdf Publisher: Schloss
  Dagstuhl - Leibniz-Zentrum fuer Informatik {GmbH}, Wadern/Saarbruecken,
  Germany Version Number: 1.0.

\bibitem[ST03]{su_characterizations_2003}
Chun Su and Qi-he Tang.
\newblock Characterizations on heavy-tailed distributions by means of hazard
  rate.
\newblock {\em Acta Mathematicae Applicatae Sinica}, 19(1):135--142, 2003.

\bibitem[TPC]{noauthor_tpc-h_nodate}
TPC.
\newblock {TPC}-h benchmark.
\newblock Retrieved from: http://tpc.org/tpch/default5.asp.

\bibitem[Val11]{valiant_symmetric_2011}
Paul Valiant.
\newblock Testing symmetric properties of distributions.
\newblock {\em SIAM Journal on Computing}, 40(6):1927--1968, 2011.

\bibitem[Ver18]{vershynin18}
Roman Vershynin.
\newblock {\em High-dimensional probability: An introduction with applications
  in data science}, volume~47.
\newblock Cambridge university press, 2018.

\bibitem[WRH17]{wang_learning_2017}
Yu-Xiong Wang, Deva Ramanan, and Martial Hebert.
\newblock Learning to model the tail.
\newblock In I.~Guyon, U.~V. Luxburg, S.~Bengio, H.~Wallach, R.~Fergus,
  S.~Vishwanathan, and R.~Garnett, editors, {\em Advances in Neural Information
  Processing Systems 30}, pages 7029--7039. Curran Associates, Inc., 2017.

\end{thebibliography}

\onecolumn
\appendix

\section{Alternate Definitions of Heavy-Tailedness} \label{appendix:defns_ht}
It is non-trivial to give a single unifying definition of heavy-tailed distributions, 
since the definition needs to accommodate a wide range of behaviors, including distributions whose tails fall off at irregular rates. Further, no notion of having a ``heavy'' tail is absolute -- there must be a point of reference.
Throughout the literature, a plethora of non-equivalent conditions 
on the tail of a distribution are used to define what it means for the distribution to be considered
``heavy-tailed.'' 
Some examples include increasing conditional mean
exceedance~\cite{bryson_heavy-tailed_1974}, decreasing hazard rate~\cite{klugman_loss_2004}, 
 the lack of finite exponential moments, having tails that decay more slowly than exponential~\cite{mikosch_regular_1999, chistyakov_subexp_1964}, infinite variance~\cite{harchol-balter_e_nodate}, regularly varying tails~\cite{mikosch_regular_1999}, 
and definitions that build on these conditions~\cite{su_characterizations_2003}. One uniting factor of these definitions is that the point of reference is the exponential distribution, meaning that a distribution whose tail decays more slowly than the exponential is considered ``heavy-tailed.'' \footnote{Distributions with regularly varying tails represent a subset of distributions whose tails decay more slowly than exponential.}
In this work, we focus on Klugman et al.'s definition of heavy-tailed that is based on the property that the 
hazard rate of a heavy-tailed distribution is 
decreasing~\cite{klugman_loss_2004}; the characterization is of similar structure to the definitions that~\cite{bryson_heavy-tailed_1974, su_characterizations_2003} use,
admits a clean description, and reflects the idea that heavy tails decay more slowly than exponential
tails, thereby adequately classifying known families of distributions
into heavy-tailed versus light-tailed -- e.g., Lomax (heavy), exponential (light), and Gaussian (light). \par
In this section, we provide alternate definitions for the notion of ``heavy-tailed.'' We use the usual notation that a distribution has Probability Density Function (PDF) $f(x)$ and Cumulative Density Function (CDF) $F(x)$.
\subsection{Increasing Conditional Mean Exceedance} \label{sec:def_cme}
The conditional mean exceedance (CME) is defined in~\cite{bryson_heavy-tailed_1974} to be:
\begin{equation}
\label{defn:cme}
    CME(x) = \mathbb{E}\left[ (X - x) | X \ge x\right] = \frac{\int_x^{\infty} (s - x) f(s) ds}{1 - F(x)}
\end{equation}
According to~\cite{bryson_heavy-tailed_1974}, a distribution is considered heavy-tailed if the CME is increasing for all $x$. The CME essentially captures the shape of tail by considering how the expectation of the tail of the distribution evolves throughout the tail. Thus, if it is increasing, the further into the distribution we go, the more that remains. \par
This definition considers an exponential distribution the canonical ``medium-tailed'' distribution and the Lomax distribution the canonical ``heavy-tailed'' distribution. In practical settings, this metric is valuable, as it represents ``decreasing failure rate''~\cite{harchol-balter_e_nodate}: the longer a job has been alive (as $x$ increases), the longer it is expected to stay alive ($\mathbb{E}(X-x | X \ge x)$ increasing). This definition can also be framed as the reciprocal of the hazard rate of the equilibrium distribution function of a distribution~\cite{su_characterizations_2003}.

\subsection{Infinite Variance}
Harchol-Balter says that heavy-tailed distributions are ones that have infinite variance~\cite{harchol-balter_e_nodate}. From samples, the variance can never really be infinite; in those cases, very large variance signify heavy-tailedness. This definition is difficult to quantify. \par
This definition considers a subset of Lomax and Pareto distributions ``heavy-tailed.''

\subsection{Regular Variation}
\label{sec:def_regvar}
The tail of a distribution captures how the PDF behaves as $x \rightarrow \infty$. We introduce~\cite{mikosch_regular_1999}'s notion of regular variation as a way to capture this behavior.
\begin{definition}
A positive measurable function $f$ is called {\em regularly varying (at infinity) with index $\alpha \in \mathbb{R}$} if
\begin{itemize}
    \item It is defined on some neighborhood $[x_0, \infty)$ of infinity.
    \item $
    \lim \limits_{x \rightarrow \infty} \frac{f(tx)}{f(x)} = t^\alpha \text{ for all } t > 0 \,.
    $
\end{itemize}
If $\alpha = 0$, then $f$ is said to be {\em slowly varying (at infinity) }
\end{definition}
One class of heavy-tailed distributions is that which have regularly varying tails. This includes distributions of the type $x^\alpha$, etc. Note that these are related to generalized Pareto distributions. Thus, this class of distributions is narrower in scope than the set of heavy-tailed distributions considered by other definitions.

\subsection{Moment Generating Function Infinite}
According to~\cite{su_characterizations_2003}, a distribution is considered heavy-tailed if it has the moment generating function (MGF) is not bounded, i.e., the distribution has no finite exponential moments.. That is, if
\begin{align}
    \int \limits_{0}^{\infty} e^{tx} f(x) dx = \infty \quad\quad \forall t > 0
    \,,
\end{align}
then we consider a distribution heavy-tailed. This definition captures the idea that the tail of a distribution must be heavier than exponential in order for it to be considered ``heavy-tailed.''

\subsection{Relating Definitions}

The characterization using the moment generating function
in the previous section considers any distribution of the form $e^{-g(x)}$ where $g(x)$ is asymptotically smaller than linear ``heavy-tailed.'' Indeed, this is true of the CME and HR characterizations, as well, and~\cite{mikosch_regular_1999} notes that distributions that decay more slowly than the exponential are heavy-tailed. For the general class of distributions with CDF $F(x) = 1 - e^{-g(x)}$ and PDF $f(x) = g'(x) e^{-g(x)} = e^{-(g(x) - \ln(g'(x)))}$, the CME characterization, the HR characterization, and the moment-generating function characterization all consider the same families of distributions heavy-tailed and light-tailed. In light of this, we choose to consider the hazard rate definition, as it is the simplest expression of this notion.

\section{Related Work Discussion} \label{appendix:relatedwork}
 In the works by \cite{adk_optimal} and \cite{canonne_sampling_2018},
algorithms are given to perform the task of testing monotone hazard rate with sample complexity that has dependence
on the domain size that is nearly square root.

In them, the domain size is finite and no assumptions on the monotonicity, 
continuity, or Lipschitzness of the distributions are made.   
The paradigm for testing MHR in~\cite{adk_optimal, canonne_testing_2016} 
can be broken into the three steps: first, approximately 
learning the underlying distribution from samples (assuming that it is MHR), 
second, testing whether the learned distribution 
is in fact close to the original distribution, and third, testing whether the 
learned distribution is in the class or far from the class.
The sample complexity ($O(n / \epsilon^2 + \log(n/\epsilon)/\epsilon^4)$) is dominated by the cost of the second step, while the third step
requires no new samples and is solely computational, via a solution
to a linear programming problem. As the variables for the linear program represent the probability mass function, this step cannot easily translate to distributions over continuous domains. Since the sample complexity of this algorithm scales with domain size, it is not finite in our setting. In contrast, we give a finite sample guarantee, but the result is incomparable due to the assumptions we make.

\section{PDF, CDF of Some Common Continuous Distributions}
\label{appendix:pdf_cdf_details}

In this section, we define the various distributions that are referenced throughout the paper as are relevant in our setting.

\paragraph{Exponential} \label{cont_exp_pdf} The cutoff between heavy-tailed and light-tailed distributions according to both the CME and HR definitions is the exponential distribution. The continuous exponential distribution requires parameter $\lambda > 0$ has probability density function (PDF) $f(x) = \lambda e^{-\lambda x}; x \in [0, \infty)$, cumulative density function (CDF) $F(x) = 1 - e^{-\lambda x}$, and quantile function $F^{-1}(x) = \frac{-1}{\lambda} \ln{1-x}$.

\paragraph{Lomax} \label{cont_lom_pdf} 
The Lomax distribution requires parameters $\alpha > 0, \lambda > 0$ and has PDF$f(x) = \frac{\alpha}{\lambda} \left[ 1 + \frac{x}{\lambda}\right]^{-(\alpha + 1)}$, CDF $F(x) = 1 - \left[ 1 + \frac{x}{\lambda}\right]^{-\alpha}$, and quantile function $F^{-1}_{X,L}(x) = \lambda \left( \frac{1}{(1-x)^{1/\alpha}} -1 \right)$. We consider this the canonical ``heavy-tailed'' distribution. 

\paragraph{Half-Gaussian}
The half-Gaussian distribution requires parameter $\sigma$ and has PDF:
\begin{equation}
    f(x) = \frac{2}{\sigma \sqrt{2 \pi}} e^{-\frac{1}{2} \left(\frac{x}{\sigma}\right)^2}; x \in [0, \infty)
\end{equation}
and CDF:
\begin{equation}
    F(x) = \erf{\left(\frac{x}{\sigma \sqrt{2}}\right)}
\end{equation}
The quantile function is:
\begin{equation}
    F^{-1}(x) = \sigma \sqrt{2} \erf^{-1}(x)
\end{equation}
We consider this the canonical ``light-tailed'' distribution.

\section{Test Statistic Analysis} \label{appendix:analysis_stat}
\subsection{Proof of \texorpdfstring{Theorem~\ref{thm:hr-test-statistic}}{Section \ref{thm:hr-test-statistic}}} 
\label{appendix:analysis-stat-exact}
\hrstatthm*
\begin{proof}
Let $f(y)$ be the PDF of a distribution and $F(y)$ the CDF. From Equation~\ref{eqn:L_of_bucket}, 
when considering the ratio of the length of the buckets to the rate of change of the derivatives, we have that:
\begin{align}
    \frac{L(y)}{\frac{d}{dy} L(y)} = \frac{L(y)}{-f'(F^{-1}(y)) f(F^{-1}(y))^{-2} L(y)} = \frac{-f(F^{-1}(y))^{2}}{f'(F^{-1}(y))}\bigg\rvert_{y = z} = \frac{-f(F^{-1}(z))^{2}}{f'(F^{-1}(z))}
\end{align}

First, in order for the distribution to be light-tailed, $\frac{d}{dy}HR(f(y)) > 0$, which implies:
\begin{align}
    \frac{d}{dy}\frac{f(y)}{1-F(y)} &= \frac{(1-F(y)) f'(y) - f(y)(-f(y))}{(1-F(y))^2} > 0 \\
    &\Rightarrow (1-F(y)) f'(y) - f(y)(-f(y)) > 0 \\
    &\Rightarrow  \frac{-f(y)^2}{f'(y)} > 1 - F(y) \bigg \rvert_{y = F^{-1}(z)} \label{eqn:tail_condition}\\
    &\Rightarrow \frac{-f(F^{-1}(z))^2}{f'(F^{-1}(z))} = S(z) > 1 - F\left(F^{-1}\left(z\right)\right) = 1 - z
\end{align}
Equation~\ref{eqn:tail_condition} is a result of our assumption that $f(y)$ is monotone decreasing, and so its derivative must be negative. This derivation gives us a condition on the tail of the distribution which, if satisfied, implies that the distribution is light-tailed. 

Now, we consider the case where the hazard rate decreases by at least $\alpha$. In this case:
\begin{align}
    \frac{d}{dy}\frac{f(y)}{1-F(y)} &= \frac{(1-F(y)) f'(y) - f(y)(-f(y))}{(1-F(y))^2} < - \alpha \\
    &\Rightarrow (1-F(y)) f'(y) - f(y)(-f(y)) < -\alpha (1- F(y))^2 \\
    &\Rightarrow  \frac{-f(y)^2}{f'(y)} < 1 - F(y) + \frac{\alpha (1-F(y))^2}{f'(y)} \bigg \rvert_{y = F^{-1}(z)} \\
    &\Rightarrow \frac{-f(F^{-1}(z))^2}{f'(F^{-1}(z))} = S(z) < 1 - z + \frac{\alpha (1-z))^2}{f'(F^{-1}(z))} = 1 - z + \frac{\alpha (1 - z)^2}{f'(F^{-1}(z))} < 1 - z - \frac{\alpha (1-z)^2}{\beta^3 B_1} 
\end{align}
Thus, if the proxy quantity is greater than $1-z$, then the distribution is light-tailed; otherwise, if the proxy quantity lies below $1-z - \frac{\alpha (1-z)^2}{\beta^3 B_1}$ over a domain of size $\rho$, then the distribution is $(\alpha, \rho)$-heavy-tailed. 
\par
By imposing the condition from Definition~\ref{def:alpha-Heavy-Tailed} on the hazard rate, we were able to recover an expression where one term denotes the usual threshold and there is an additive term, dependent on $\alpha$, that specifies what the \textbf{gap} is between the threshold and the lightest heavy-tailed distribution for which we can test. In particular:
\begin{align*} \label{eqn:alpha_gap}
    S(z) \le \frac{-\alpha (1-z)^2}{-f'(F^{-1}(z))} + (1-z) \Rightarrow \text{gap}(\alpha, z) = \frac{-\alpha (1-z)^2}{-f'(F^{-1}(z))} \le \frac{-\alpha (1-z)^2}{\beta^3 B_2}, z \in \left\{\frac{i}{k}, \frac{i+2}{k}\right\}
\end{align*} 

In our analysis, when we refer to the gap, this is what we are referencing.

\end{proof}

\subsection{Proof of Fact \ref{lem:derivative_approximation}} \label{appendix:analysis-stat-tilde}

\derivapproxlemma*
\begin{proof}
By the intermediate value theorem, there exists some point $y'\in [y, y+\Delta y]$, s.t.,
\[
g'(y') = \frac{g(y+\Delta y) - g(y)}{\Delta y}
\]
\begin{align*}
    |g'(y') - g'(y)| \le |g'(y + \Delta y) - g'(y)| \le B\cdot \Delta y
\end{align*}
\end{proof}
\subsection{Proof of \texorpdfstring{Lemma~\ref{cor:derivative-noisy-approximation}}{Section \ref{cor:derivative-noisy-approximation}}}
\corsecondderiv*
\begin{proof}
We apply Fact~\ref{lem:derivative_approximation} once to obtain $(B_1\Delta y_1)$-additive approximations $\tilde g'(y)$ and $\tilde g'(y+\Delta y)$.
These approximations are then used to compute:
\[
\tilde g''(y) = \frac{\tilde g'(y+\Delta y_2) - \tilde g'(y)}{\Delta y_2}
\]
which is a $2B_1 \frac{\Delta y_1}{\Delta y_2}$-additive approximation to the true $[g'(y+\Delta y_2) - g'(y)]/\Delta y_2$.
Now, we can apply Fact~\ref{lem:derivative_approximation} once again to obtain the final estimate,
which will have $2B_1 \frac{\Delta y_1}{\Delta y_2} + B_2\Delta y_2$ additive error.
\end{proof}

\section{Proof of Main Theorem} \label{sec:concentration_of_os}

\mainthm*
\begin{proof}

As we introduced earlier, we have the quantity $S$ that serves as a proxy for the tail weight of a distribution.
For a light-tailed distribution, the proxy quantity calculated at each bucket will lie above the threshold, whereas for an $(\alpha, \rho)$-heavy-tailed distribution, the proxy quantity will lie below the threshold for at least one of the buckets, where the threshold is dependent on $\alpha$, how quickly the hazard rate is decreasing. However, we cannot calculate $S$ directly without full knowledge of the distribution, so we consider a relaxation $\tilde S$, in which derivatives are approximated by difference quotients. Finally, we draw samples from the distribution to approximate $\tilde S$, and we call this approximation from samples $\hat S$ (Figure~\ref{fig:s_tildes_hats_errors_2}). We show that $\hat S$ requires the stated number of samples to distinguish between $(\alpha, \rho)$-heavy-tailed and light-tailed distributions. In order to prove the theorem, then, there are four steps:

\begin{enumerate}
    \item \textbf{Show that $\pmb{S}$ is a correct proxy for the tail weight.} 
    The proxy quantity stated as a condition on the tail of a distribution is equivalent to the condition on the tail of a distribution arising from the hazard rate definition (Theorem~\ref{thm:hr-test-statistic}). The gap implemented captures that the hazard rate must decrease by at least $\alpha$ in order for a distribution to be $(\alpha, \rho)$ heavy-tailed. See Section~\ref{sec:details_bucketing} and Appendix~\ref{appendix:analysis-stat-exact} for detailed discussion.
    \item \textbf{Show that $\pmb{\tilde S}$ is close to $\pmb{S}$ when there are ``enough'' buckets.} If we can estimate the numerator and the denominator of the proxy quantity accurately to an additive error, then we can estimate the proxy quantity correctly to an additive error if the proxy quantity is small; if it is big, then we know we are in the light-tailed region (Lemma~\ref{lem:s_tilde_error}). Indeed, we can accurately estimate the numerator and denominator accurately to an additive error while approximating derivatives with difference quotients due to the Lipschitzness (Fact \ref{lem:derivative_approximation}, Corollary \ref{cor:derivative-noisy-approximation}).
    \item \textbf{Show that $\pmb{\hat S}$ calculated from order statistics is close to $\pmb{\tilde S}$ when there are enough samples.} If we get a good multiplicative approximation of the lengths of the two buckets we are interested in, then we have a good multiplicative approximation to the statistic (Lemma~\ref{lem:bucket_to_stat_error}). We have multiplicative approximations for the lengths of the buckets because we can show that the order statistics used to estimate the endpoints concentrate well (Lemma~\ref{lem:additive_error_x}), which gives us an additive estimate for the error, and then we translate this to a multiplicative error (Lemma~\ref{lemma:samples}).
    \item \textbf{Determine how to limit the errors incurred in each step by explicitly calculating the number of buckets and number of samples.} In order to do this, we note that after incurring both the derivative approximation error and the sampling error, the test statistic $\hat S$ must still be on the ``correct'' side, that is, the same side of the threshold as $S$. The threshold is halfway between the lower bound for light-tailed distributions and the upper bound for $(\alpha, \rho)$-heavy-tailed distributions. 
    We ensure that we split the distribution into sufficiently many buckets that no more than a quarter of the gap is crossed due to approximation error. Further, we draw enough samples that no more a quarter of the gap is crossed due to sampling error. Thus, even when both errors are incurred, the test statistic remains on the correct side of the threshold.
\end{enumerate}


\begin{figure}
    \centering
    \vspace{5mm}
\tikzset{every picture/.style={line width=0.75pt}} 

\begin{tikzpicture}[x=0.75pt,y=0.75pt,yscale=-1,xscale=1]

\draw  [color={rgb, 255:red, 65; green, 117; blue, 5 }  ,draw opacity=1 ] (32.5,70.06) .. controls (32.5,63.44) and (37.87,58.06) .. (44.5,58.06) -- (80.5,58.06) .. controls (87.13,58.06) and (92.5,63.44) .. (92.5,70.06) -- (92.5,107.06) .. controls (92.5,113.69) and (87.13,119.06) .. (80.5,119.06) -- (44.5,119.06) .. controls (37.87,119.06) and (32.5,113.69) .. (32.5,107.06) -- cycle ;
\draw   (121,86.06) -- (246.39,86.06) -- (246.39,82.06) -- (261.5,90.03) -- (246.39,98) -- (246.39,94) -- (121,94) -- cycle ;
\draw  [color={rgb, 255:red, 65; green, 117; blue, 5 }  ,draw opacity=1 ] (294.5,70.06) .. controls (294.5,63.44) and (299.87,58.06) .. (306.5,58.06) -- (342.5,58.06) .. controls (349.13,58.06) and (354.5,63.44) .. (354.5,70.06) -- (354.5,107.06) .. controls (354.5,113.69) and (349.13,119.06) .. (342.5,119.06) -- (306.5,119.06) .. controls (299.87,119.06) and (294.5,113.69) .. (294.5,107.06) -- cycle ;
\draw   (391,86.06) -- (516.39,86.06) -- (516.39,82.06) -- (531.5,90.03) -- (516.39,98) -- (516.39,94) -- (391,94) -- cycle ;
\draw  [color={rgb, 255:red, 65; green, 117; blue, 5 }  ,draw opacity=1 ] (566.5,69.06) .. controls (566.5,62.44) and (571.87,57.06) .. (578.5,57.06) -- (614.5,57.06) .. controls (621.13,57.06) and (626.5,62.44) .. (626.5,69.06) -- (626.5,106.06) .. controls (626.5,112.69) and (621.13,118.06) .. (614.5,118.06) -- (578.5,118.06) .. controls (571.87,118.06) and (566.5,112.69) .. (566.5,106.06) -- cycle ;
\draw   (241.52,166.74) .. controls (241.55,162.07) and (239.24,159.72) .. (234.57,159.69) -- (198.81,159.44) .. controls (192.14,159.39) and (188.83,157.04) .. (188.86,152.37) .. controls (188.83,157.04) and (185.48,159.35) .. (178.81,159.3)(181.81,159.32) -- (143.05,159.05) .. controls (138.38,159.02) and (136.03,161.33) .. (136,166) ;
\draw    (187.17,135.06) -- (187.17,150.06) ;
\draw    (191.17,135.06) -- (191.17,150.06) ;
\draw   (183.18,138.46) .. controls (186.51,134.45) and (188.51,130.45) .. (189.18,126.46) .. controls (189.84,130.45) and (191.84,134.45) .. (195.18,138.46) ;
\draw   (513.52,166.74) .. controls (513.55,162.07) and (511.24,159.72) .. (506.57,159.69) -- (470.81,159.44) .. controls (464.14,159.39) and (460.83,157.04) .. (460.86,152.37) .. controls (460.83,157.04) and (457.48,159.35) .. (450.81,159.3)(453.81,159.32) -- (415.05,159.05) .. controls (410.38,159.02) and (408.03,161.33) .. (408,166) ;
\draw    (459.17,135.06) -- (459.17,150.06) ;
\draw    (463.17,135.06) -- (463.17,150.06) ;
\draw   (455.18,138.46) .. controls (458.51,134.45) and (460.51,130.45) .. (461.18,126.46) .. controls (461.84,130.45) and (463.84,134.45) .. (467.18,138.46) ;

\draw (48,81) node [anchor=north west][inner sep=0.75pt]   [align=left] {\begin{minipage}[lt]{20.113125pt}\setlength\topsep{0pt}
\begin{center}
$S$
\end{center}

\end{minipage}};
\draw (113,23) node [anchor=north west][inner sep=0.75pt]   [align=left] {\begin{minipage}[lt]{108.17312500000001pt}\setlength\topsep{0pt}
\begin{center}
approximate derivative \\by difference quotient
\end{center}

\end{minipage}};
\draw (146,104) node [anchor=north west][inner sep=0.75pt]   [align=left] {\begin{minipage}[lt]{56.96062500000001pt}\setlength\topsep{0pt}
\begin{center}
Lemma~\ref{lem:s_tilde_error}
\end{center}

\end{minipage}};
\draw (309,79) node [anchor=north west][inner sep=0.75pt]   [align=left] {\begin{minipage}[lt]{20.878125pt}\setlength\topsep{0pt}
\begin{center}
$\tilde{S}$
\end{center}

\end{minipage}};
\draw (366,23) node [anchor=north west][inner sep=0.75pt]   [align=left] {\begin{minipage}[lt]{133.673125pt}\setlength\topsep{0pt}
\begin{center}
calculate difference\\quotient from order statistics 
\end{center}

\end{minipage}};
\draw (438,70) node [anchor=north west][inner sep=0.75pt]   [align=left] {{\footnotesize \textcolor[rgb]{0.82,0.01,0.11}{error: $\displaystyle \epsilon _{2}$}}};
\draw (581,78) node [anchor=north west][inner sep=0.75pt]   [align=left] {\begin{minipage}[lt]{20.878125pt}\setlength\topsep{0pt}
\begin{center}
$\hat S$
\end{center}

\end{minipage}};
\draw (168,70) node [anchor=north west][inner sep=0.75pt]   [align=left] {{\footnotesize \textcolor[rgb]{0.82,0.01,0.11}{error: $\displaystyle \epsilon _{1}$}}};
\draw (108,174) node [anchor=north west][inner sep=0.75pt]   [align=left] {\begin{minipage}[lt]{39.578125pt}\setlength\topsep{0pt}
\begin{center}
Fact~\ref{lem:derivative_approximation}
\end{center}

\end{minipage}};
\draw (211,174) node [anchor=north west][inner sep=0.75pt]   [align=left] {\begin{minipage}[lt]{57.715pt}\setlength\topsep{0pt}
\begin{center}
Corollary~\ref{cor:bucket_bound_approximation}
\end{center}

\end{minipage}};
\draw (420,104) node [anchor=north west][inner sep=0.75pt]   [align=left] {\begin{minipage}[lt]{57.715pt}\setlength\topsep{0pt}
\begin{center}
Lemma~\ref{lem:bucket_to_stat_error}
\end{center}

\end{minipage}};
\draw (364,174) node [anchor=north west][inner sep=0.75pt]   [align=left] {\begin{minipage}[lt]{57.715pt}\setlength\topsep{0pt}
\begin{center}
Lemma~\ref{lem:additive_error_x}
\end{center}

\end{minipage}};
\draw (486,174) node [anchor=north west][inner sep=0.75pt]   [align=left] {\begin{minipage}[lt]{57.715pt}\setlength\topsep{0pt}
\begin{center}
Lemma~\ref{lemma:samples}
\end{center}

\end{minipage}};

\end{tikzpicture}
    \caption{The proxy quantity $S$ is a probe for the tail weight of the distribution based on the hazard rate. By approximating the derivatives in the proxy by difference quotients, we derive $\hat{S}$ while incurring error $\epsilon_1$. An analysis of how this error is incurred and what its value is given in the statements noted.}
    \label{fig:s_tildes_hats_errors_2}
\end{figure}

\subsection*{\textsc{Part 1/4: $\pmb{S}$ is an accurate proxy.}}
The proxy quantity derived in Theorem~\ref{thm:hr-test-statistic} considered with respect to the threshold gives a test which accurately determines whether a set of samples came from a light-tailed distribution or a distribution that is $(\alpha, \rho)$-heavy-tailed. An $(\alpha, \rho)$-heavy-tailed distribution has hazard rate decreasing at least by $\alpha$ over a region of the PDF with probability mass $\rho$, which gives us an expression for how far the statistic must be from the original threshold $1 - i/k$ in order for the hazard rate to be decreasing by at least $\alpha$. Further, if the region of mass $\rho$ lies in at least two buckets, then the proxy quantity will detect it. Thus, if a distribution is light-tailed, then \textit{all $k-3$} of the calculated proxy quantities calculated will lie above $1-i/k$; if even one of the proxies lies below $1 - i/k - \text{gap}$, then the distribution is $(\alpha, \rho)$-heavy-tailed. \par
More formal details for this can be found in Section~\ref{sec:details_bucketing} and Appendix~\ref{appendix:analysis-stat-exact}.

\subsection*{\texorpdfstring{\textsc{Part 2/4: $\pmb{\tilde S}$ is close to $\pmb{S}$.}}{\textsc{Part 2/4: ~S is close to S.}}}

In the next step, we show that approximating the derivatives in the proxy quantity by the respective difference quotients causes the test statistic to incur bounded error. In particular, we show how the error in the statistic is related to the number of buckets (which is directly related to how well we approximate derivatives; in the limit as $k \rightarrow \infty$, we recover the exact derivative). To this end, we first show that if we can estimate the numerator and the denominator of the proxy quantity to known additive errors, we can estimate the proxy quantity within an additive error, and we give the relationship between the two errors (Lemma~\ref{lem:s_tilde_error}). Then, to show that we can estimate the numerator and denominator of $\tilde S$ well, we show that the difference quotient approximation to the derivative incurs bounded error under stated Lipschitzness conditions (Fact \ref{lem:derivative_approximation}, Corollary \ref{cor:derivative-noisy-approximation}). \\

Recall that the proxy we are using is
\[
S = N/D = \left( \frac{d}{dx} F^{-1}(x)\right) / \left( \frac{d^2}{dx^2} F^{-1}(x) \right),
\]
which is being approximated by
\begin{align}
\tilde S = \tilde N / \tilde D ; \quad
\tilde N &\coloneqq \frac{F^{-1}\left( x + \frac{1}{k^2} \right) - F^{-1}\left( x\right)}{1/k^2} \\
\quad \quad 
\tilde D &\coloneqq \frac{\left(F^{-1}\left( x + \frac 1k + \frac{1}{k^2} \right) - F^{-1}\left( x + \frac{1}{k^2} \right) \right) - \left( F^{-1}\left( x + \frac{1}{k^2} \right) - F^{-1}\left( x  \right)\right)}{1/k^3}\,.
\end{align}

We show that if the error incurred in $\tilde N, \tilde D$ in approximating the derivatives by difference quotients has value $\epsilon'$, then we can estimate the value of the proxy quantity within 
an additive error of 
$\epsilon \coloneqq 6 \beta \epsilon'$, or the value of the proxy quantity is greater than 1.

\begin{framed}
\lemmaadditivestilde*

\begin{proof}
We start by noting that since $N = \frac{1}{f(F^{-1}(x))}$ and 
 the PDF is bounded by $\beta$,
 $N > 1/\beta = 6\epsilon'/\epsilon > 6\epsilon'$.
The last inequality is due to the assumption that $\epsilon < 1$. \par

First, assume that $S \geq \tilde S$. We consider the lowest possible value of $\tilde S = \tilde N/\tilde D$ and take the difference from the true value of $S$.
\begin{align}
 S - \tilde S \leq \frac ND - \frac{N-\epsilon'}{D+\epsilon'} = \frac{\epsilon' N + \epsilon' D}{D^2 + \epsilon' D}
\le \frac{\epsilon'(N + D)}{D^2} = \frac{\epsilon' S(S+1)}{N}\,.
\end{align}
We analyze the difference in two different cases:

\begin{itemize}
    \item \textbf{Case 1: $\pmb{S \leq 2}$\,.}
    In this case, the above difference is at most $\epsilon'\cdot 2 \cdot 3/N < \epsilon$
 since $N$ is at least ${6\epsilon'}/{\epsilon}$.
    \item \textbf{Case 2: $\pmb{S > 2}$\,.}
    In this case, we have $N/2 > D$ which implies $$\frac{N-\epsilon'}{D+\epsilon'} > \frac{N-\epsilon'}{N/2+\epsilon'} = 1 +  \frac{ N/2 - 2\epsilon'}{N/2 + \epsilon'} > 1\,.$$
The last inequality is due to the fact that $N > 6\epsilon'$.
\end{itemize}

Next, we assume that $S < \tilde S$. We consider the largest possible value of $\tilde S$ and take the difference from the true value of $S$ as before:
\begin{align}
\tilde S - S = \frac{N+\epsilon'}{D-\epsilon'} - \frac ND = \epsilon' \frac{N+D}{D^2 - \epsilon' D} = \epsilon' \frac{S+1}{D-\epsilon'}\,.
\end{align}
If $S > 1$, both $S$ and $\tilde S$ are larger than one, and we are done.
Otherwise, $D > N > 1/\beta$, which implies that $D-\epsilon' > {1}/{\beta} - {1}/(6\beta) > {1}/(2\beta) $,
and therefore, we can conclude that:
\[
\tilde S - S \le \epsilon' \frac{S+1}{D-\epsilon'} < 4\epsilon' \beta < 6\epsilon'\beta = \epsilon\,.
\]
Thus, the statement of the lemma is concluded. 
\end{proof}
\end{framed}

Now, we must show that we can estimate $N, D$ by $\tilde N, \tilde D$ respectively within additive error $\epsilon'$. First, we bound the additive error arising from the gradient approximation.
\derivapproxlemma*
As discussed in \ref{sec:proxy_buckets_discrete}, this allows us to determine the error bound for the difference quotient approximations of derivatives in the proxy quantity.
\corsecondderiv*
For reasons discussed in Section~\ref{sec:proxy_buckets_discrete}, we set $\Delta y_1 = 1/k^2$, and $\Delta y_2 = 1/k$,
meaning that the incurred error in the second derivative is $\frac{B_1 + B_2}{k}$. 
We set this to be less than $ \epsilon' = \frac{\epsilon}{6\beta}$ and find the value of $k$ for which this happens: $k > \frac{6\beta (2B_1 + B_2)}{\epsilon}$. Thus, we have the following corollary that specifies the number of buckets necessary to incur no more than $\epsilon$ error in estimating the proxy quantity when it is not greater than 1.
\begin{restatable}{corollary}{corDerivativeApprox}
\label{cor:bucket_bound_approximation}
If $k > 6\beta\frac{2B_1 + B_2}{\epsilon}$, and the estimate $\tilde S$ of the statistic $S$ is computed as described above, then either
both $S, \tilde{S} > 1$, or $|\tilde{S} - S| < \epsilon$.
\end{restatable}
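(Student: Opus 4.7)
The plan is to chain together the three previously established results in the natural way. Concretely, I would first bound the additive error in $\tilde N$ relative to $N = \tfrac{d}{dx}F^{-1}(x)$, and the additive error in $\tilde D$ relative to $D = \tfrac{d^2}{dx^2}F^{-1}(x)$. Then I would show that both of these errors are at most $\epsilon' = \epsilon/(6\beta)$ under the hypothesis on $k$. Finally, I would invoke Lemma \ref{lem:s_tilde_error} to translate the additive control on the numerator and denominator into the stated dichotomy on $\tilde S$ versus $S$.

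First, since $\tilde N$ is exactly a difference quotient of $F^{-1}$ with step $\Delta y_1 = 1/k^2$, and $(F^{-1})'$ is $B_1$-Lipschitz by the well-behavedness assumption (monotonicity of $(F^{-1})'$ follows from monotonicity of $f$), Fact \ref{lem:derivative_approximation} gives $|\tilde N - N| \le B_1/k^2 \le B_1/k$. For $\tilde D$, I would apply Corollary \ref{cor:derivative-noisy-approximation} with $g = F^{-1}$, $\Delta y_1 = 1/k^2$, and $\Delta y_2 = 1/k$: this is precisely the two-level difference-quotient scheme used to define $\tilde D$, and it yields
\[
|\tilde D - D| \;\le\; 2B_1\frac{\Delta y_1}{\Delta y_2} + B_2\Delta y_2 \;=\; \frac{2B_1 + B_2}{k}.
\]

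The next step is to verify that both error bounds are at most $\epsilon'$. Since $B_1/k \le (2B_1+B_2)/k$, it suffices to require $(2B_1+B_2)/k \le \epsilon/(6\beta)$, which is exactly the hypothesis $k > 6\beta(2B_1+B_2)/\epsilon$. At this point the preconditions of Lemma \ref{lem:s_tilde_error} are satisfied (assuming $\epsilon < 1$, as is the regime of interest), so the lemma immediately delivers the conclusion: either $|\tilde S - S| < \epsilon$, or both $S$ and $\tilde S$ exceed $1$.

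The whole argument is essentially a composition of off-the-shelf facts, so I do not expect any serious obstacle. The only subtlety worth double-checking is that Fact \ref{lem:derivative_approximation} and Corollary \ref{cor:derivative-noisy-approximation} are applicable throughout the range of quantiles $x$ used by the algorithm, which follows from the Lipschitz bounds on $(F^{-1})'$ and $(F^{-1})''$ holding on $[0, 1-\zeta]$ with $\zeta \le 1/(2k)$ (as noted after Algorithm \ref{alg:tester}); this ensures that the arguments $x,\, x+1/k^2,\, x+1/k,\, x+1/k+1/k^2$ all lie in the Lipschitz region for the indices $i \in \{2, \ldots, k-1\}$ actually tested.
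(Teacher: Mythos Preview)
Your proposal is correct and follows essentially the same route as the paper: bound $|\tilde N - N|$ via Fact~\ref{lem:derivative_approximation}, bound $|\tilde D - D|$ via Corollary~\ref{cor:derivative-noisy-approximation} with $\Delta y_1 = 1/k^2$ and $\Delta y_2 = 1/k$, observe that the hypothesis on $k$ makes both at most $\epsilon/(6\beta)$, and conclude by Lemma~\ref{lem:s_tilde_error}. Your extra remark verifying that all quantile arguments stay within the Lipschitz region $[0,1-\zeta]$ is a nice sanity check that the paper leaves implicit at this point.
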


\subsection*{\textsc{PART 3/4: $\pmb{\hat S}$ (calculated from order statistics) is close to $\pmb{\tilde S}$.}}


We must next show that we can approximate $\tilde S$ as defined in the previous section by the order statistics of a set of samples. First, we prove that if we can estimate the lengths of intervals accurately up to multiplicative error, we can estimate the test statistic accurately up to multiplicative error (Lemma~\ref{lem:bucket_to_stat_error}). Then, we show that we can achieve good multiplicative estimates of the lengths of the intervals using the following stages:
\begin{enumerate}
    \item We construct a map between order statistics of samples from a uniform distribution, denoted $Y_{(i)}$, (which we know concentrate well and in fact are \emph{sub-Gaussian}) and order statistics of samples from an arbitrary distribution (denoted $X_{(i)}$).
    \item Concentration of $Y_{(i)}$ implies concentration of $X_{(i)}$ up to accounting for the gap between $F(\mathbb{E}[X_{(i)}])$ and $\mathbb{E}[F(X_{(i)})]$ (Lemma~\ref{lem:additive_error_x}).
    \item The additive error between realization of order statistic and the quantity in the test statistic (which we get from the sub-Gaussianity parameter) can be converted to a multiplicative error in terms of the length of the bucket (Lemma~\ref{lemma:samples}).
\end{enumerate}



One can view $\Tilde{S}$ in terms of  two quantities:
\begin{align*}
\tilde L_1 &\coloneqq \frac{F^{-1}(y+1/k^2) - F^{-1}(y)}{1/k^2} \\ \\
\tilde L_2 &\coloneqq \frac{F^{-1}(y + 1/k + 1/k^2) - F^{-1}(y + 1/k)}{1/k^2} \, .
\end{align*}
In terms of the previous notation,
\begin{align*}
\tilde N = \tilde L_1 \quad &\quad \quad \tilde D = \frac{\tilde L_2 - \tilde L_1}{1/k} \, , \\  \\
\text{ giving us that: }
\tilde S &= \frac{\tilde L_1}{k(\tilde L_2 - \tilde L_1)} = \frac{\tilde N}{\tilde D}\, .
\end{align*}


In the following lemma, we show that if we estimate $\tilde L_1$ and $\tilde L_2$ accurately up to a multiplicative factor, and obtain $\hat{L}_1$ and $\hat{L}_2$, then $\tilde{S}$ can be approximated by
$$\hat{S} \coloneqq \frac{\hat{L}_1}{k \cdot \left(\hat{L}_2 - \hat{L}_1\right)}\,.$$

\clearpage

\begin{framed}
\lemmabuckettostaterror*
\begin{proof}
We analyze two separate cases. 
\paragraph{Case 1:  $\pmb{\tilde L_2 - \tilde L_1 \le \frac{1}{k+1} \tilde L_2}$.} Since $\tilde L_2 - \tilde L_1 > 0$ (due to montonicity of $f$), we have that $1 \ge \tilde L_1/\tilde L_2 \ge k/(k+1)$. Thus, we have that:
\begin{align}
\tilde{S} = \frac{\tilde L_1}{k(\tilde L_2 - \tilde L_1)} \ge \frac{\tilde L_1}{\frac{k}{k+1} \tilde L_2} \ge 1\,.
\end{align}
In turn, since we have $(1-\epsilon') \tilde L_1 \le \hat L_1 \le (1 + \epsilon') \tilde L_1\,$ and similarly for $\hat L_2$: 
\begin{align}
\hat{S} 
= \frac{\hat{L}_1}{k(\hat{L}_2 - \hat{L}_1)} 
&\ge \frac{(1-\epsilon') \tilde L_1}{k((1-\epsilon')(\tilde L_2 - \tilde L_1) + 2 \epsilon' \tilde L_2)} 
\ge \frac{(1-\epsilon')\frac{k}{k+1} \tilde L_2}{k (\frac{1 - \epsilon'}{k + 1} + 2 \epsilon') \tilde L_2} 
= \frac{1 - \epsilon'}{1 - \epsilon' + 2 \epsilon' (k + 1)} \\
&= 1 - \frac{2 \epsilon' (k + 1)}{1 - \epsilon' + 2 \epsilon' k}\,.
\end{align}
When $\epsilon' \le 1/(k^2 -k +1)$, $\hat{S} \ge 1 - 2/k$. Since $\tilde S > 1$ in this case, as well, the test statistic falls clearly in light-tailed territory.
 
This completes the first case. 
\paragraph{Case 2:  $\pmb{\tilde L_2 - \tilde L_1 > \frac{1}{k+1} \tilde L_2}\,,$} which can be rearranged to give $\tilde L_2 - \tilde L_1 \le \tilde L_2 < (k+1)(\tilde L_2 - \tilde L_1)$. We first show that $\hat S$ cannot be too small. In fact, we have:
\begin{align}
\hat{S} &\ge \frac{(1-\epsilon') \tilde L_1}{k((1 + \epsilon')\tilde L_2 - (1-\epsilon')\tilde L_1)} = \frac{(1-\epsilon') \tilde L_1}{k((1-\epsilon')(\tilde L_2 - \tilde L_1) + 2 \epsilon' \tilde L_2)} \\
&\ge \frac{(1-\epsilon') \tilde L_1}{k((1-\epsilon')(\tilde L_2 - \tilde L_1) + 2 \epsilon' (k+1) (\tilde L_2 - \tilde L_1))} \ge \frac{1-\epsilon'}{1 + \epsilon'(2k + 1)} \tilde{S} \ge (1-\epsilon)\tilde{S}\,.
\end{align}
where the last inequality holds if we set $\epsilon' \le \epsilon/(1 + (1-\epsilon)(2k + 1))\,.$ 
We can also find an upper bound for $\hat S$:
\begin{align}
\hat{S} &\le \frac{(1+\epsilon') \tilde L_1}{k\left((1 - \epsilon')\tilde L_2 - (1+\epsilon') \tilde L_1\right)} \le \frac{(1+\epsilon')\tilde L_1}{k\left((1 + \epsilon')(\tilde L_2 - \tilde L_1) - 2\epsilon' \tilde L_2\right)} \\
& \le \frac{(1+\epsilon')\tilde L_1}{k\left((1+\epsilon')(\tilde L_2 - \tilde L_1) - 2 \epsilon' (k+1)(\tilde L_2 - \tilde L_1)\right)} \le \frac{(1+\epsilon')}{1 - \epsilon'(2k + 1)} \tilde{S} \le (1 + \epsilon) \tilde{S} \,.
\end{align}

To satisfy this, we can set $\epsilon' \le \epsilon/(1 + (1+\epsilon)(2k + 1))\,.$ Note that this is always smaller than the condition for $\epsilon'$ derived using the lower bound.

Thus, by setting $\epsilon' =  \min \left(\frac{\epsilon}{1 + (1 + \epsilon)(2k + 1)}, \frac{1}{k^2} \right)$, we get that either we are in Case (1), where $\tilde{S} > 1, \hat{S} > 1 - 2/k$ or we can get an $1 \pm \epsilon$ multiplicative approximation.
\end{proof}
\end{framed}

Now that we know that we can accurately estimate $\tilde S$ by $\hat S$ if we have good estimates of the values of $\hat L_1, \hat L_2$, we will next show that we are able to estimate $\hat L_1, \hat L_2$ accurately, proceeding in 3 stages as discussed above.

\paragraph{1. Mapping to uniform distribution} To prove the concentration of the $X_{(i)}$s, we transform the samples from $f$ to samples from a uniform distribution over $[0,1]$ (Figure~\ref{fig:mapping}). The mapping is selected in such a way that it does not change the order of the elements. This fact implies that the order statistic with rank $i$, $X_{(i)}$ will be mapped to the order statistic with the same rank from the samples from the uniform distribution. Then, we show the concentration of order statistics according to the uniform distribution to establish the result.\par

\begin{figure}
    \centering
        \includegraphics[width=0.8\textwidth]{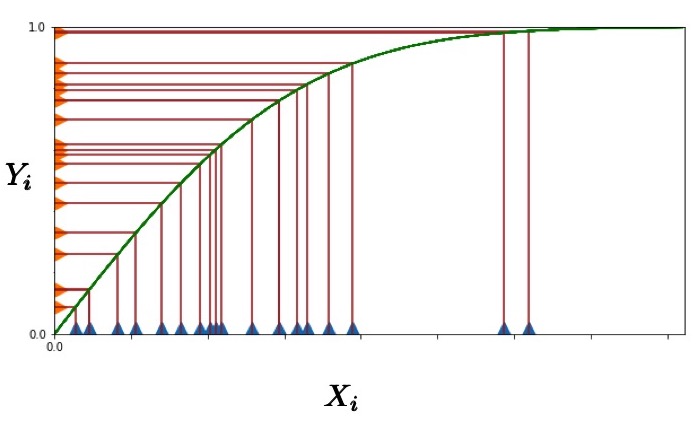} 
   
    \caption{In order to analyze the concentration of order statistic, we construct a map between samples drawn from a uniform distribution (orange points on vertical axis) and samples drawn from an arbitrary distribution (blue points on horizontal axis) with CDF $F$ (green curve). This map preserves several important properties as discussed in the text of the paper.
    }
    \label{fig:mapping}
\end{figure}

Formally, we map every sample $X$ to $Y = F(X)$. Since we assumed that $f$ is well-behaved, this mapping is a bijection. If we draw a random $X$ from $f$, it is not too hard to see that $Y = F(X)$ comes from a uniform distribution over $[0,1]$. In particular, for every $y \in [0,1]$, we have:
$$\Pr[Y]{Y \leq y} = \Pr[X]{X \leq F^{-1}(y)} = F(F^{-1}(y)) = y\,.$$
By the monotonicity of $F$, it is not hard to see that if we transform a set of $n$ samples from $f$, then $X_{(i)}$ will be mapped to the $i$-th element in the sorted list of the $Y$'s, denoted by $Y_{(i)}$. On the other hand, one can view $Y_{(i)}$ as the $i$-th order statistic among $n$ samples from a uniform distribution over $[0,1]$. In~\cite{marchal_sub-gaussianity_2017}, Marchal and Arbel have proved that $Y_{(i)} - \E{Y_{(i)}}$ is a sub-Gaussian random variable. 
We have used this fact to show the concentration of $X_{(i)}$'s around $F^{-1}(\E{Y_{(i)}}) = F^{-1}(i/(n+1))$. With appropriate choice $i$ and $n$, we can show the order statistic concentrates around the endpoint of the buckets as desired.

\paragraph{2. Concentration of $\pmb{X_{(i)}}$} Given this mapping, we can show that the order statistics from an arbitrary distribution concentration around $F^{-1}(i/(n+1))$. To do this, we use the sub-Gaussianity of $Y_{(i)}$ around $i/(n+1)$, transform this to a statement about $X_{(i)}$ related to $F^{-1}(i/(n+1))$, which is the endpoint of a bucket, and analyze terms separately. To this end, we present Lemma~\ref{lem:additive_error_x} and its proof.

\begin{figure}
    \centering
    \includegraphics[width=0.9 \textwidth]{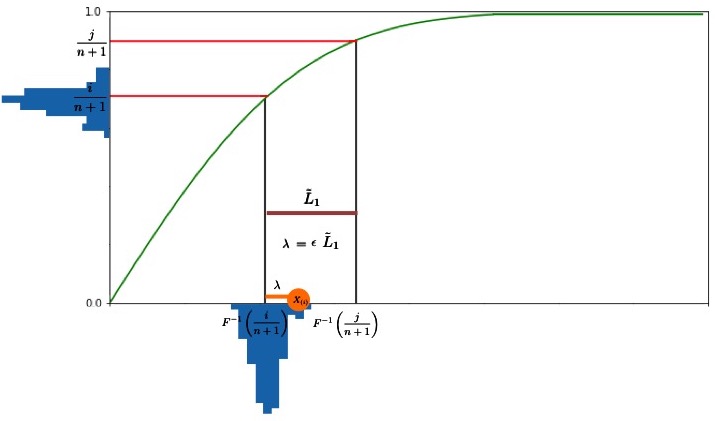}
    \caption{In Lemma~\ref{lem:additive_error_x}, we show that since $Y_{(i)}$ concentrates around $i/(n+1)$ (as evidenced by the blue histogram on the vertical axis), $X_{(i)}$ must also concentrate around $F^{-1}(i/(n+1))$ (as evidenced by the blue histogram on the horizontal axis). Then, in Lemma~\ref{lemma:samples}, we show that the distance that the order statistic (orange circle) falls from $F^{-1}(i/(n+1))$ (orange bar labeled $\lambda$) can converted to a multplicative error in terms of the length of the interval $\hat L_1$ (maroon bar).}
    \label{fig:multiplicative_error}
\end{figure}

\begin{restatable}{lemmma}{lemmaconcentrationx} \label{lem:additive_error_x}
Suppose $f$ is a well-behaved distribution. 
For $\epsilon \le \frac{B_1 f\left(F^{-1}\left(\frac{i}{n+1}\right)\right) + k}{k^2 f\left(F^{-1}\left(\frac{i}{n+1}\right)\right)}$, we have:
$$
\Pr{\left|X_{(i)} - F^{-1}\left(\frac{i}{n+1}\right)\right| \ge \epsilon} \le 2  \exp \left( -\epsilon^2 / \lambda^2 \right)\,, \quad \lambda \coloneqq \frac{B_1}{2 \,(n + 2) \, \left(\sqrt{4B_1 + 1} - 1\right) \, f(F^{-1}\left(\frac{i}{n+1}\right))}\,.
$$
\end{restatable}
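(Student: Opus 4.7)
The plan is to reduce the problem to the already-understood behavior of uniform order statistics via the probability integral transform $Y = F(X)$. Since $f$ is well-behaved, $F$ is strictly increasing and continuous on its support, so the map is a bijection that preserves order; consequently $Y_{(i)} = F(X_{(i)})$, and the $Y_{(j)}$'s are distributed exactly as the order statistics of $n$ i.i.d.\ samples from the uniform distribution on $[0,1]$. In particular, $Y_{(i)} \sim \mathrm{Beta}(i, n-i+1)$ has mean $i/(n+1)$, and the event whose probability we want to bound, namely $|X_{(i)} - F^{-1}(i/(n+1))| \ge \epsilon$, rewrites as $|F^{-1}(Y_{(i)}) - F^{-1}(i/(n+1))| \ge \epsilon$.

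Next I would invoke the Marchal--Arbel sub-Gaussianity result for Beta distributions to assert that $Y_{(i)} - i/(n+1)$ is sub-Gaussian with variance proxy on the order of $1/(n+2)$, yielding an exponential tail bound of the form $\Pr{|Y_{(i)} - i/(n+1)| \ge t} \le 2\exp(-c(n+2)t^2)$ for an explicit constant $c$. It then remains to translate a deviation in $Y_{(i)}$ into a deviation in $X_{(i)}$. For this I would expand $F^{-1}$ to second order at $i/(n+1)$. Writing $f_\star := f(F^{-1}(i/(n+1)))$, one has $(F^{-1})'(i/(n+1)) = 1/f_\star$, and the well-behaved hypothesis that $(F^{-1})'$ is $B_1$-Lipschitz produces the pointwise bound
\begin{equation*}
|F^{-1}(Y_{(i)}) - F^{-1}(i/(n+1))| \;\le\; \frac{|Y_{(i)} - i/(n+1)|}{f_\star} + \frac{B_1}{2}\,|Y_{(i)} - i/(n+1)|^2,
\end{equation*}
so the event we care about forces $t := |Y_{(i)} - i/(n+1)|$ to satisfy the quadratic inequality $\tfrac{B_1}{2} t^2 + t/f_\star \ge \epsilon$. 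Solving for $t$ and plugging the resulting positive-root threshold into the sub-Gaussian tail bound from the previous step produces a bound of the desired form, with $\lambda$ obtained by algebraic simplification in the regime carved out by the hypothesis $\epsilon \le (B_1 f_\star + k)/(k^2 f_\star)$.

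The main obstacle is precisely this translation step: $F^{-1}$ is not globally Lipschitz, especially near $y = 1$ where $1/f(F^{-1}(y))$ can blow up, so a sub-Gaussian bound on $Y_{(i)}$ cannot be transferred naively to $X_{(i)}$. The quadratic-inequality argument rescues us because it uses only the \emph{local} behavior of $F^{-1}$ around $i/(n+1)$, but it relies on $(F^{-1})'$ being $B_1$-Lipschitz on a neighborhood of $i/(n+1)$; this is valid only when $i/(n+1) \le 1-\zeta$, which is consistent with the algorithm's choice to discard the last bucket of the coarse partition. A secondary technical point is choosing the correct (positive) root of the quadratic and verifying that the hypothesis on $\epsilon$ is exactly what is needed to simplify the expression $(\sqrt{1 + 2B_1\epsilon f_\star^2} - 1)/(B_1 f_\star)$ into a clean linear factor of $\epsilon$, so that the tail bound takes the form stated in the lemma.
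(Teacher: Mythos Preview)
Your strategy is the same as the paper's: apply the probability integral transform $Y_{(i)}=F(X_{(i)})$, invoke the Marchal--Arbel sub-Gaussian bound for $\mathrm{Beta}(i,n-i+1)$, and then translate a deviation of $Y_{(i)}$ from $i/(n+1)$ into a deviation of $X_{(i)}$ using local control of $(F^{-1})'$. The only substantive difference is in how that translation is carried out. You use a second-order Taylor bound and solve the resulting quadratic $\tfrac{B_1}{2}t^2+t/f_\star\ge\epsilon$ in one shot; the paper instead writes $F^{-1}(Y_{(i)})-F^{-1}(i/(n+1))=(F^{-1})'(y)\,(Y_{(i)}-i/(n+1))$ via the mean value theorem and splits into three cases $E_1,E_2,E_3$ according to the sign and size of $Y_{(i)}-i/(n+1)$, then optimizes a free parameter $\delta$ (which ends up solving essentially the same quadratic, without your factor $1/2$). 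The paper's decomposition also exploits an asymmetry you do not: when $Y_{(i)}<i/(n+1)$, monotonicity of $(F^{-1})'$ alone gives $(F^{-1})'(y)\le 1/f_\star$, so no Lipschitz hypothesis is needed on that side.

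There is a small gap in your version of the translation step. Your Taylor inequality $|F^{-1}(Y_{(i)})-F^{-1}(i/(n+1))|\le t/f_\star+\tfrac{B_1}{2}t^2$ requires $(F^{-1})'$ to be $B_1$-Lipschitz on the entire segment between $i/(n+1)$ and $Y_{(i)}$, but the well-behaved hypothesis only guarantees this on $[0,1-\zeta]$. Ensuring $i/(n+1)\le 1-\zeta$ is not enough: $Y_{(i)}$ is random and can land in $(1-\zeta,1]$, where your bound need not hold and the implication ``$|X_{(i)}-F^{-1}(i/(n+1))|\ge\epsilon \Rightarrow \tfrac{B_1}{2}t^2+t/f_\star\ge\epsilon$'' may fail. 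The paper's case $E_3$ (the event $Y_{(i)}-i/(n+1)\ge\delta$) is precisely what covers this: it abandons the Lipschitz argument and simply bounds $\Pr{E_3}$ by the sub-Gaussian tail. To make your argument complete you would need the same extra case (or check that whenever $Y_{(i)}>1-\zeta$ the resulting $t$ is already large enough to satisfy your quadratic under the stated upper bound on $\epsilon$).
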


\begin{proof}
As we explained earlier, we map the samples from $f$, $x_1, x_2, \ldots, x_n$, to $y_1, y_2, \ldots, y_n$ where $y_i = F(x_i)$. We have shown that one can view $y_i$'s as random samples from the uniform distribution over $[0,1]$. Since the mapping preserve the order, the $i$-th order statistic $X_{(i)}$ is mapped to the $i$-th order statistic among $y_i$'s, which we denote by $Y_{(i)}$. Now, we show the concentration of the order statistics from the uniform distribution. Note that the distribution over the order statistic is a beta distribution with parameters $\alpha \coloneqq i$ and $\beta \coloneqq n-i+1$:
$$\Pr{Y_{(i)} = y} = n \cdot \binom{n-1}{i-1} \cdot y^{i-1} \cdot (1-y)^{n - i}\,.$$
Marchal and Arbel have shown that a (centered) random variable drawn from the beta distribution is a sub-Gaussian random variable with parameter $\sigma^2 = 1/4(\alpha + \beta + 1) = 1/4(n+2)$ (Theorem 2.1 in~\cite{marchal_sub-gaussianity_2017}). In other words, for any $\lambda \in \mathbb{R}$, we have:

$$
\E{\exp\left(\lambda \cdot \left(Y_{(i)} - \E{Y_{(i)}}\right)\right)} \leq \exp \left(\frac{\lambda^2}{32\,(n+2)^2}\right)\,.
$$

Then, using equivalent definitions of sub-Gaussian random variables (See Proposition 2.5.2 in~\cite{vershynin18}.), we obtain:

$$
\Pr{ \left|Y_{(i)} - \E{Y_{(i)}}\right| \geq t} \leq 2\,\exp\left(
-16 \, (n + 2)^2 \,t^2
\right)\quad \quad \quad \forall t \geq 0\,.
$$

In the next step, using the Lipschitzness of $F^{-1}$, we relate the probability that $X$ deviates from $F^{-1}\left(\E{F\left(X_{(i)}\right)}\right)$ to the probability that $Y_{(i)}$ deviates from its expectation. Then, we use the sub-Gaussianity of $Y_{(i)}$'s to obtain the desired bound. 
Let $T$ denote $F^{-1}\left(\E{F\left(X_{(i)}\right)}\right) = F^{-1}\left(\E{Y_{(i)}}\right) = F^{-1}({i}/{n+1})$.
Our goal here is to find a bound of the following form for some parameter $\lambda$ which we determine later. 
$$\Pr{\left|X_{(i)} - F^{-1}\left(\frac{i}{n+1}\right) \right| \geq \epsilon} \leq 2\exp\left(-\epsilon^2/\lambda^2\right)\,.$$

Using the definition of the mapping, we have:

\begin{align}\label{eq:subG_X_i}
\Pr{\left|X_{(i)} - F^{-1}\left(\frac{i}{n+1}\right)\right| \geq \epsilon}
=
\Pr{\left|F^{-1}\left(Y_{(i)}\right) - F^{-1}\left(\E{Y_{(i)}}\right)\right| \geq \epsilon}
\end{align}

Using the mean value theorem, we know that there exists a $y$ between $Y_{(i)}$ and $\E{Y_{(i)}}$ such that: 
$$ {F^{-1}}'(y) \cdot \left(Y_{(i)} - \E{Y_{(i)}}\right) = F^{-1}\left(Y_{(i)}\right) - F^{-1}\left(\E{Y_{(i)}}\right)\,.$$
To bound the probability in \cref{eq:subG_X_i} via the above identity, we need to find an upper bound on ${F^{-1}}'(y)$. Note that since $F$ is concave, ${F^{-1}}'(y)$ is equal to $1/f(F^{-1}(y))$, and it is an increasing function.  Now, we consider a few different cases based on the interval that $y$ belongs to. 
More precisely, for some parameter $0 < \delta < \Theta(1/k)$ which we define later, we define the following events: 
\begin{itemize}
    \item $E_1 \coloneqq $ indicates the event where $Y_{(i)}  - \E{Y_{(i)}}  \leq 0$.
    \item $E_2 \coloneqq $ indicates the event where $0 < Y_{(i)}  - \E{Y_{(i)}} < \delta$.
    \item $E_3 \coloneqq $ indicates the event where $Y_{(i)}  - \E{Y_{(i)}} \geq \delta$.
\end{itemize}
Then, we rewrite the probability in \cref{eq:subG_X_i} in terms of conditional probabilities as follows:
\begin{equation}\label{eq:3pieces}
\begin{split}
\Pr{\left|X_{(i)} - F^{-1}\left(\frac{i}{n+1}\right)\right| \geq \epsilon}
& =
\Pr{\left|{F^{-1}}'(y) \cdot \left(Y_{(i)} - \E{Y_{(i)}}\right)\right| \geq \epsilon 
}
\\ &  =  
\Pr{\left.\left|{F^{-1}}'(y) \cdot \left(Y_{(i)} - \E{Y_{(i)}}\right)\right| \geq \epsilon \:\right\rvert\: E_1} \cdot \Pr{E_1}
\\ &  +  
\Pr{\left.\left|{F^{-1}}'(y) \cdot \left(Y_{(i)} - \E{Y_{(i)}}\right)\right| \geq \epsilon \:\right\rvert\: E_2} \cdot \Pr{E_2}
\\ &  +  
\Pr{\left.\left|{F^{-1}}'(y) \cdot \left(Y_{(i)} - \E{Y_{(i)}}\right)\right| \geq \epsilon \:\right\rvert\:  E_3 } \cdot \Pr{  E_3}\,.
\end{split}
\end{equation}
Below, we bound each of the above terms:
\begin{enumerate}
\item \textbf{First term:} In the case that $E_1$ holds,  $y$ is at most $\E{Y_{(i)}}$. Using the monotonicity of $f$, we have  ${F^{-1}}'(y)$ is at most $1/f\left(F^{-1}\left(\E{Y_{(i)}}\right)\right) = 1/f(F^{-1}\left({i}/{n+1}\right))$. This bound implies that
\begin{align*}
& \Pr{\left.\left|{F^{-1}}'(y) \cdot \left(Y_{(i)} - \E{Y_{(i)}}\right)\right| \geq \epsilon \:\right\rvert\: E_1} \cdot \Pr{ E_1}
\\ & \quad \quad \leq 
\Pr{\left. Y_{(i)} - \E{Y_{(i)}} \leq - \epsilon \cdot 
f\left(F^{-1}\left(\frac{i}{n+1}\right)\right)\:\right\rvert\: E_1}
\cdot \Pr{ E_1 }
\\ & \quad \quad =
\Pr{Y_{(i)} - \E{Y_{(i)}} \leq - \epsilon \cdot  f\left(F^{-1}\left(\frac{i}{n+1}\right)\right)}
\\ & \quad \quad - 
\Pr{\left. Y_{(i)} - \E{Y_{(i)}} \leq - \epsilon \cdot 
f\left(F^{-1}\left(\frac{i}{n+1}\right)\right)\:\right\rvert\: \overline{E_1} }
\cdot \Pr{ \overline{E_1} }\,,
\end{align*}

where the bar in $\overline{E_1}$ indicates the complement event of $E_{1}$. Note that $Y_{(i)} - \E{Y_{(i)}}$ cannot be smaller than a negative quantity if $E_1$ does not happen. Thus, the last term above is zero, and we have:
\begin{equation}
\label{eq:1stBound}
\begin{split}
    \Pr{\left.\left|{F^{-1}}'(y) \cdot \left(Y_{(i)} - \E{Y_{(i)}}\right)\right| \geq \epsilon \:\right\rvert\: E_1} \cdot & \Pr{ E_1} \\
&  \leq
\Pr{Y_{(i)} - \E{Y_{(i)}} \leq - \epsilon \cdot  f\left(F^{-1}\left(\frac{i}{n+1}\right)\right)}
    \end{split}
\end{equation}

\item \textbf{Second term:} In the case that $E_2$ holds, $y$ is at most $\E{Y_{(i)}}  + \delta$. 
Note that since $\delta $ is smaller than $1/k$, and we assume that the function is Lipschitz in the bucket which starts at $\E{Y_{(i)}}$, then we can use the Lipschitzness assumption as follows: 
$${F^{-1}}'(y) \leq {F^{-1}}'\left(\E{Y_{(i)} + \delta}\right) \leq 1/f\left(F^{-1}\left(\frac{i}{n+1}\right)\right) + \delta\,B_1\,.$$
Now, we have:
\begin{equation} 
\label{eq:2nBound}
\begin{split}
& \Pr{\left.\left|{F^{-1}}'(y) \cdot \left(Y_{(i)} - \E{Y_{(i)}}\right)\right| \geq \epsilon \:\right\rvert\: E_2} \cdot \Pr{E_2}
\\ & \quad \quad \leq
\Pr{\left.Y_{(i)} - \E{Y_{(i)}} \geq \frac{\epsilon}{\frac{1}{f\left(F^{-1}\left(\frac{i}{n+1}\right)\right)} + \delta B_1} \:\right\rvert\: E_2} \cdot \Pr{E_2}\,.
\end{split}
\end{equation}
We will use the above bound later and combine it with the bound for the third term.

\item \textbf{Third term:}  In the case where $E_3$ holds, we do not have an upper bound for ${F^{-1}}'(y)$. However, we exploit the fact that this case happens with a small probability.
\begin{equation} \label{eq:3rdBound}
\Pr{\left.\left|{F^{-1}}'(y) \cdot \left(Y_{(i)} - \E{Y_{(i)}}\right)\right| \geq \epsilon \:\right\rvert\: E_3} \cdot \Pr{E_3}
\leq
\Pr{E_3} = \Pr{Y_{(i)} - \E{Y_{(i)}} > \delta }\,.
\end{equation}

\item \textbf{Combining the bounds for the last two terms:}
Now we combine the two bounds above in \cref{eq:2nBound} and \cref{eq:3rdBound}. Note that if $\delta$ is smaller than $\epsilon/(1/f(F^{-1}\left(\frac{i}{n+1}\right)) + \delta B_1)$, then the right hand side in \cref{eq:2nBound} is equal to: 
$$
\Pr{\left.Y_{(i)} - \E{Y_{(i)}} \geq \frac{\epsilon}{\frac{1}{f\left(F^{-1}\left(\frac{i}{n+1}\right)\right)} + \delta B_1} \geq \delta \:\right\rvert\: 0 < Y_{(i)} - \E{Y_{(i)}} < \delta 
} \cdot \Pr{E_2} = 0\,.
$$
Thus, when delta is small, after combining the last two bounds, we get:
\begin{align*}
&\Pr{\left.\left|{F^{-1}}'(y) \cdot \left(Y_{(i)} - \E{Y_{(i)}}\right)\right| \geq \epsilon \:\right\rvert\: E_2} \cdot \Pr{E_2}
\\ & \quad \quad  +
\Pr{\left.\left|{F^{-1}}'(y) \cdot \left(Y_{(i)} - \E{Y_{(i)}}\right)\right| \geq \epsilon \:\right\rvert\: E_3} \cdot \Pr{E_3}
\\ & \quad \quad  \leq
\Pr{Y_{(i)} - \E{Y_{(i)}} \geq \delta}
\end{align*}
On the other hand, if $\delta$ is at least $\epsilon/(1/f(F^{-1}\left(\frac{i}{n+1}\right)) + \delta B_1)$, then
by combining \cref{eq:2nBound} and \cref{eq:3rdBound}, we have: 
\begin{align*}
&\Pr{\left.\left|{F^{-1}}'(y) \cdot \left(Y_{(i)} - \E{Y_{(i)}}\right)\right| \geq \epsilon \:\right\rvert\: E_2} \cdot \Pr{E_2}
\\ & \quad \quad  +
\Pr{\left.\left|{F^{-1}}'(y) \cdot \left(Y_{(i)} - \E{Y_{(i)}}\right)\right| \geq \epsilon \:\right\rvert\: E_3} \cdot \Pr{E_3}
\\ & \quad \quad  \leq
\Pr{\left.Y_{(i)} - \E{Y_{(i)}} \geq \frac{\epsilon}{\frac{1}{f(F^{-1}\left(\frac{i}{n+1}\right))} + \delta B_1} \:\right\rvert\: E_2} \cdot \Pr{E_2} + \Pr{E_3}
\\ & \quad \quad  \leq
\Pr{\left.Y_{(i)} - \E{Y_{(i)}} \geq \frac{\epsilon}{\frac{1}{f(F^{-1}\left(\frac{i}{n+1}\right))} + \delta B_1} \:\right\rvert\: E_2} \cdot \Pr{E_2} 
\\ & \quad \quad +
\underbrace{\Pr{\left.Y_{(i)} - \E{Y_{(i)}} \geq \frac{\epsilon}{\frac{1}{f(F^{-1}\left(\frac{i}{n+1}\right))} + \delta B_1} \:\right\rvert\: E_3}}_{=1}\cdot \Pr{E_3}
\\ & \quad \quad =
\Pr{\left.Y_{(i)} - \E{Y_{(i)}} \geq \frac{\epsilon}{\frac{1}{f(F^{-1}\left(\frac{i}{n+1}\right))} + \delta B_1} \:\right\rvert\: Y_{(i)} - \E{Y_{(i)}} > 0}\cdot \Pr{\E{Y_{(i)}} - Y_{(i)} < 0}
\\ & \quad \quad =
\Pr{Y_{(i)} - \E{Y_{(i)}} \geq \frac{\epsilon}{\frac{1}{f(F^{-1}\left(\frac{i}{n+1}\right))} + \delta B_1}}
\,.
\end{align*}
Note that the first probability in the last line above is exactly one, since we are conditioning on the fact that $Y_{(i)} - \E{Y_{(i)}}$ is at least $\delta$. Now, we get:

\begin{align*}
&\Pr{\left.\left|{F^{-1}}'(y) \cdot \left(Y_{(i)} - \E{Y_{(i)}}\right)\right| \geq \epsilon \:\right\rvert\: E_2} \cdot \Pr{E_2}
\\ & \quad \quad  +
\Pr{\left.\left|{F^{-1}}'(y) \cdot \left(Y_{(i)} - \E{Y_{(i)}}\right)\right| \geq \epsilon \:\right\rvert\: E_3} \cdot \Pr{E_3}
\\ & \quad \quad =
\Pr{\left.Y_{(i)} - \E{Y_{(i)}} \geq \frac{\epsilon}{\frac{1}{f(F^{-1}\left(\frac{i}{n+1}\right))} + \delta B_1} \:\right\rvert\: \E{Y_{(i)}} - Y_{(i)} < 0}\cdot \Pr{\E{Y_{(i)}} - Y_{(i)} < 0}
\\ & \quad \quad =
\Pr{Y_{(i)} - \E{Y_{(i)}} \geq \frac{\epsilon}{\frac{1}{f(F^{-1}\left(\frac{i}{n+1}\right))} + \delta B_1}}
\,.
\end{align*}

Now, to obtain the best bound we solve for $\delta = \epsilon/(1/f(F^{-1}\left(\frac{i}{n+1}\right)) + \delta B_1)$, and set $\delta$ to the positive solution of this quantity. Note that since $\epsilon$ is bounded from above, it is not hard to show that $\delta$ is at most $1/k$ as required in the beginning. 
Then, we achieve:
\begin{align*}
&\Pr{\left.\left|{F^{-1}}'(y) \cdot \left(Y_{(i)} - \E{Y_{(i)}}\right)\right| \geq \epsilon \:\right\rvert\: E_2} \cdot \Pr{E_2}
\\ & \quad \quad  +
\Pr{\left.\left|{F^{-1}}'(y) \cdot \left(Y_{(i)} - \E{Y_{(i)}}\right)\right| \geq \epsilon \:\right\rvert\: E_3} \cdot \Pr{E_3}
\\ & \quad \quad  \leq \Pr{Y_{(i)} - \E{Y_{(i)}} \geq \delta}
\\ & \quad \quad  \leq
\Pr{Y_{(i)} - \E{Y_{(i)}}  \geq \frac{\sqrt{4 \,f\left( F^{-1}\left(\frac{i}{n+1}\right) \right)^2 \,B_1 \,\epsilon +1} - 1}{2\,f\left(F^{-1}\left(\frac{i}{n+1}\right)\right)\,B_1}}
\\ & \quad \quad  \leq
\Pr{Y_{(i)} - \E{Y_{(i)}}  \geq \frac{\left(\sqrt{4 \, f\left(F^{-1}\left(\frac{i}{n+1}\right)\right)^2\, B_1 + 1} - 1\right)\cdot \epsilon}{2\,f\left(F^{-1}\left(\frac{i}{n+1}\right)\right) \,B_1} \geq \frac{\sqrt{4\,B_1 + 1} - 1}{2\,B_1} \cdot \epsilon \cdot f\left(F^{-1}\left(\frac{i}{n+1}\right)\right) 
}
\,.
\end{align*}
\end{enumerate}

Now, we put all  the pieces in \cref{eq:3pieces} back together. We combine the above bound with the bound we have obtained earlier in \cref{eq:1stBound} and get the following:
\begin{align*}
\Pr{\left|X_{(i)} - F^{-1}\left(\frac{i}{n+1}\right)\right| \geq \epsilon}
& \leq 
\Pr{Y_{(i)} - \E{Y_{(i)}} \leq -\, \epsilon \cdot  f\left(F^{-1}\left(\frac{i}{n+1}\right)\right)}
\\ &   + 
\Pr{Y_{(i)} - \E{Y_{(i)}}  \geq \frac{\sqrt{4\,B_1 + 1} - 1}{2\,B_1} \cdot \epsilon \cdot f\left(F^{-1}\left(\frac{i}{n+1}\right)\right)  }
\\ &   \leq
\Pr{\left|Y_{(i)} - \E{Y_{(i)}} \right| \geq \frac{\sqrt{4\,B_1 + 1} - 1}{2\,B_1} \cdot f\left(F^{-1}\left(\frac{i}{n+1}\right)\right) \cdot \epsilon }\,,
\end{align*}
where the last inequality is due to the fact that $\sqrt{4\,B_1 + 1 } - 1/(2 B_1)$ is always smaller than one. Now, we use the  sub-Gaussianity of $Y_{(i)} - \E{Y_{(i)}}$, and get

\begin{align*}
    \Pr{\left|X_{(i)} - F^{-1}\left(\frac{i}{n+1}\right)\right| \geq \epsilon}
& \leq
2\,\exp\left(- \frac{16\, (n+2)^2 \,\left(\sqrt{4\,B_1 + 1} - 1\right)^2 \,f\left(F^{-1}\left(\frac{i}{n+1}\right)\right)^2}{4\,B_1^2} \cdot  \epsilon^2\right) \,.
\\ 
& \leq 2 \exp\left(-\epsilon^2/\lambda^2\right)
\end{align*}
where the last line holds when we set the parameter $\lambda$ to be
\begin{equation} \label{eqn:subgauss_param_x-t}
\lambda \coloneqq \frac{B_1}{2 \,(n + 2) \, \left(\sqrt{4B_1 + 1} - 1\right) \, f\left(F^{-1}\left(\frac{i}{n+1}\right)\right)}\,.
\end{equation}
\end{proof}

\paragraph{3. Converting error in $X_{(i)}$ to error in statistic $\hat S$:} We can convert the additive error given in Lemma~\ref{lem:additive_error_x} into a multiplicative error in terms of the difference between $F^{-1}\left( j/(n+1) \right) - F^{-1}\left( i/(n+1) \right)$. This quantity represents the length of an interval in $\tilde{S}$. In Lemma~\ref{lemma:samples}, we show how to convert the additive error from Lemma~\ref{lem:additive_error_x} to a multiplicative error in terms of the length of the interval. 

\begin{restatable}{lemmma}{lemmasamplesconcentration} \label{lemma:samples}
For a sufficiently large parameter $\tau>1$ and number of buckets $k>2B_1\beta$, suppose we have $n = O(log(k) \cdot \sqrt{\sqrt{B_1} + 1} \cdot \tau) $ samples from a well-behaved distribution $f$. Then, we have: 
\begin{align*}
\Pr{\left|X_{(i)} - F^{-1}\left(\frac{i}{n+1}\right)\right| \geq \frac{2}{\tau (j-i)/(n+1)} \left(F^{-1}\left(\frac{j}{n+1}\right) - F^{-1}\left(\frac{i}{n+1}\right)\right) } \leq \frac{0.1}{4\,k^2}
\end{align*}
\end{restatable}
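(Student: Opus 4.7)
The strategy is to reduce this multiplicative concentration statement to the additive sub-Gaussian bound of Lemma~\ref{lem:additive_error_x}, using monotonicity of $f$ to lower-bound the target deviation by a multiple of $1/f(F^{-1}(i/(n+1)))$, which will cleanly cancel against the same factor appearing in the sub-Gaussian parameter $\lambda$. This cancellation is what makes the resulting bound depend on the distribution only through $B_1$, not through the (possibly tiny) PDF value at that quantile.

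The first step is to lower-bound the bucket length. Since $f$ is non-increasing (Definition~\ref{def:well_behaved}), the map $y \mapsto 1/f(F^{-1}(y)) = (F^{-1})'(y)$ is non-decreasing on $[0,1)$, so by the fundamental theorem of calculus
\begin{equation*}
F^{-1}\!\left(\tfrac{j}{n+1}\right) - F^{-1}\!\left(\tfrac{i}{n+1}\right) \;=\; \int_{i/(n+1)}^{j/(n+1)} \frac{dy}{f(F^{-1}(y))} \;\geq\; \frac{j-i}{n+1} \cdot \frac{1}{f(F^{-1}(i/(n+1)))}.
\end{equation*}
Hence the target deviation $\epsilon^{\star} := \tfrac{2(n+1)}{\tau(j-i)}\bigl(F^{-1}(j/(n+1)) - F^{-1}(i/(n+1))\bigr)$ satisfies $\epsilon^{\star} \geq \epsilon_0 := 2/\bigl(\tau\, f(F^{-1}(i/(n+1)))\bigr)$.

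The second step is to invoke Lemma~\ref{lem:additive_error_x} with the smaller deviation $\epsilon_0$. Its admissibility hypothesis $\epsilon \leq (B_1 f + k)/(k^2 f)$ reduces, for $\epsilon = \epsilon_0$, to a condition of the form $\tau \gtrsim k$, which holds by the "$\tau$ sufficiently large" assumption. Monotonicity of tail probabilities then gives
\begin{equation*}
\Pr{|X_{(i)} - F^{-1}(i/(n+1))| \geq \epsilon^{\star}} \;\leq\; \Pr{|X_{(i)} - F^{-1}(i/(n+1))| \geq \epsilon_0} \;\leq\; 2\exp(-\epsilon_0^2/\lambda^2),
\end{equation*}
where $\lambda = B_1\big/\bigl(2(n+2)(\sqrt{4B_1+1}-1)\, f(F^{-1}(i/(n+1)))\bigr)$. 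The key payoff is that the factor $f(F^{-1}(i/(n+1)))$ cancels, leaving $\epsilon_0/\lambda = 4(n+2)(\sqrt{4B_1+1}-1)/(\tau B_1)$, a quantity independent of the quantile.

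Finally, I would pick $n$ so that $2\exp(-\epsilon_0^2/\lambda^2) \leq 0.1/(4k^2)$, i.e.\ $(\epsilon_0/\lambda)^2 \gtrsim \log k$. Rearranging gives $n \gtrsim \tau\,\sqrt{\log k}\, \cdot B_1/(\sqrt{4B_1+1}-1)$; since $B_1/(\sqrt{4B_1+1}-1)$ is $O(1)$ as $B_1 \to 0$ and $O(\sqrt{B_1})$ as $B_1 \to \infty$, the bound can be written in the claimed form in terms of $\sqrt{B_1}+1$. The only substantive step is the monotonicity-driven length lower bound in Step~1; without it, the ratio $\epsilon^{\star}/\lambda$ would not cleanly decouple from the PDF value at the quantile, and the tail estimate would degrade uncontrollably in the tail of the distribution where $f$ is small. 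The rest is invoking Lemma~\ref{lem:additive_error_x} and algebraic bookkeeping.
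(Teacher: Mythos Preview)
Your proposal follows essentially the same architecture as the paper's proof: lower-bound the bucket length by a multiple of $1/f(F^{-1}(i/(n+1)))$, invoke Lemma~\ref{lem:additive_error_x}, observe that the PDF value cancels against $\lambda$, and then solve for $n$. The one substantive difference is in how you obtain the length lower bound. The paper appeals to Fact~\ref{lem:derivative_approximation} (the Lipschitz bound on $(F^{-1})'$), which produces the inequality
\[
\frac{F^{-1}(j/(n+1)) - F^{-1}(i/(n+1))}{(j-i)/(n+1)} \;\geq\; \frac{1}{f(F^{-1}(i/(n+1)))} - B_1\,\frac{j-i}{n+1},
\]
and then uses the hypothesis $k > 2B_1\beta$ to absorb the correction term, yielding a factor $1/(2f)$. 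You instead use monotonicity of $(F^{-1})' = 1/(f\circ F^{-1})$ directly via the integral, which is cleaner and gives the bound without the $B_1$ correction (so the condition $k>2B_1\beta$ is not needed for this step). The trade-off is that your monotonicity argument as written only yields the desired direction when $j>i$; the paper remarks that its Lipschitz-based inequality holds for either ordering of $i$ and $j$. In the intended application (estimating bucket lengths $\tilde L_1,\tilde L_2$) one always has $j>i$, so this is harmless, but you should state the assumption explicitly.
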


\begin{proof}
For a single interval, we have from Fact \ref{lem:derivative_approximation} that:
\begin{align}
\frac{F^{-1}\left(\frac{j}{n+1}\right)- F^{-1}\left(\frac{i}{n+1}\right)}{\frac{j-i}{n+1}} \ge \frac{1}{f\left(F^{-1}\left(\frac{i}{n+1}\right)\right) }- B_1 \frac{j-i}{n+1}
\end{align}
Note that this holds regardless of whether $j > i$ or $j < i$. We set $\frac{j-i}{n+1} \le \frac{1}{2 B_1 \beta}$, which gives us: $\frac{B_1}{k} \le \frac{1}{2\beta} \le \frac{1}{2f(F^{-1}\left(\frac{i}{n+1}\right))}$. Plugging this back in, we get:
\begin{align}
\frac{F^{-1}\left(\frac{j}{n+1}\right)- F^{-1}\left(\frac{i}{n+1}\right)}{\frac{j-i}{n+1}} &\ge \frac{1}{2 f(F^{-1}\left(\frac{i}{n+1}\right))} 
\end{align}

This implies then, that we can convert the additive bound to a multiplicative one. In particular, applying the bounds from Lemma~\ref{lem:additive_error_x}, this gives us that:
\begin{align}
\Pr{\left|X_{(i)} - F^{-1}\left(\frac{i}{n+1}\right)\right|\geq \frac{2}{\tau \frac{j-i}{n+1}} \left(F^{-1}\left(\frac{j}{n+1}\right) - F^{-1}\left(\frac{i}{n+1}\right)\right) \geq \frac{1}{\tau \cdot f(F^{-1}\left(\frac{i}{n+1}\right))} } \leq \frac{0.1}{4\,k^2}
\end{align}
\end{proof}

The above result shows that by union bound we can make sure all the order statistics we use are estimated with small error with high probability.

With this, we have all the pieces required to prove Theorem~\ref{thm:main}. We know that the proxy quantity is correct (Theorem~\ref{thm:hr-test-statistic}). We have shown that approximating the derivatives by difference quotients incurs bounded error (call this $\epsilon_1$) (Lemma~\ref{lem:s_tilde_error}), and we have now shown that we incur bounded error when using order statistics to approximate the test statistic (call this error $\epsilon_2$) (Lemma~\ref{lem:bucket_to_stat_error}). In the final section, we will show how many buckets and samples we require for the algorithm to succeed with high probability.

\subsection*{\textsc{Part 4/4: Errors can be set to satisfy theorem.}}
We start by giving a corollary that helps us translate the result of Lemma~\ref{lemma:samples} to one that is easy to use for analyzing the number of samples required.
\begin{corollary}
If we draw $n = O(\frac{k \log k}{\epsilon} \sqrt{\sqrt{B_1} + 1})$ samples, with probability at least 9/10, $\hat{L}$ concentrates around $\tilde{L}$ as defined previously claim. That is,
$$
(1-\epsilon) \cdot \tilde{L} \le \hat{L} \le (1+\epsilon) \cdot \tilde{L}\,.
$$
\end{corollary}

\begin{proof}
In order to prove this, we apply Lemma~\ref{lemma:samples}. As a corollary of that lemma, if $k \ge 2 B_1 \beta$ and $n = O(\log k \sqrt{\sqrt{B_1} + 1} \tau)\,,$ we have that:
\begin{align*}
&\quad \quad\Pr{}{\left|X_{(i (n+1)/k)} - F^{-1}\left(\frac{n+1}{k}\frac{i}{n+1}\right)\right| \geq \frac{2}{\tau \cdot 1/k} \left(F^{-1}\left(\frac{n+1}{k}\frac{i+1}{n+1}\right) - F^{-1}\left(\frac{n+1}{k}\frac{i}{n+1}\right)\right) } \leq \frac{0.1}{4\,k^2} \\
&\Leftrightarrow \Pr{}{\left|X_{(i (n+1)/k)} - F^{-1}\left(\frac{i}{k}\right)\right| \geq \frac{2}{\tau \cdot 1/k} \left(F^{-1}\left(\frac{i+1}{k}\right) - F^{-1}\left(\frac{i}{k}\right)\right) } \leq \frac{0.1}{4\,k^2} \\
\end{align*}

and 

\begin{align*}
&\quad \quad \Pr{}{\left|X_{((i+1) (n+1)/k)} - F^{-1}\left(\frac{i+1}{k}\right)\right| \geq \frac{2}{\tau \cdot -1/k} \left(F^{-1}\left(\frac{i}{k}\right) - F^{-1}\left(\frac{i+1}{k}\right)\right) } \leq \frac{0.1}{4\,k^2} \\
&\Leftrightarrow \Pr{}{\left|X_{((i+1) (n+1)/k)} - F^{-1}\left(\frac{i+1}{k}\right)\right| \geq \frac{2}{\tau \cdot 1/k} \left(F^{-1}\left(\frac{i+1}{k}\right) - F^{-1}\left(\frac{i}{k}\right)\right) } \leq \frac{0.1}{4\,k^2}
\end{align*}

Thus, with probability at least $1 - \frac{0.2}{4k^2}\,,$

\begin{align}
F^{-1}\left(\frac{i}{k}\right) - \frac{2}{\tau \cdot 1/k} \left(F^{-1}\left(\frac{i+1}{k}\right) - F^{-1}\left(\frac{i}{k}\right)\right) &\le X_{(i (n+1)/k)} \\
&\quad \quad \quad\le F^{-1}\left(\frac{i}{k}\right) + \frac{2}{\tau \cdot 1/k} \left(F^{-1}\left(\frac{i+1}{k}\right) - F^{-1}\left(\frac{i}{k}\right)\right) \\
F^{-1}\left(\frac{i+1}{k}\right) - \frac{2}{\tau \cdot 1/k} \left(F^{-1}\left(\frac{i+1}{k}\right) - F^{-1}\left(\frac{i}{k}\right)\right) &\le X_{((i+1) (n+1)/k)} \\
&\quad \quad \quad\le F^{-1}\left(\frac{i+1}{k}\right) + \frac{2}{\tau \cdot 1/k} \left(F^{-1}\left(\frac{i+1}{k}\right) - F^{-1}\left(\frac{i}{k}\right)\right)\,.
\end{align}

And so with probability at least $1-\frac{0.2}{4k^2}$:
\begin{align}
X_{((i+1) (n+1)/k)} - X_{(i (n+1)/k)} &\le F^{-1}\left(\frac{i+1}{k}\right) + \frac{2}{\tau \cdot 1/k} \left(F^{-1}\left(\frac{i+1}{k}\right) - F^{-1}\left(\frac{i}{k}\right)\right) \\
&\quad \quad \quad- \left( F^{-1}\left(\frac{i}{k}\right) - \frac{2}{\tau \cdot 1/k} \left(F^{-1}\left(\frac{i+1}{k}\right) - F^{-1}\left(\frac{i}{k}\right)\right) \right) \\
&= \left( 1 + \frac{4k}{\tau}   \right) \left(F^{-1}\left(\frac{i+1}{k}\right) - F^{-1}\left(\frac{i}{k}\right)\right) 
\end{align}
and 
\begin{align}
X_{((i+1) (n+1)/k)} - X_{(i (n+1)/k)} &\ge F^{-1}\left(\frac{i+1}{k}\right) - \frac{2}{\tau \cdot 1/k} \left(F^{-1}\left(\frac{i+1}{k}\right) - F^{-1}\left(\frac{i}{k}\right)\right) \\
&\quad \quad \quad - \left( F^{-1}\left(\frac{i}{k}\right) + \frac{2}{\tau \cdot 1/k} \left(F^{-1}\left(\frac{i+1}{k}\right) - F^{-1}\left(\frac{i}{k}\right)\right) \right) \\
&= \left( 1 - \frac{4k}{\tau}   \right) \left(F^{-1}\left(\frac{i+1}{k}\right) - F^{-1}\left(\frac{i}{k}\right)\right) 
\end{align}

This tells us that if we draw $n = O(\frac{k \log k}{\epsilon} \sqrt{\sqrt{B_1} + 1} )$ samples, then with probability at least $1 - \frac{0.2}{4k^2}$:
$$
\left( 1 - \epsilon  \right) \tilde{L} \le \hat{L} \le \left( 1 + \epsilon   \right) \tilde{L} \,.
$$

\end{proof}

Now, we analyze exactly how much error we can incur while still remaining on the correct side of the threshold. In particular, given that both approximating the derivative and sampling introduce errors, we seek to draw enough samples that the test statistic calculated from samples lies on the same side of the threshold as the proxy quantity. We have essentially two sources of error that we can control: the first is $\epsilon_1$, the additive error from the derivative approximation; the second is $\epsilon_2$, the multiplicative error from sampling. Keeping $\epsilon_1$ and $\epsilon_2$ small enough to not cross the threshold with high probability entails using enough buckets and sufficiently many samples per bucket.
Based on the earlier derivation of the absolute value of the gap, we have that:
\begin{align}
\bigg \lvert \frac{\alpha (1-z_2)^2}{f'(F^{-1}(z_2))} \bigg \rvert  &\ge \frac{\alpha(1-z_2)^2}{f(F^{-1}(z_2))^3 B_1} \quad \quad \quad  \text{ because }  \quad |F^{-1''}(y)| = \bigg \lvert  \frac{-f'(F^{-1}(x))}{f(F^{-1}(x))^3}\bigg \rvert \le B_1
\end{align}

Let us define $\gamma_i \coloneqq \frac{\alpha (1-i/k)^2}{f(F^{-1}(i/k))^3 \, B_1} \ge \gamma \coloneqq \frac{\alpha \rho^2}{\beta^3 B_1}\,,$ a lower bound on the actual gap. We use this quantity in our analysis. Thus, the lower bound of the test statistic calculated for a light-tailed distribution is:
\begin{align*}
(1- \epsilon_2)(S-\epsilon_1) = S - \epsilon_1 - \epsilon_2 \, S + \epsilon_1 \epsilon_2 \ge 1- \frac{i}{k} - \frac{1}{10} \gamma
\end{align*}

First, we know that for a light-tailed distribution, $S \ge 1- \frac1k\,,$ so we require that $\epsilon_1 \epsilon_2 - \epsilon_1 - \epsilon_2 \, S \ge \frac{-\gamma}{10}\,.$ Let us first consider the case where $ S \le 1\,,$ Then, if $\epsilon_1 \le \gamma/20$ and $\epsilon_2 \le \gamma/20\,,$ we have that:
$$
-\frac{\gamma}{20} - \frac{\gamma}{20} \cdot S + \frac{\gamma^2}{20} \ge  -\frac{\gamma}{20} - \frac{\gamma}{20} = -\frac{\gamma}{10}\,.
$$

This tells us that it suffices to have $\epsilon_1 = \gamma/20$ and $\epsilon_2 = \gamma/20\,.$ Thus, we select the number of buckets and the number of samples in such a way as to achieve $\epsilon_1, \epsilon_2 \le \frac{\gamma}{20}\,.$ In particular, from Corollary~\ref{cor:bucket_bound_approximation}, we know that if $k \ge \Theta(\beta(2B_1 + B_2)/\gamma)\,,$ then $|\tilde{S} - S| < \gamma/20\,.$ Next, Lemma~\ref{lem:bucket_to_stat_error} tells us that in this case, we need estimates of $\tilde{L}$ to a multiplicative factor of $1 \pm \epsilon'\,,$ where $\epsilon' =    \Theta( \gamma/((20 + \gamma)k))$. From the claim above, we have that with $n =\Theta\left(\frac{(k^2 \log k) \sqrt{\sqrt{B_1} + 1}}{\gamma}\right)\,,$ we achieve this guarantee.

Next, if we are still in the light-tailed case and $S \ge 1\,,$ then regardless of $\epsilon_2\,,$ Lemma~\ref{lem:bucket_to_stat_error} tells us we require estimates of $\tilde{L}$ to $1 \pm \epsilon'\,,$ where $\epsilon' = \Theta(1/k^2)\,.$ Again, from the claim above, we have that with $n = \Theta\left( k^3 \log k \sqrt{\sqrt{B_1} + 1}  \right)\,,$ we can achieve the desired guarantee. \par

Having addressed the light-tailed case, let us consider the $(\alpha, \rho)-$heavy-tailed case. We have that:
$$
\hat{S} \le (1+ \epsilon_2) (S + \epsilon_1)\,, \text{which we want to be} \le 1 - \frac ik -\frac{9\gamma}{10}\,.
$$

Since the distribution is $\alpha, \rho$-heavy tailed, we know that $S \le 1 - \frac ik - \gamma\,.$ Thus, we set $\epsilon_1$ such that $S + \epsilon_1 \le 1 - \frac{i}{k} - \frac{19 \gamma}{20} = 1 - \frac ik - \gamma + \frac{\gamma}{20}\,,$ i.e., we set $\epsilon_1 = \gamma/20\,.$ Now, $\epsilon_2 = \gamma/20$ suffices, since:
$$
\left( 1 + \frac{\gamma}{20} \right) \left ( 1 - \frac{i}{k} - \frac{19 \gamma}{20} \right ) \le 1 - \frac{i}{k} + \frac{\gamma}{20} \left(  1 - \frac ik  \right) - \left( 1 + \frac{\gamma}{20} \right) \frac{19 \gamma}{20} \le 1 - \frac ik + \frac{\gamma}{20} - \frac{19 \gamma}{20} = 1 - \frac ik  - \frac{9 \gamma}{10}
$$
as desired. We apply the union bound over the $k^2$ order statistics we compute to get that all simultaneously concentrate to $1 \pm \epsilon_2$ multiplicative error with probability $\ge 0.1$. Thus, the number of samples required is $n = \Theta \left(  \frac{k^2 \log k \sqrt{\sqrt{B_1} + 1}}{\gamma}  \right),$ and the number of required buckets is $k \ge \Theta( \frac{\beta (2B_1 + B_2)}{\gamma})\,.$

In the worst case, $\gamma \ge \Theta( \alpha \rho^2 \beta^{-3} B_1^{-1})\,,$ and so $k \ge \frac{\beta^4 B_1 (2 B_1 + B_2)}{\alpha \, \rho^2}\,.$ Then, 
$$ 
n = \max \left \{ \frac{ \beta^3 B_1 k^2 \log k \sqrt{\sqrt{B_1} + 1}}{\alpha\, \rho^2}, k^3 \log k \sqrt{\sqrt{B_1} + 1}   \right\}\,.
$$
Equivalently, 
$$
n = \max \left\{ \frac{\beta^3 B_1}{\alpha \rho^2}  , k \right\} \cdot k^2 \log k \sqrt{\sqrt{B_1} + 1}  \,.
$$

\end{proof}

\section{Hardness Result}
\label{sec:lowerbound}

In this section, we show for any number of samples $m$, we can construct two classes of distributions, one light-tailed and the other heavy-tailed, such that they are indistinguishable using $m$ samples. Our lower bound implies that there is no algorithm to distinguish the class of light-tailed and heavy-tailed distributions unless further assumptions is made. Here we give an alternative definition of heavy tails, which does not require the heavy-tailed part be contiguous. Then we show that it is hard to distinguish these type of heavy-tailed distributions from the light-tailed ones.

\begin{definition}
We say a distribution $p$ is  $(\alpha, \rho)$-scattered-heavy-tailed if the hazard rate is decreasing by rate at least $\alpha$ over measurable intervals of the domain with probability mass at least $\rho$. 
\end{definition}

\paragraph{High level idea:} We construct two classes of distributions that are hard to distinguish with few samples: light tailed distributions, $\CC_L$, and heavy-tailed distribution $\CC_H$.  The class of light-tailed distributions contains only one member that is an exponential distribution with $\fE(x) = e^{-x}$. We construct the class of heavy-tailed distributions via a randomized process as follows: We start off by the same distribution $\fE(x) = e^{-x}$. We split the domain of $\fE$ into $s$ {\em chunks} such that the probability mass in every chunk is equal to $1/s$. Then, we select roughly $\rho' = \Theta(\rho)$ fraction of these chunks randomly and embed a heavy-tailed distribution, namely $\fH$, in (some of) those selected chunks. The construction has two key properties: First, the probability mass of a chunk remains the same even when the alteration happens. Second, if we draw one sample from a chunk, we cannot tell whether it is altered or not. The key idea which leads to this property is in the embedding. When we alter a chunk, we randomly replace $\fE$ by a heavy-tailed piece $\fH$ or another partial PDF $\fbar$. We simply define $\fbar$ such that the mixture of $\fH$ and $\fbar$ each with probability a half gives us exactly $\fE$. Thus, if we receive one sample from a chunk that comes from a random $\CC_H$, it is impossible to tell whether the chunk is altered or not. It is worth noting that this process generates a class of distributions, $\CC_H$, that depends on a parameter $s$. We may also $\CC_{H}(s)$ to denote it. To complete our proof, we show that for any algorithm that uses $m$ samples, there is a sufficiently large $s$ such that it is very unlikely to have more than one sample per chunk. Thus, $\CC_L$ and $\CC_{H}(s)$ are indistinguishable when we use $m$ samples. This fact implies that no algorithm that uses finitely many samples can distinguish a light tailed distribution from a distribution that is heavy on measurable subset of the domain with mass $\rho$ unless we make further assumptions including that the heavy-tailed part might need to be contiguous.

\begin{theorem}
For any integer $m$, there is no algorithm that receives $m$ samples from $p\,,$ a monotone and continuous distribution, and can distinguish whether $p$ is light-tailed distribution or a $(\alpha, \rho)-$scattered-heavy-tailed for $\alpha < 0.0043$ and $\rho < 0.5$ with probability more than $0.5 + o(1)$.
\end{theorem}

\begin{proof} \
By way of contradiction, let us assume there exists such algorithm, namely $\AA$, which uses $m$ samples and can distinguish if light-tailed distributions from heavy-tailed ones with probability at least $0.5 + \delta$. Here, we assume that $\delta$  is in $(0,1/2]$ and $1/\delta$ is a constant. (i.e., $\delta$ is sufficiently away from zero.) To prove our theorem: we construct two classes of distributions: $\CC_L$ and $\CC_H(s)$ for a sufficiently large parameter $s$. The main goal is to show that if we draw $m$ samples from the distribution in each of these classes, the outcome should be very similar, thus no algorithm can tell the difference better than guessing. 

\paragraph{Construction of classes:} As we mentioned earlier, $\CC_L$ contains a single exponential distribution with the pdf $\fE(x) = e^{-x}$. We construct the class of heavy-tailed distribution via a randomized process as follows: We start off with the same distribution $\fE(x) = e^{-x}$. We split the domain of $\fE$ into $s$ {\em chunks} such that the probability mass in every chunk is equal to $1/s$. Then, for $i \leq 3s/4$, we select chunk $i$ with probability $\rho' = 2\rho$ randomly and embed a heavy-tailed distribution, namely $\fH$, in those selected chunks. When we alter a chunk, we randomly replace $\fE$ by a heavy-tailed piece $\fH$ or another partial PDF $\fbar$. We simply define $\fbar$ such that the mixture of $\fH$ and $\fbar$ each with probability a half gives us exactly $\fE$. $\CC_H$ consists of the distributions that result from this corruption process. Below are the details of a single alteration of the exponential in a chunk by $f_H$. The alteration on one chunk via $f_H$ has the following properties:

\begin{itemize}

\item The interval has the same mass that some exponential distribution (light-tailed) would have on that interval (i.e., $1/s$), but the hazard rate is decreasing on most of the mass of that interval. We call this the \emph{fooling} region of the interval.

\item In place of the exponential distribution on that interval, we use a fast-dropping exponential, followed by a uniform, followed by another fast-dropping exponential (see Figure~\ref{fig:corrupted}).

\item Note that once the starting point and the amount of weight of the interval are fixed, for a fixed parameter $\beta \in (1, 2)$ denoting the drop rate of the fast-dropping exponential, the rest of the construction is determined. We fix this parameter to $\beta = 1.5\,,$ but a different choice would lead to a similar analysis with slightly different constants.

\item Note that since we are choosing $i \leq (3/4)s$, $x_B$ for the rightmost chunk we could select is $\ln(4)\,.$
\end{itemize}

\begin{figure}
    \centering
    \includegraphics[width=0.75\textwidth]{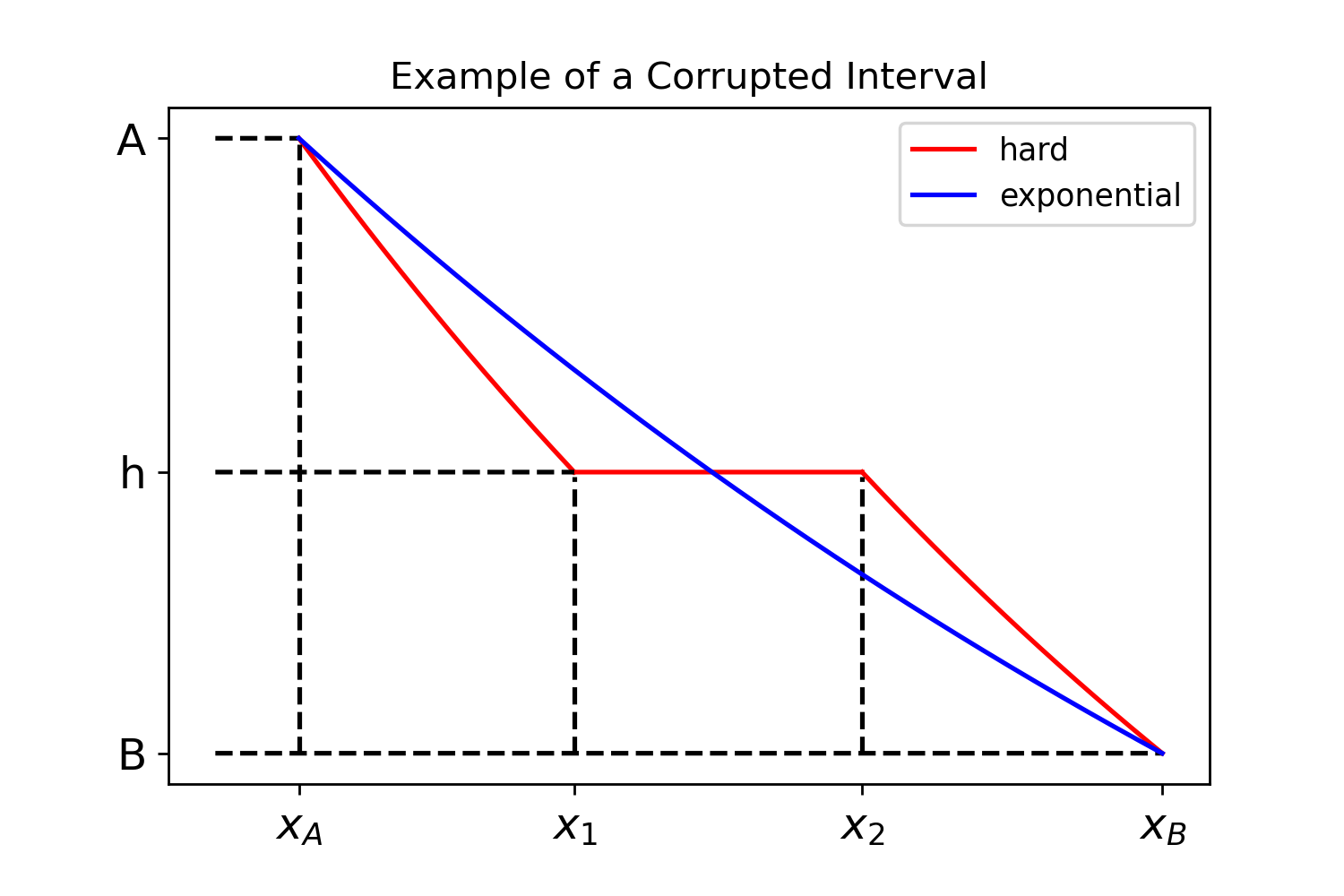}
    \caption{An example of a corrupted interval.}
    \label{fig:corrupted}
\end{figure}

\paragraph{Construction of $f_H$:} We start off by a chunk of weight $C \coloneqq 1/s$ starting at $x = x_A$ and the distribution $\fE(x) = e^{-x}\,.$ Note that this uniquely defines the end point of the chunk $x_B$. Clearly, we have: 
$$\fE(x_A) = A\,, \quad \quad \fE(x_B) = B\,,\mbox{ and }\quad \quad \int \limits_{x_A}^{x_B} \fE(x) dx = A - B = C\,.$$

Letting $x_1\in (x_A, x_B)$ denote the point at which the distribution switches from exponential to uniform and $x_2 \in (x_A, x_B)$ denote the point where the distribution switches back from uniform to exponential in the altered chunk, $f_H$ is given by:
\begin{align}
f_\text{H}(x) = \begin{cases}
Ae^{-\beta (x-x_A)} & x \in [x_A, x_1) \\
h & x \in [x_1, x_2) \\
Be^{-\beta (x - x_B)} & x \in [x_2, x_B)
\end{cases}
\end{align}

From the fact that this construction is continuous, we have that $h = Ae^{-\beta(x_1-x_A)} = Be^{-\beta(x_2-x_B)}\,.$ We use that to solve for $x_1, x_2$:
\begin{align}
x_1 &= x_A + \frac{1}{\beta} \ln \left( \frac{A}{h}\right) \\
x_2 &= x_B + \frac{1}{\beta} \ln \left( \frac{B}{h} \right) \\
&= \frac{-1}{\beta} \ln \left( \frac{A}{B}\right) + x_1 - x_A + x_B \\
&\Rightarrow x_2 - x_1 = \frac{\beta - 1}{\beta} (x_B - x_A)\,.
\end{align}

We use $\FE$ and $\FH$ to refer to the CDF of $\fE$ and $\fH$ as well. We get that the CDF is given by the following:

\begin{align}
F_\text{hard}(x) = \begin{cases}
F_\text{exp}(x_A) + \frac{A}{\beta}\left(1 - e^{-\beta(x-x_A)}\right)& x \in [x_A, x_1) \\
F_\text{exp}(x_A) + \frac{A}{\beta}\left(1 - e^{-\beta(x_1-x_A)}\right) + h(x - x_1)& x \in [x_1, x_2) \\
F_\text{exp}(x_A) + \frac{A}{\beta}\left(1 - e^{-\beta(x_1-x_A)}\right) + h(x_2 - x_1) + \frac{B}{\beta}(e^{-\beta (x-x_B)} - 1) & x \in [x_2, x_B)
\end{cases}
\end{align}

Knowing that $F_\text{hard}(x_B) - F_\text{hard}(x_A) = C\,,$ we enforce the area constraint to solve for $h$:
\begin{align}
\frac{A}{\beta} \left( 1 - e^{-\beta (x_1 - x_A)}\right) + h(x_2 - x_1) + \frac{B}{\beta} \left(e^{-\beta(x_2 - x_B)} - 1\right) &= C \\
\frac{A}{ \beta} \left( 1 - \frac{h}{A}\right) + h \cdot \frac{\beta -1}{\beta} (x_B - x_A) + \frac{B}{\beta} \left(\frac{h}{B} - 1 \right) &= C \\
\frac{h}{\beta} \left( -1 + (\beta - 1)(x_B - x_A) + 1\right) &= C - \frac{A}{\beta} + \frac{B}{\beta} \\
h = \frac{\beta C - A + B}{(\beta - 1)(x_B - x_A)} &= \frac{C}{(x_B-x_A)}
\end{align}

\paragraph{Analysis of $\fbar$: } Recall that we simply set $\fbar(x) = 2\fE(x) - \fH(x)$. We need to show that on the region of interest, $\fbar(x)$ is never below zero, and it is monotone decreasing. Given that, it is clear that the probability mass of a chunk when we replace $\fE$ by $\fbar$ remains equal to $C$ as required.

\textit{Existence:} To show that such an $\fbar$ exists, we must show that $2e^{-x} \ge f_\text{hard}(x)\,.$ In the first region, this is clearly true, as $2 e^{-x} \ge e^{-x} = A e^{-(x-x_A)} \ge A e^{-\beta (x-x_A)}$ when $\beta \ge 1\,.$ In the uniform region, we compare $2e^{-x}$ to a constant function, so it suffices to compare the smallest value of $2e^{-x}$ to the constant height:

\begin{align}
2e^{-x} \ge h &\Leftrightarrow \ln\left( \frac{2}{h}\right) \ge x \\
\ln \left( \frac{2}{h} \right) &\ge x_2 = x_B + \frac{1}{\beta} \ln \frac{B}{h} = \ln\left(  \frac{1}{B^{1 - 1/\beta} h^{1/\beta}}  \right) \\
&\Leftrightarrow 2 B^{1 - 1/\beta} \ge h^{1-1/\beta} \label{eqn:f-bar-cond}
  \end{align}
We return to this condition in a moment after considering the second exponential region, where we want to determine when: $2e^{-x} \ge Be^{-\beta(x-x_B)} \,, x \in [x_2, x_B)$. This is equivalent to checking where $2B^{\beta - 1} \ge e^{-x(\beta-1)}$ holds. For this to hold, it is sufficient for $2B^{\beta - 1} \ge 1\,,$ meaning that $B \ge (1/2)^{1/(\beta - 1)}\,.$ Note that if this holds, then Eqn.~\ref{eqn:f-bar-cond} also holds. When $B \leq e^{-\ln(4)}\,, \beta = 1.5\,,$ this condition indeed holds, which means Eqn.~\ref{eqn:f-bar-cond} also holds. \par

\textit{Monotonicity:} To show montonicity, we must show that $\bar f_H(x) = 2e^{-x} - f_H(x)$ is monotone decreasing as well in the regions in which we are interested. We calculate the derivative:
\begin{align}
    f'_H(x) &= \begin{cases}
    - \beta A e^{-\beta (x-x_A)} & x \in [x_A, x_1) \\
    0 & x \in (x_1, x_2) \\
    -\beta B e^{-\beta(x-x_B)} & x \in (x_2, x_B]
    \end{cases}\\
    \Rightarrow \text{ we must check } \bar f_H'(x) &= -2e^{-x} - f_H'(x) = -2e^{-x} + \beta A e^{-\beta(x-x_A)} \le 0\,, \\
    \bar f_H'(x) &= -2e^{-x} - 0 \le 0 \,, \\
    \bar f_H'(x) &= -2e^{-x} - f_H'(x) = -2e^{-x} + \beta B e^{-\beta(x-x_B)} \le 0\,,
\end{align}
where the second condition clearly holds, and we can summarize the first and third as checking when $-2e^{-x} + \beta D e^{-\beta(x-x_D)} \le 0$ holds, where $D = A, x_D = x_A$ or $D = B, x_D = x_B\,.$ Then:
\begin{align}
    &\quad \quad 2e^{-x} \ge \beta D e^{-\beta(x-x_D)} \\
    &\Leftrightarrow e^{-\beta(x-x_D)} \le \frac{2}{\beta} e^{-(x-x_D)} \Leftrightarrow e^{-(\beta-1)(x-x_D)} \le \frac{2}{\beta}
    \Leftrightarrow -(\beta - 1) (x-x_D) \le \ln \left(\frac{2}{\beta} \right) \\
    &\Leftarrow \frac{\ln \frac{2}{\beta}}{\beta - 1} \ge |x-x_D| \Leftarrow \frac{\ln \frac{2}{\beta}}{\beta - 1} \ge (x_B - x_A)\,. \label{eqn:final-monfbar}
\end{align}

Since we have that $x_B \le \ln(4)\,,$ and $\beta = 1.5\,,$ still consider chunks in a constant fraction of the mass of the distribution, so we have that $x_B - x_A \le \ln(1 + 4C)\,.$ Thus, we are interested in the region where the left hand side of Eqn.~\ref{eqn:final-monfbar} is greater than $1 + 4C\,.$ We have full freedom to pick $\beta$ subject to $2 > \beta > 1\,.$ If we set $\beta = 1.5\,,$ then this condition holds when $C \le 0.194\,,$ (i.e., $s \ge 5.2)$ which is satisfied. 
Thus, we can conclude the existence and monotonicity of $\fbar$.

\paragraph{Drawing samples from classes:} Here, we explain a slightly modified process, that is called  {\em poissonization method}, to draw roughly $m$ samples from the two classes of distribution. 
Suppose we generate samples from $\CC \in \{\CC_L, \CC_H\}$. Let $p$ be a random distribution in $\CC$. 
 That is, instead of drawing $m$ samples from $p$, we draw $\poi(m')$ samples from $p$ where $m' = \Theta(m)$. In this way, the number of samples we see in every chunk is independent from the rest which simplifies the proof greatly.  Let $X$ denotes the sample set we obtain according to this process. We use $\DD_L$ and $\DD_H$ to denote the distributions of $X$ when $\CC = \CC_L$ and $\CC = \CC_H$ respectively.

\paragraph{Proof of indistinguishability:} Suppose we set $\CC$ to be $\CC_L$ and $\CC_H$ each with probability a half. Then we draw a sample from $X$ from the class $\CC$ as explained above. We define event $\EE$ to be the probabilistic event when $X$ contains at least $m$ samples and $p$ is a light-tailed distribution, or an $(\alpha, \rho)$-scattered-heavy tailed one. It is worth noting that condition on $\EE$, then algorithm $\AA$ distinguishes whether $\CC = \CC_L$ or $\CC = \CC_H$ with probability at least $0.5 + \delta$. In the following lemma, we claim that event $\EE$ holds with high probability. 

\begin{restatable}{lemma}{LBGoodDist}
\label{lem:LB_goodDist}
Suppose $X$ is generated from class $\CC = \CC_H$ (similarly $\CC = \CC_L$). Then, with probability at least $1-\delta/2$, $p$ is an $(\alpha, \rho)$-scattered-heavy-tailed (similarly light-tailed) distribution, and $X$ contains at least $m$ samples. 
\end{restatable}

We prove this lemma in Section~\ref{appendix:subsec-proofGoodDist}. Using the above lemma, we can show that if $X$ is generated from a random $\CC$, then $\AA$ still distinguishes whether $\CC = \CC_L$ or $\CC = \CC_H$ with some reasonable probability while $\EE$ does not necessary holds. In particular, we have:
\begin{equation}\label{eq:distinction_prob}
\begin{split}
    \Pr{\AA(X) = \CC} & \geq \Pr{\AA(X) = \CC \mid \EE} \cdot \Pr{\EE}
    \\
    & \geq \left(\frac{1}{2} + \delta \right)\cdot \left(1-\frac{\delta}{2}\right) > \frac{1}{2} + \frac{\delta}{4}
\end{split}
\end{equation}

Now, by Le Cam's lemma, the probability of distinction is bounded by the total variation distance between the input distributions, i.e., $\DD_L$ and $\DD_H$. More formally, we have:
\begin{equation} \label{eq:distinction_leCam}
    \Pr{\AA(X) = \CC} \leq \frac{1}{2} + \frac{\|\DD_L - \DD_H\|_{tv}}{2}\,.
\end{equation}
    
Now, putting \Cref{eq:distinction_leCam} and \Cref{eq:distinction_prob} together, we obtain: 
\begin{equation} 
    \|\DD_L - \DD_H\|_{tv} > \frac{\delta}{2}\,.
\end{equation}
However, we have the following upper bound on the total variation distance between $\DD_H$ and $\DD_X$.
\begin{restatable}{lemma}{LBTVBound}
\label{lem:LB_tvBound}
For $s = \Omega(m^2)$, the total variation distance between the distribution over the sample sets, $\DD_H$ and $\DD_L$ is at most~$\delta/2$.
\end{restatable}
For the full proof of this lemma, see Section~\ref{appendix:subsec-proofTVbd}.
Thus, by having the contradicting bounds for the total variation distance, we conclude that our original hypothesis regarding the existence of $\AA$ was false. 
\end{proof}

\subsection{Proof of Lemma~\ref{lem:LB_goodDist}}
\label{appendix:subsec-proofGoodDist}
\LBGoodDist*
\begin{proof}\
We start off by showing that the sample set $X$ contains at least $m$ samples. Recall that the number of samples, $|X|$ is a Poisson random variable with mean (and variance) $m'$. We set $m' = m + 2/\delta + 2\sqrt{\delta \, m + 1}/\delta$ which satisfies $m' - m = 2\sqrt{m/\delta}$. Note that since we assumed $1/\delta$ is a constant, $m$ is $\Theta(m)$. Now, by Chebyshev's inequality, we have: 
\begin{align*}
    \Pr{|X| < m} & \leq \Pr{m' - |X| > m' - m =  \frac{2\,\sqrt{m'}}{\sqrt{\delta}}} \leq \frac{\delta}{4} \,.
\end{align*}

Clearly, when $\CC = \CC_L\,,$ $p$ is a light-tailed distribution. Now, we need show that if $p$ is drawn randomly from $\CC_H$, then it is $(\alpha, \rho)$-scattered-heavy-tailed. Our main claim concerns the mass in the fooling region: 

\begin{restatable}{lemma}{LemMassAlphaCorrupt}
\label{lem:MassAlphaCorrupt}
For a sufficiently large $s = \Omega(\log \delta^{-1}/\rho)$, if we alter chunk $i$ for $i \leq 3s/4$, then the probability mass in the \emph{fooling region} is at least $2/(3s)$ and the hazard rate of the distribution in the fooling region is decreasing by rate at least $\alpha = 0.0043$. 
\end{restatable}

We prove this lemma in Section~\ref{appendix:subsec-proofAlphaMass}. Now, let $s' > s/2$ denote the number of chunks such that $i \leq 3s/4$. Let $s_a$ denote the number of chunks we alter and replace $f$ by $f_H$ among these $s'$ intervals. It is not hard to see that $\E{s_a} = \rho' \cdot s' / 2$. Given the above lemma, if $s_a \geq 3\,s\,\rho/2$, it is clear that at least on $\rho$-fraction of the domain, the hazard rate decreases by rate at lest $\alpha$. Therefore, $p$ will be an $(\alpha, \rho)$-scattered-heavy-tailed. We set $\rho' = 12 \cdot \rho$. Now, by the Chernoff bound, we show that $s_a$ is large enough with high probability:
\begin{align*}
    \Pr{s_a < 3\,s\,\rho/2} = \Pr{\frac{s_a}{s'} < \frac{\rho'}{2}\cdot \left(1 - \frac{1}{2}\right)} \leq \exp\left(-3\, s\,\rho/8\right) \leq \delta/4.
\end{align*}
where the last inequality holds for $s = \Omega(\log \delta^{-1}/\rho)$.
Now, by the union bound, with probability $\delta/2$, we have at least $m$ samples, and a random $p \in C$ is either a light tailed distribution, or an $(\alpha, \rho)$-scattered-heavy-tailed one. 
\end{proof}

\subsection{Proof of Lemma~\ref{lem:LB_tvBound}}
\label{appendix:subsec-proofTVbd}
\LBTVBound*
\begin{proof}\
To prove the lemma, we propose a new process for drawing samples from a random $\CC \in \{\CC_L, \CC_H\}$ based on piossonization method. In this new process, instead of drawing sample from the distribution directly, we first determine the number of samples in a chunk, and then draw samples from each chunk separately.

Let $p$ be a random distribution in $\CC$. We use $m_i$ to  denote the number of sample in chunk $i$ when we draw $\poi(m')$ samples from $p$. Given the properties of the poissonization method, $m_i$ is a random variable drawn from $\poi(m'/s)$. We claim if $m_i = 1$, regardless of $\CC$ being equal to $\CC_L$ or $\CC_H$, the sample from chunk $i$, $x_i$, is drawn from the exponential distribution $\fE$ over chunk $i$. The claim is trivial when $\CC = \CC_L$ or when we have not alter chunk $i$ in $p$. Now, assume $\CC = \CC_H$, and the alteration happens. In this case, since $p$ is randomly selected, the conditional distribution in chunk $i$ is either $\fH$ or $\fbar$ each with probability half. Recall that we define $\fbar$ to be in such a way that the mixture of $\fH$ and $\fbar$ is exactly $\fE$. Thus, for a random $p  \in \CC_H$, when $m_i = 1$, sample $x$ comes from exactly $\fE$. 

The above property implies that a sample set $X$ for which each chunk has one sample will be generated with the same probability regardless of the choice of $\CC$. Therefore, the total variation distance between $\DD_H$ and $\DD_L$ is bounded by the probability of at seeing at least two sample from a chunk. By properties of the Poisson distribution, we show it is very unlikely to see two samples from the same chunk:

\begin{align*}
    \Pr{m_i \geq 2} = 1 - \Pr{m_i = 0} - \Pr{m_i = 1} = 1 - e^{-m'/s} - m' \, e^{-m'/s}/s \leq \frac{m'^2}{2\,s^2}\,.
\end{align*}
Using the union bound, the probability of having a two samples from the same chunk is 
\begin{align*}
    \|\DD_L - \DD_H\|_{tv} \leq \Pr{\exists i : m_i \geq 2} \leq \frac{m'^2}{2\,s} \leq \frac{\delta}{2}\,.
    \end{align*}
where the last inequality is true when $s \geq {m'}^2/\delta = \Theta(m^2)$.
\end{proof}

\subsection{Proof of Lemma~\ref{lem:MassAlphaCorrupt}}
\label{appendix:subsec-proofAlphaMass}
\LemMassAlphaCorrupt*

\begin{proof}
We prove this lemma in two parts. First, we show that the derivative of the hazard rate is bounded above in the regions in which it is decreasing. Then, we show that in those decreasing regions, the probability mass is a constant fraction of the total mass in that chunk. We previously set $\beta = 1.5\,$ which we will use here, as well.

\paragraph{Bounded Derivative of Hazard Rate} First, we calculate the hazard rate for a chunk:

\begin{align}
HR_\text{hard}(x) &= \begin{cases}
\frac{Ae^{-\beta (x-x_A)}}{1 - \left(F_\text{exp}(x_A) + \frac{A}{\beta}\left(1 - e^{-\beta(x-x_A)}\right)\right)} &  x \in [x_A, x_1) \\
\frac{h}{1 - \left( F_\text{exp}(x_A) + \frac{A}{\beta}\left(1 - e^{-\beta(x_1-x_A)}\right) + h(x - x_1) \right)}  & x \in [x_1, x_2) \\
\frac{Be^{-\beta (x - x_B)} }{1 - \left( F_\text{exp}(x_A) + \frac{A}{\beta}\left(1 - e^{-\beta(x_1-x_A)}\right) + h(x_2 - x_1) + \frac{B}{\beta}(e^{-\beta (x-x_B)} - 1)   \right)} & x \in [x_2, x_B)
\end{cases} \\
&= \begin{cases}
\frac{A}{\left(A - \frac{A}{\beta}\right) e^{\beta(x-x_A)} + \frac{A}{\beta}} &  x \in [x_A, x_1) \\
\frac{h}{A - \frac{A}{\beta}\left( 1 - e^{-\beta(x_1 - x_A)}\right) - h(x-x_1)}  & x \in [x_1, x_2) \\
\frac{B}{\left(A - \frac{A}{\beta} \left( 1- e^{-\beta(x_1-x_A)}\right) - h (x_2 - x_1) + \frac{B}{\beta} \right) e^{\beta(x-x_B)} + \frac{B}{\beta}} & x \in [x_2, x_B)
\end{cases} 
\end{align}
We consider the derivative of the hazard rate to determine the fooling regions and show that the derivative is below $-\alpha$:

\begin{align}
\frac{d}{dx} HR(x) = \begin{cases}
\frac{-(\beta - 1) e^{\beta(x-x_A)}}{\left(  \left(1-\frac{1}{\beta}\right) e^{\beta (x-x_A)} + \frac{1}{\beta}   \right)^2} & x \in [x_A, x_1) \\
\frac{h^2}{\left(  C_1 - hx\right)^2   } & x \in (x_1, x_2) \\
\frac{- B \beta C_2 e^{\beta(x-x_B)}}{  \left(  C_2 e^{\beta(x-x_B)} + \frac{B}{\beta}    \right)^2  } & x \in (x_2, x_B)
\end{cases}
\end{align}
where $C_1 = \left(   A - \frac{A}{\beta}\left( 1 - e^{-\beta(x_1-x_A)}\right) +hx_1\right)$ and $C_2 = \left(A - \frac{A}{\beta} \left( 1- e^{-\beta(x_1-x_A)}\right) - h (x_2 - x_1) + \frac{B}{\beta} \right) \,.$ We first observe that because $\beta > 1\,,$ $C_1, C_2 > 0\,,$ the derivative of the hazard rate is positive in the uniform region and negative in the two exponential regions. Thus, these are the fooling regions. We now show that for a constant $\alpha$, the derivatives of the hazard rate in the fooling regions are at most $-\alpha\,.$ First, we lower bound the absolute value of this quantity in the first exponential region:

\begin{align}
\frac{(\beta - 1) e^{\beta(x-x_A)}}{\left(  \left(1-\frac{1}{\beta}\right) e^{\beta (x-x_A)} + \frac{1}{\beta}   \right)^2} &\ge \frac{\beta - 1}{\left(  \left(1-\frac{1}{\beta}\right) e^{\beta (x-x_A)} + \frac{1}{\beta}   \right)^2} \\
&\ge \frac{\beta -1}{\left(  \left( 1-\frac{1}{\beta} \right) \frac{A}{h} + \frac{1}{\beta} \right)^2} = \frac{\beta^2 (\beta - 1)}{\left( (\beta - 1)\frac{A}{h} + 1\right)^2} \\
&\ge \frac{\beta^2 (\beta - 1)}{\left(\frac{\beta - 1}{h} + 1 \right)^2} \label{eqn:bd-hr-firstexp}
\end{align}

Next, we lower bound the absolute value of the derivative of the hazard rate in the second exponential region:
\begin{align}
\frac{B \beta C_2 e^{\beta(x-x_B)}}{\left( C_2 e^{\beta(x-x_B)} + \frac{B}{\beta}   \right)^2} &\ge \frac{B \beta C_2 B/h}{\left( C_2 + \frac{B}{\beta} \right)^2} = \frac{\beta C_2 B^2}{h\left(C_2 + \frac{B}{\beta}\right)^2} \label{eqn:in-c2}\\
&\ge \frac{\beta B^3}{h\left( A + \frac{B}{\beta} + \frac{B}{\beta} \right)^2} \ge \frac{\beta B^3}{\left(1+ \frac{2}{\beta }  \right)^2} \,. \label{eqn:bd-hr-secondexp}
\end{align}
The transition from Eqn.~\ref{eqn:in-c2} to Eqn.~\ref{eqn:bd-hr-secondexp} results from bounding $C_2$ as follows:

\begin{align}
C &= A - B = \frac{A}{\beta} \left( 1 - e^{-\beta(x_1 -x_A)}\right) + h(x_2 - x_1) + \frac{B}{\beta} \left(  e^{-\beta(x_2 - x_B)} - 1 \right) \\
&\Rightarrow A -  \frac{A}{\beta} \left( 1 - e^{-\beta(x_1 -x_A)}\right)  - h(x_2 - x_1) + \frac{B}{\beta} = B + \frac{B}{\beta} e^{-\beta(x_2 - x_B)} = C_2\,.
\end{align}
Thus, $A + \frac{B}{\beta} > C_2 > B\,.$ \par

Finally, it suffices to bound these quantities for intervals that lie in a certain region of the distribution that comprises a constant fraction of the mass of the distribution. Thus, since we have that $i \leq 3s/4$, we know that $x_B \le -\ln(1/4) = \ln(4)$, implying that $1/4\leq A, B, h$.This allows us to bound Eqn.~\ref{eqn:bd-hr-secondexp} by:
$$
\frac{\beta B^3}{\left(1+ \frac{2}{\beta }  \right)^2} \ge \frac{\beta (1/4)^3}{\left(1+ \frac{2}{\beta }  \right)^2} = \frac{\beta}{4^3\left(1+ \frac{2}{\beta }  \right)^2}
$$
We also require a lower bound on $h\,$; by definition, we have that $h \ge B = e^{-x_B} \ge 1/4\,.$
Finally, we plug this back into the bounds from Eqn.~\ref{eqn:bd-hr-firstexp} and Eqn.~\ref{eqn:bd-hr-secondexp} to determine $\alpha\,:$

\begin{align*}
\alpha &= \min\left(  \frac{\beta^2 (\beta - 1)}{\left(\frac{\beta - 1}{1/4} + 1 \right)^2} , \frac{\beta}{4^3\left(1+ \frac{2}{\beta }  \right)^2} \right) \\
&= \min\left(  \frac{\beta^2 (\beta - 1)}{\left(4 \beta + 3 \right)^2} ,    \frac{\beta^3 }{4^3\left(\beta+ 2  \right)^2}\right) 
\end{align*}

When $\beta = 1.5\,, \alpha \approx 0.0043 \,.$

\paragraph{Area Under Fooling Region} Finally, we must show that the area under the fooling region is large. To do this, we evaluate $F_\text{hard}(x_B) - F_\text{hard}(x_2) + F_\text{hard}(x_1) - F_\text{hard}(x_A)$ and show that it is a large fraction of $C$ for the whole domain.  \par

Evaluating this area, we get:
\begin{align}
F_\text{hard}(x_B) - F_\text{hard}(x_2) + F_\text{hard}(x_1) - F_\text{hard}(x_A) &= \frac{A}{\beta}\left(1 - e^{-\beta (1/\beta \ln(A/h))} \right) + \frac{B}{\beta} \left( e^{-\beta (1/\beta \ln(B/h))} - 1\right) \\
&= \frac{A}{\beta} - \frac{h}{\beta} + \frac{h}{\beta} - \frac{B}{\beta} = \frac{C}{\beta}
\end{align}
Over the whole domain of the distribution, this is a constant fraction of $C$, the total area in a chunk. When $\beta = 1.5\,,$ as before, $1/1.5 = 2/3$ of the area in the chunk comes from the fooling region. \par

Thus, we have shown that in a chunk, a constant fraction of the area lies in a region with hazard rate $\le -\alpha\,,$ for a constant $\alpha\,.$

\end{proof}

\section{Discussion of Experiments} \label{appendix:discussion_of_experiments}
Here, we include a version of the algorithm with weaker theoretical guarantees that we used in our experiments. We also discuss some preliminary experiments run on real-world data. Finally, we present the details of the experiments run to compare our algorithm to a naive one.

 \begin{algorithm}
 \caption{Weak $(\alpha, \rho)$-Heavy-Tailed Test}  \label{alg:weak_tester}
 \begin{algorithmic}[1]
 \State $\mathcal{S}_1, \mathcal{S}_2, \mathcal{S}_3, \mathcal{S}_4 \leftarrow$ each $n$ samples from the distribution  
 \State Sort $\mathcal{S}_1, \mathcal{S}_2, \mathcal{S}_3, \mathcal{S}_4$ 
 \State Split into $k$ equal weight buckets and determine interval endpoints $I$ 
 \State Calculate $L[i] = I[i + 1] - I[i]$ and $dL[i] = L[i+1] - L[i]$ 
 \State Calculate $S[i] = \frac{L[i]}{dL[i]}$ for $i \in \{1, 2, ..., k-2\}$ 
 \If{$S[i] < 1 - \frac{i}{k} - \frac{1}{2}\text{gap}(\alpha)$ for any $i \in \{c_1 \cdot k, ..., c_2 \cdot k\} $ }
 \State PASS
 \Else
 \State FAIL
 \EndIf
 \end{algorithmic}
 \end{algorithm}

%
%
%
%
%
%
%

\subsection{TPC-H Job Distribution}
\paragraph{Data and Methods} We run experiments on a 2.8 GHz Intel i7 core using the algorithm presented in Appendix~\ref{appendix:discussion_of_experiments}. The dataset considered is sampled from TPC-H queries (\cite{noauthor_tpc-h_nodate}). We use the dataset curated by~\cite{mao_learning_2019}, in which they sample jobs from different input sizes from 22 different TPC-H queries. Our experiments verify their claim that the distribution of jobs over work is heavy-tailed, which they justify by noting that about $20\%$ of the jobs contain about $80\%$ of the work, a common heuristic for judging heavy-tailedness. In Figure~\ref{fig:tpc-h_expt_results}, we plot the distribution of job durations from the dataset and overlay an approximate Lomax fit. Notably, the distribution is not monotone, which means it does not entirely obey our assumptions. Further, due to a limited number of jobs, the distribution does not lie on an infinite domain. However, we show that the algorithm still picks up on heavy-tailedness. We sample $n = \Theta(k^4) \approx 56$ million samples for $k = 50$. 
\paragraph{Results and Discussion} We find that the test statistic considers the distribution heavy-tailed, since not all of the indices considered ``look'' light-tailed in our statistic. In particular, the red dashed line represents the test statistic calculated on samples from the distribution of jobs over work in TPC-H. Thus, we are able to verify the claim of the authors of~\cite{mao_learning_2019} that this distribution is heavy-tailed. It is worth noting that we have applied our algorithm to a practical setting where several of the assumptions of our work do not hold, and the algorithm still shows some signal when consider a heavy-tailed distribution. In order for this validation to be complete, however, we would need to show that a light-tailed distribution that doesn't match our assumptions exactly does not behave unexpectedly. 

\begin{figure}[H]
    \centering
    \begin{tabular}{{p{0.5\textwidth}p{0.5\textwidth}}}
     \includegraphics[width=0.5\textwidth]{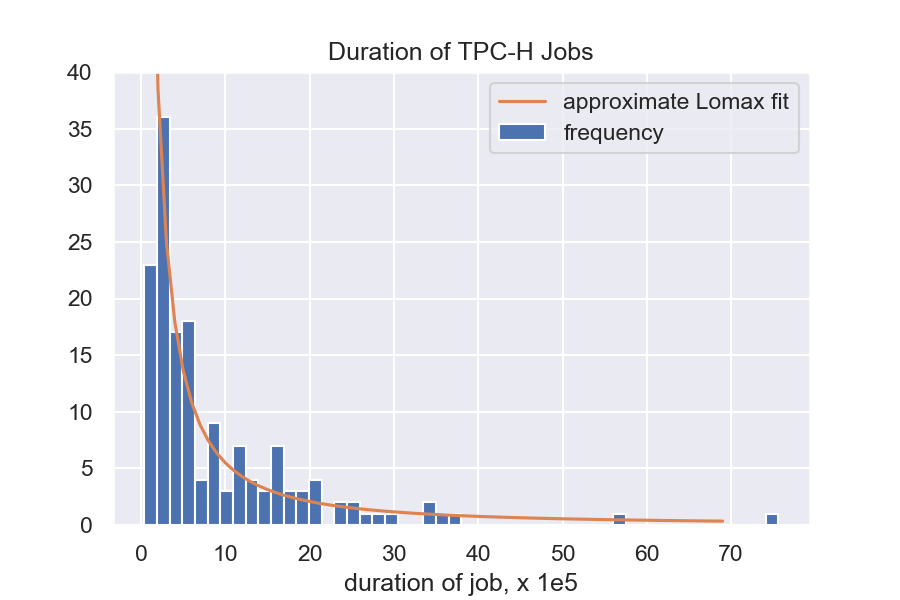} &          \includegraphics[width=0.5\textwidth]{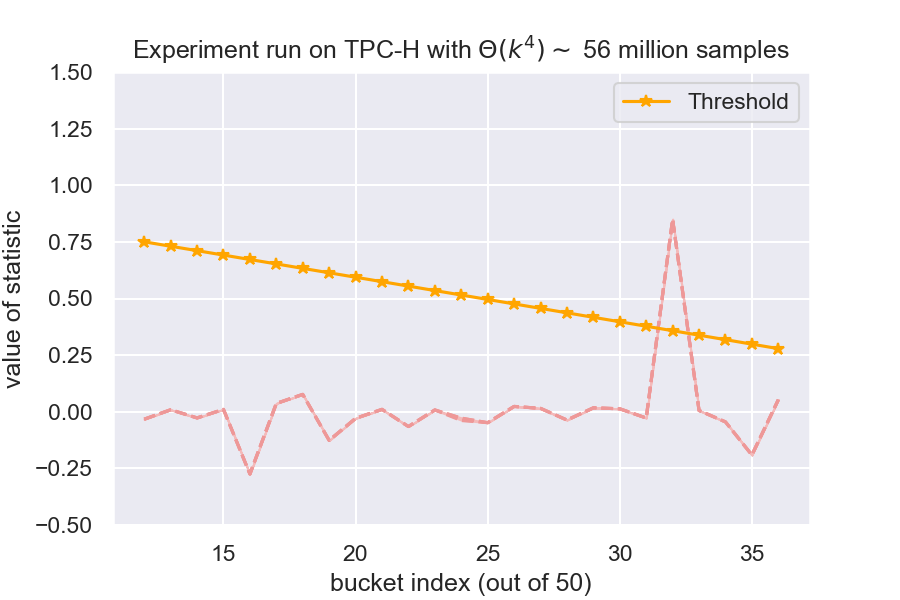} \\
         (a) & (b) \\
    \end{tabular}
    
    \caption{The plot on the left shows the distribution of job duration with an approximate Lomax fit. On the right, in (b), we plot the calculated value of the test statistic in comparison to the threshold for considering a distribution heavy-tailed. Since the dashed red line is below the orange at at least one point, the algorithm would consider this distribution heavy-tailed, in line with the assessment of the authors of~\cite{mao_learning_2019}. We note that we don't see a significant spread over many runs, since the underlying distribution is finite. For the same reason, increasing the number of samples would not add any benefit.}
    \label{fig:tpc-h_expt_results}
\end{figure}

\subsection{Comparison Against a Naive Algorithm}
\label{appendix:subsec:dkw-comparison}
In this section, we describe experiments we ran to compare our algorithm to a naive algorithm, showing that our algorithm has an edge in distinguishing difficult cases. We first explain the experimental setup, including some context for why we chose the examples we did and the naive algorithm. We then describe the results we saw, and finally posit an explanation as to why this is the case.

\paragraph{Motivation and Experimental Setup} Our algorithm, to our knowledge, is the first finite-sample algorithm for this problem over a continuous domain. As simple as our algorithm is, a natural question is that of how a very naive algorithm might perform. In particular, we could attempt to learn the CDF from samples, and then directly calculate the derivative of the hazard rate from that. Based on the Dvoretzky–Kiefer–Wolfowitz (DKW) inequality, we know that with enough samples, we can get a good approximation of the CDF. Thus, a simple idea would be to approximate the PDF and derivative of the PDF from the empirical CDF and from there compute an approximation to the derivative of the hazard rate. Since the DKW inequality provides a guarantee in terms of the supremum of the difference between the true and empirical CDFs, a hard case for an algorithm that learns the CDF would be one where the CDFs of two distributions are very close to one another in CDF but are classified differently, one as heavy-tailed and one as light-tailed. To this end, we choose the likelihood $f(x) = \exp(-x/\ln(x))$\footnote{the normalization constant is hard to calculate analytically so we did so numerically.} and the PDF for an exponential, $f(x) = \exp(-x)\,.$ \par
As discussed in \Cref{appendix:discussion_of_experiments}, we implemented a weaker version of our algorithm that didn't involve the finer-grained buckets. We sought to distinguish the aforementioned distributions based on samples drawn from the likelihood  Fixing the number of samples, over the course of 50 repetitions, we set half of them to be exponential and half to be the hard case, sampled from the chosen one, and ran both our tester and the naive tester. We used appropriate settings for the constants $\beta, B_1, B_2$ for the threshold of our algorithm. We then computed what percentage of the time each algorithm had been correct, and we conducted this process for various sample sizes.

\paragraph{Results and Discussion} From analytically calculating the hazard rate, we see that the distribution has decreasing hazard rate over most of its support. However, from the plot of the hazard rate from samples, the naive algorithm considers the distribution light-tailed early on, and almost exponential later (constant hazard rate) (Figure~\ref{fig:hard_Case_hr}). Our algorithm, too, confuses the distribution for exponential late in the distribution. However, with the same number of samples, crucially, it identifies the heavy-tailed-ness early on (Figure~\ref{fig:our_alg_on_hard}).

\begin{figure}[H]
     \centering
    \begin{tabular}{{p{0.5\textwidth}p{0.5\textwidth}}}
     \includegraphics[width=0.5\textwidth]{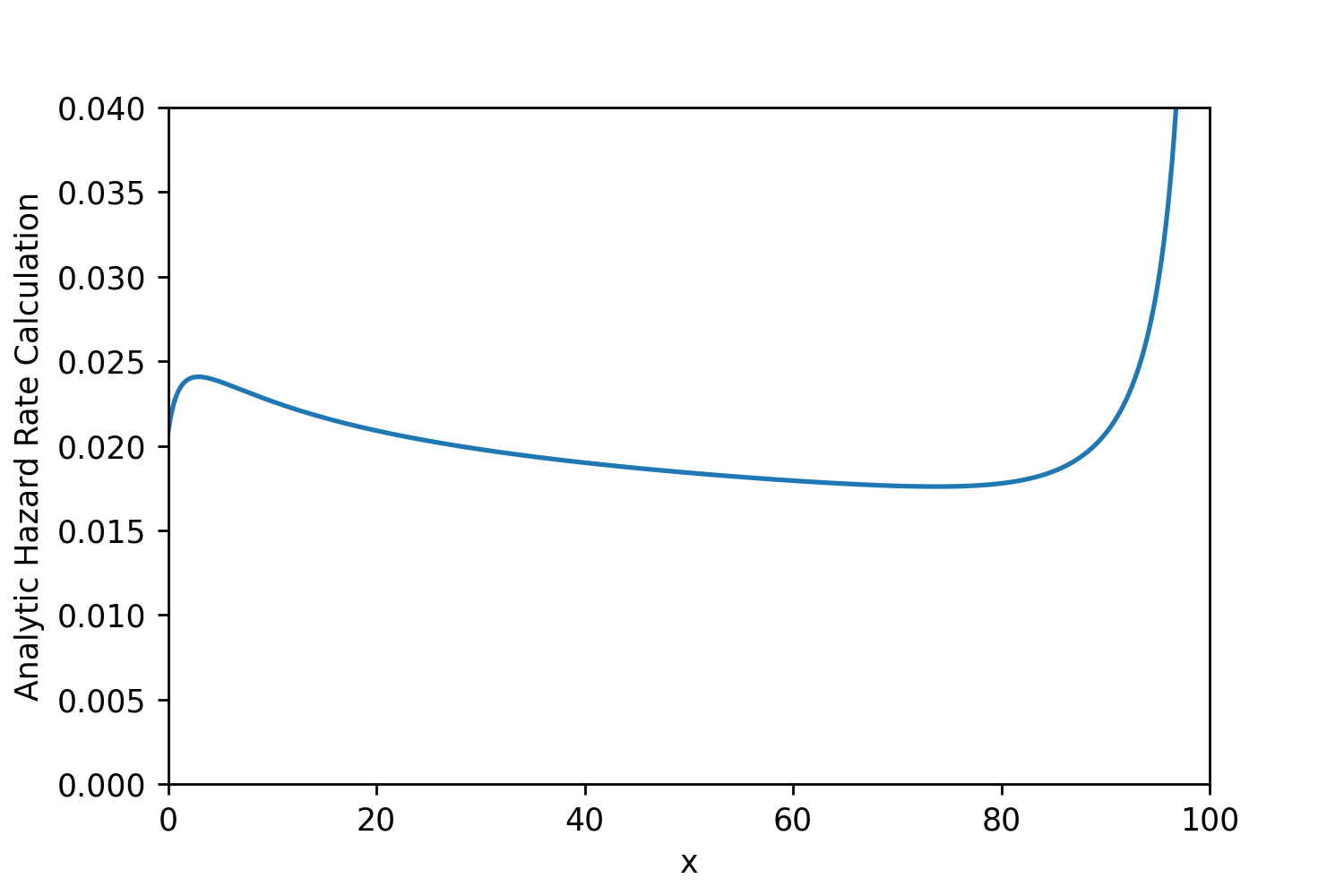} &          \includegraphics[width=0.5\textwidth]{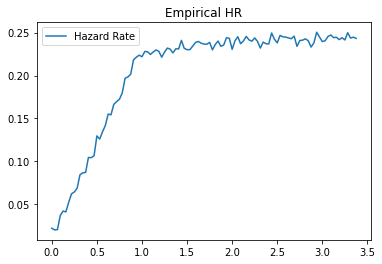} \\
         (a) & (b) \\
    \end{tabular}

    \caption{Analytic Hazard Rate and Hazard Rate calculated by the naive algorithm for a hard distribution, $f(x) = \exp(-x/\log(x))$}
    \label{fig:hard_Case_hr}
\end{figure}

\begin{figure}
    \centering
    \includegraphics[width=0.6\textwidth]{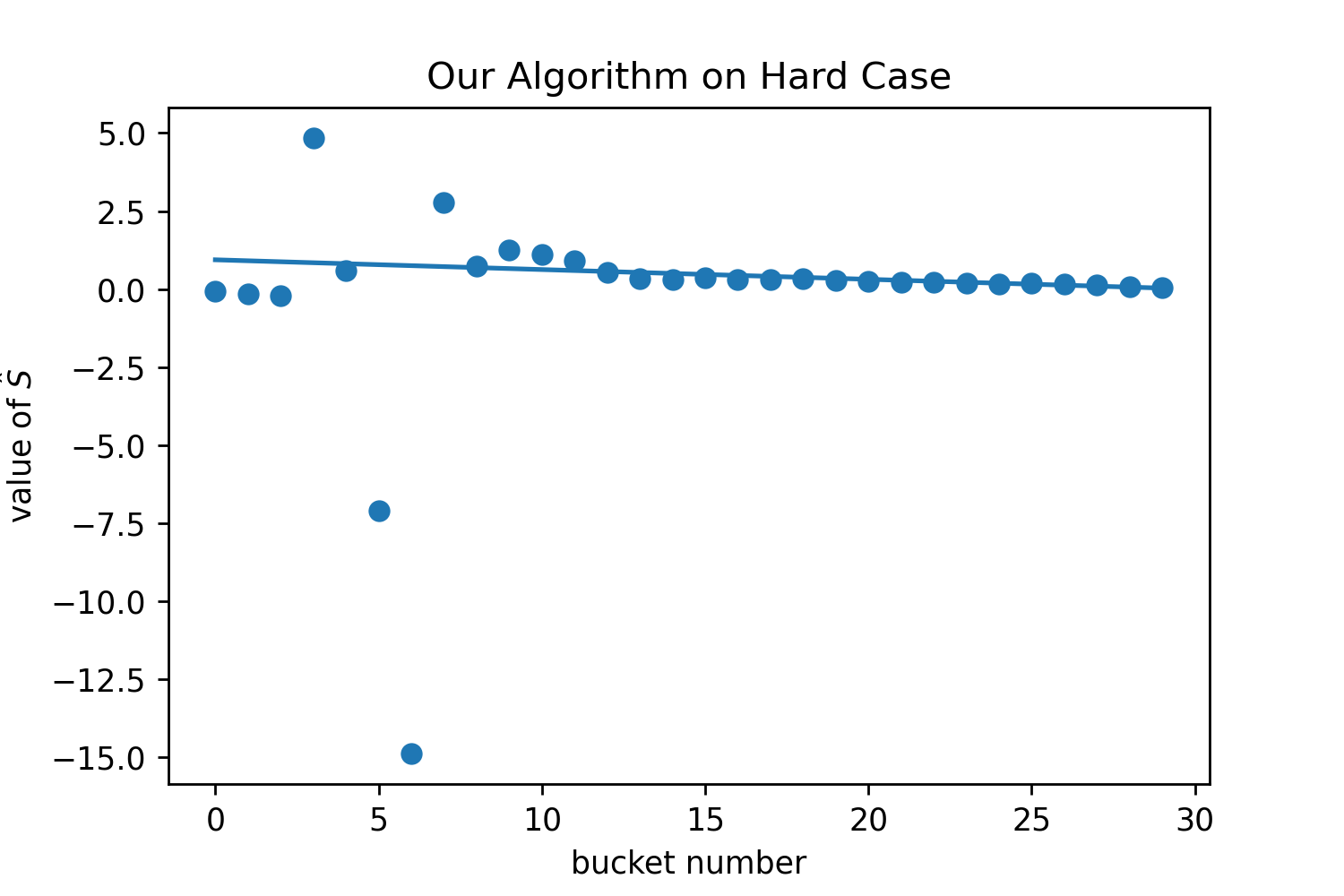}
    \caption{The value of the statistic for our algorithm (averaged over two runs) also confuses the distribution with an exponential toward the end of the domain (as evidenced by the coincidence of the statistic with the threshold) but determines the heavy-tailed-ness from early buckets.}
    \label{fig:our_alg_on_hard}
\end{figure}

Finally, we consider the success rates of our algorithm vs the naive one. In Figure~\ref{fig:acc_resp_hard}, we note that our algorithm starts to do better than random guessing with around $10^7$ samples, where the naive algorithm still does not do better than random guessing.

\begin{figure}
    \centering
    \includegraphics[width=0.6\textwidth]{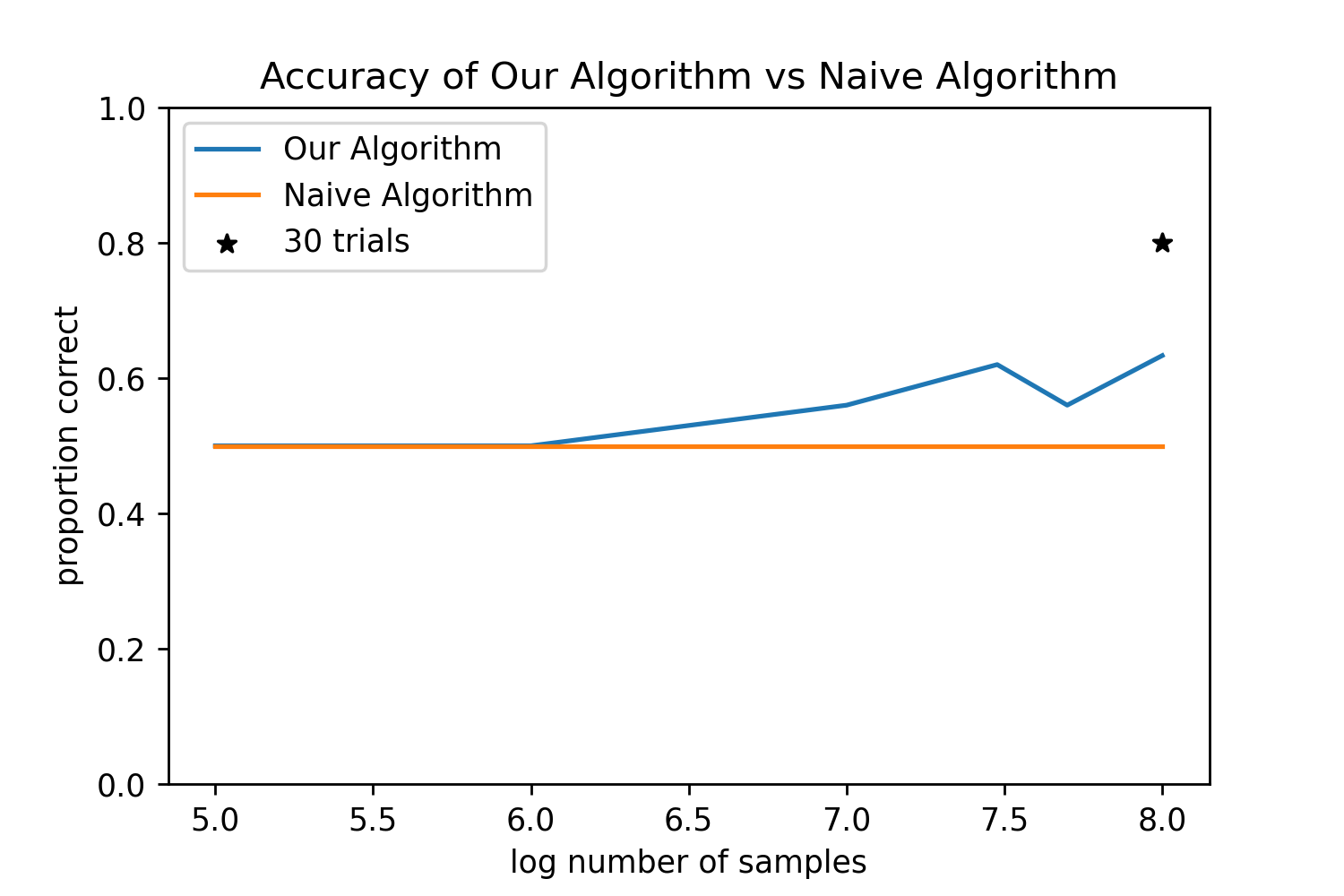}
    \caption{Performance of Each Algorithm. Note that 0.5 is random guessing, and so only our algorithm does better than that at any point.}
    \label{fig:acc_resp_hard}
\end{figure}

\clearpage

\section{Low-sample Analysis}
\label{appendix:sec-low-sample}
We analyze the performance of our algorithm when it is given fewer samples than what our theoretical analysis requires. In specific applications, where the provable guarantees we give may not be required, we may be able to get good enough results without as many buckets and samples.

\begin{table}[H]
    \centering
    \begin{tabular}{c||c|c|c}
    	&  \multicolumn{3}{r}{$\rightarrow$ more samples per bucket}   \\
	\hline & & & \\
	5 buckets & \includegraphics[width=0.2 \textwidth]{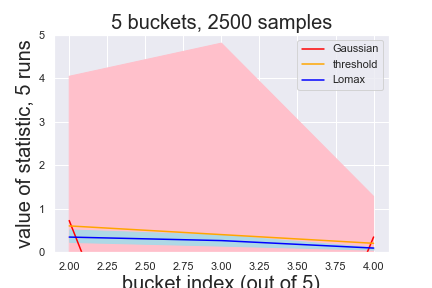}& \includegraphics[width=0.2 \textwidth]{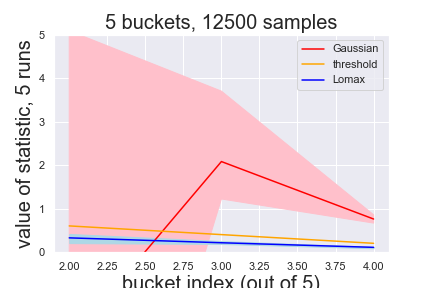}&\includegraphics[width=0.2 \textwidth]{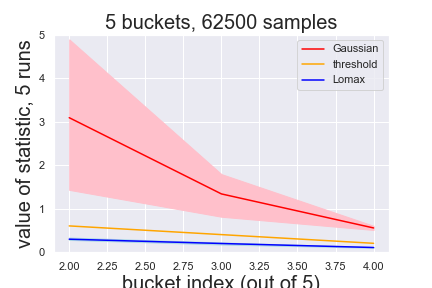} \\

         10 buckets & \includegraphics[width=0.2 \textwidth]{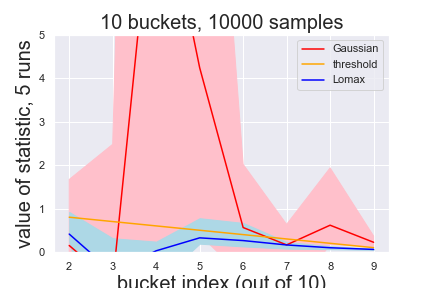}& \includegraphics[width=0.2 \textwidth]{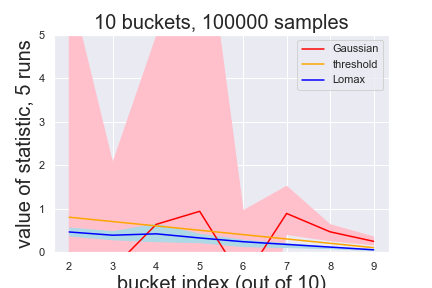}&\includegraphics[width=0.2 \textwidth]{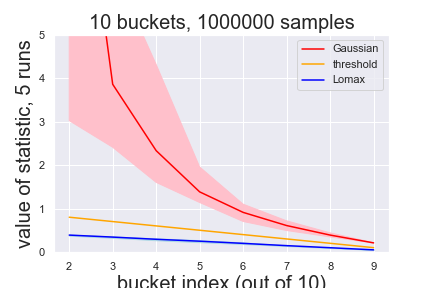} \\
         20 buckets & \includegraphics[width=0.2 \textwidth]{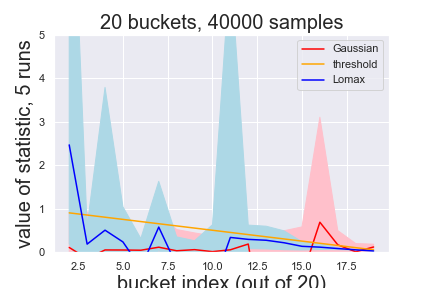} &\includegraphics[width=0.2 \textwidth]{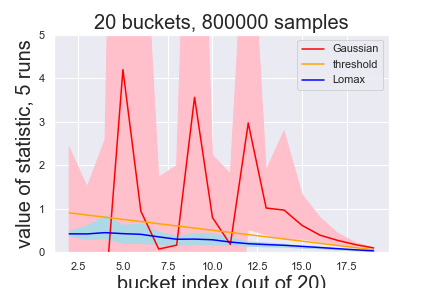} & \includegraphics[width=0.2 \textwidth]{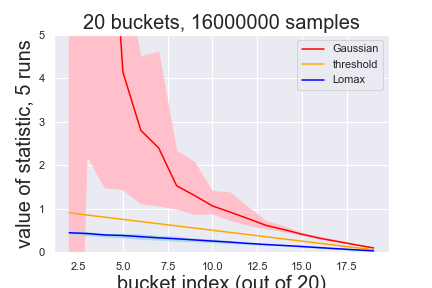} \\
         30 buckets &  \includegraphics[width=0.2 \textwidth]{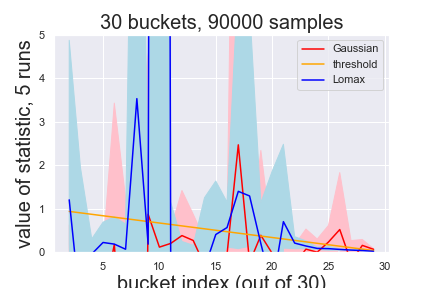} &\includegraphics[width=0.2 \textwidth]{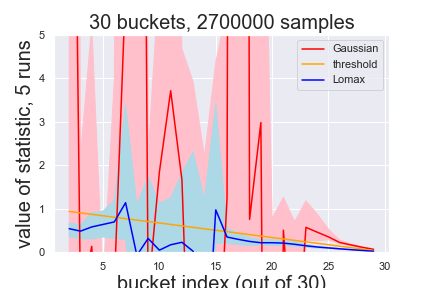} & \includegraphics[width=0.2 \textwidth]{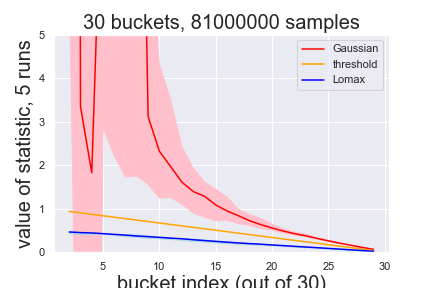}\\
    \end{tabular}
    \caption{ In this table, we consider (half) Gaussian (light-tailed) and Lomax (heavy-tailed) distributions. We use smaller sample sizes (2500 to 20 million) than the experiments in the main paper (Except for the bottom-rightmost plot, which has 81 million samples).
Observe that with fewer samples per bucket (further to left), there is much more variation (pink area and light blue area) in the computed statistics, and they do not behave as we would wish for our statistic to behave. In the rightmost column, however, we start to see the value of our statistic gets very close to what we theoretically expect, even with a small value for $k$. In particular, it concentrates well in the tail. Indeed, in the rightmost column we see that for smaller values for $k\,$ (62,500 samples for $k = 5$) the test clearly distinguishes between the (half) Gaussian and Lomax distributions.
Despite promising behavior of the test even from such few buckets and samples, it is essential to note that large $k$ is theoretically necessary to capture the behavior of an \textit{arbitrary} underlying (well-behaved) distribution. However, in specific applications, these provable guarantees may not be required, and good enough results may be seen without as many buckets and samples in these cases.}   
 \label{tab:my_label}
\end{table}

\end{document}